\documentclass{article}

\usepackage{microtype}
\usepackage{graphicx}

\usepackage{booktabs} % for professional tables
\usepackage{multirow} % for multirow cells in tables
\usepackage[normalem]{ulem} % for \sout

\usepackage{hyperref}
\usepackage{subcaption}
\usepackage[accepted]{icml2025}

\usepackage{amsmath}
\usepackage{amssymb}
\usepackage{mathtools}
\usepackage{amsthm}
\usepackage{arydshln}

\hypersetup{
colorlinks=true,
linkcolor=red,
citecolor=blue,
filecolor=magenta,      
urlcolor=blue,
linktocpage}

\usepackage{tcolorbox}
\tcbuselibrary{
  skins,
  listings,
  breakable
}
\definecolor{promptbg}{RGB}{240, 253, 250}
\definecolor{promptframe}{RGB}{45, 180, 160}
\definecolor{titlebg}{RGB}{17, 148, 130}
\definecolor{keywordcolor}{RGB}{8, 126, 139}
\definecolor{stringcolor}{RGB}{180, 100, 50}
\definecolor{commentcolor}{RGB}{115, 130, 125}

\newtcblisting{promptbox}[2][]{
enhanced,
colback=promptbg,
colframe=promptframe,
colbacktitle=titlebg,
coltitle=white,
fonttitle=\bfseries\small,
title={#2},
boxrule=0.6pt,
arc=3pt,
left=8pt,
right=8pt,
top=4pt,
bottom=4pt,
toptitle=3pt,
bottomtitle=3pt,
listing only,
listing options={
basicstyle=\ttfamily\scriptsize\linespread{1.1}\selectfont,
breaklines=true,
breakatwhitespace=false,
columns=fullflexible,
keepspaces=true,
showstringspaces=false,
escapeinside={(*@}{@*)},
commentstyle=\color{commentcolor}\itshape,
keywordstyle=\color{keywordcolor},
stringstyle=\color{stringcolor},
xleftmargin=0pt,
xrightmargin=0pt,
},
#1
}

\usepackage[capitalize,noabbrev]{cleveref}

\theoremstyle{plain}
\newtheorem{theorem}{Theorem}[section]

\newtheorem{lemma}[theorem]{Lemma}

\theoremstyle{definition}
\newtheorem{definition}[theorem]{Definition}
\newtheorem{assumption}[theorem]{Assumption}
\theoremstyle{remark}
\newtheorem{remark}[theorem]{Remark}

\newcommand{\figab}[1]{\textcolor{red}{#1}}

\newcommand{\ours}[1]{\textbf{DC-W2S} }

\usepackage[textsize=tiny]{todonotes}

\icmltitlerunning{DC-W2S for Reliable Reward Modeling in Biological Reasoning}

\begin{document}

\twocolumn[
\icmltitle{DC-W2S: Dual-Consensus Weak-to-Strong Training for Reliable \\ Process Reward Modeling in Biological Reasoning}

% It is OKAY to include author information, even for blind
% submissions: the style file will automatically remove it for you
% unless you've provided the [accepted] option to the icml2025
% package.

% List of affiliations: The first argument should be a (short)
% identifier you will use later to specify author affiliations
% Academic affiliations should list Department, University, City, Region, Country
% Industry affiliations should list Company, City, Region, Country

% You can specify symbols, otherwise they are numbered in order.
% Ideally, you should not use this facility. Affiliations will be numbered
% in order of appearance and this is the preferred way.
\icmlsetsymbol{equal}{*}
\icmlsetsymbol{intern}{$\dagger$}

\begin{icmlauthorlist}
\icmlauthor{Chi-Min Chan}{hkust,genentech,equal,intern}
\icmlauthor{Ehsan Hajiramezanali}{genentech,equal}
\icmlauthor{Xiner Li}{genentech}
\icmlauthor{Edward De Brouwer}{genentech}
\icmlauthor{Carl Edwards}{genentech}
\icmlauthor{Wei Xue}{hkust}
\icmlauthor{Sirui Han}{hkust}
\icmlauthor{Yike Guo}{hkust}
\icmlauthor{Gabriele Scalia}{genentech}

\end{icmlauthorlist}

\icmlaffiliation{hkust}{The Hong Kong University of Science and Technology, Hong Kong}
\icmlaffiliation{genentech}{Genentech, Inc., South San Francisco, CA, USA}

\icmlcorrespondingauthor{Ehsan Hajiramezanali}{hajiramm@gene.com}

% You may provide any keywords that you
% find helpful for describing your paper; these are used to populate
% the "keywords" metadata in the PDF but will not be shown in the document
\icmlkeywords{Machine Learning, ICML}

\vskip 0.3in
]

% this must go after the closing bracket ] following \twocolumn[ ...

% This command actually creates the footnote in the first column
% listing the affiliations and the copyright notice.
% The command takes one argument, which is text to display at the start of the footnote.
% The \icmlEqualContribution command is standard text for equal contribution.
% Remove it (just {}) if you do not need this facility.

% \printAffiliationsAndNotice{}  % leave blank if no need to mention equal contribution
% \printAffiliationsAndNotice{\icmlEqualContribution} % otherwise use the standard text.

\printAffiliationsAndNotice{
\icmlEqualContribution\quad
$\dagger$ Work done during an internship at Genentech.
}

\begin{abstract} In scientific reasoning tasks, the veracity of the reasoning process is as critical as the final outcome. While Process Reward Models (PRMs) offer a solution to the coarse-grained supervision problems inherent in Outcome Reward Models (ORMs), their deployment is hindered by the prohibitive cost of obtaining expert-verified step-wise labels. This paper addresses the challenge of training reliable PRMs using abundant but noisy ``weak'' supervision. We argue that existing Weak-to-Strong Generalization (W2SG) theories lack prescriptive guidelines for selecting high-quality training signals from noisy data. To bridge this gap, we introduce the Dual-Consensus Weak-to-Strong (DC-W2S) framework. By intersecting Self-Consensus (SC) metrics among weak supervisors with Neighborhood-Consensus (NC) metrics in the embedding space, we stratify supervision signals into distinct reliability regimes. We then employ a curriculum of instance-level balanced sampling and label-level reliability-aware masking to guide the training process. We demonstrate that DC-W2S enables the training of robust PRMs for complex reasoning without exhaustive expert annotation, proving that strategic data curation is more effective than indiscriminate training on large-scale noisy datasets. \end{abstract}

\section{Introduction}

Within recent years, scientific discovery has emerged as a critical application of AI advances \cite{zhang2023artificial, jumper2021highly}. Biological applications are particularly challenging, requiring holistic integration of heterogeneous knowledge across scales~\cite{xu2023multilingual, moor2023foundation}. Large Language Models (LLMs) with ``reasoning'' capabilities have shown transformative potential for this integration, using natural language as a unifying medium for biological evidence~\cite{istrate2025rbio1, wang2025txgemma}.

A dominant paradigm for aligning LLMs with domain-specific tasks is Reinforcement Learning using Verifiable Reward (RLVR), which often relies on Outcome Reward Models (ORMs) to optimize for correct final answers \citep{ziegler2019fine, cobbe2021training, ouyang2022training, bai2022constitutional}. However, this %approach, which provides a 
sparse reward risks inadvertently validating reasoning trajectories that are flawed, illogical, or factually incorrect, so long as they coincidentally arrive at the correct final output~\citep{zelikman2022star, creswell2022selection, lyu2023faithful, turpin2023language}. 
This failure mode is particularly pernicious in %high-stakes
scientific domains, including biology and healthcare. For example, %the task of predicting cellular responses to genetic perturbations stands as a critical frontier for identifying therapeutic targets and accelerating drug discovery.
predicting downstream effects of perturbations requires multi-step reasoning through target engagement and pathway interactions across heterogeneous biological contexts (e.g., cell types, tissues, disease states). A model that ``hallucinates'' a plausible pathway, yet guesses the right answer, is arguably more dangerous than one that is transparently wrong, as it can mislead researchers and result in the catastrophic waste of experimental time and resources. 
Thus, ensuring the veracity of the reasoning process is vital, yet challenging.

A direct solution to this limitation is the shift from outcome-based to process-based supervision. Unlike ORM, a Process Reward Model (PRM; \citealp{lightman2023let, wang2024math}) evaluates each intermediate step of a Chain-of-Thought (CoT; \citealp{wei2022chain}), providing a dense, fine-grained reward signal. This granular feedback enables precise localization of errors, teaching the model how to reason correctly rather than merely the correct answer. 
This capability is critical for moving beyond simple answer-retrieval and toward genuine, verifiable mechanistic insights in biology.

Training PRMs at scale, however, requires step-level supervision that is often impractical and prohibitively expensive to obtain from biological domain experts. Scalable automated alternatives, such as Monte Carlo (MC) estimations~\citep{wang2024math, luo2024improve} and LLM-as-a-judge~\citep{zheng2023judging}, alleviate the manual-labeling burden, but generate inherently noisy weak labels that lack expert-verified ground truth~\citep{zhang2025lessons}.

Naively training on these weak annotations risks the ``garbage in, garbage out'' problem, where the resulting PRM merely learns to imitate the systemic errors and biases of its automated teachers.

Under this context, we ask: \textbf{``How can we train a strong, reliable PRM by leveraging only these abundant, but imperfect, weak label sources?''} 

Existing theoretical frameworks on Weak-to-Strong (W2S) generalization  offer a partial explanation for why sufficiently robust students can learn from weak supervision under favorable data structure, e.g., when the weak supervisor's error sets are sparsely distributed and surrounded by correctly labeled neighbors ~\citep{burns2023weak, zhou2025weak, lang2024theoretical}. Yet existing theory is primarily post hoc and descriptive rather than prescriptive. As a result, it provides limited actionable mechanisms to actively curate training data in the absence of ground truth, and its applicability to complex biological reasoning remains unestablished.

To address these challenges, we propose the Dual-Consensus Weak-to-Strong (DC-W2S) training framework. Our core insight is that not all weak labels contribute equally to the generalization of a strong student; while some provide robust learning signals, others introduce detrimental noise. We facilitate a ``teacher-centric" curation by evaluating step-level annotations via two orthogonal metrics: (1) \emph{Self-Consensus} (SC), which measures the agreement across heterogeneous weak supervisors (e.g., Monte Carlo estimation and LLM-as-a-judge), and (2) \emph{Neighborhood-Consensus} (NC), which quantifies label consistency within the step's neighborhood (defined semantically or biologically). By intersecting these metrics, we stratify the supervision space into four distinct reliability regimes as P1 (SC \& NC), P2 (SC \& $\neg$NC), P3 ($\neg$ SC \& NC) and P4 ($\neg$ SC \& $\neg$ NC). Building on this stratification, we propose a two-level anchored training strategy that improves efficiency through (i) a distribution-aware sampling curriculum over reliability regimes and (ii) reliability-aware loss masking that suppresses gradients from redundant or ambiguous supervision. Together, these components provide a practical foundation for reducing annotation burden in future biological reasoning tasks.

To the best of our knowledge, this is the first systematic study of W2S generalization for PRMs in biological reasoning under step-wise weak supervision.
In particular, we focus on single-cell perturbation prediction, a key task for understanding biological systems due to the quantity of available data \cite{zhang2025tahoe} and the transferability of insights to other biological scales \cite{ma2021few}. 
Our main contributions are: 
(1) We construct a large-scale perturbation reasoning trajectory dataset with multi-source weak step annotations, which we will release to support future research. 
(2) We introduce DC-W2S, a dual-consensus framework that stratifies weak step labels by intersecting Self-Consensus with Neighborhood-Consensus, enabling an anchored training strategy that selectively exploits reliable supervision rather than training indiscriminately on noisy labels. 
(3) We provide theoretical analyses of PRM learning under aggregated weak step supervision, deriving error bounds under soft robust expansion assumptions. 
(4) On biological perturbation reasoning, our experiments show that DC-W2S improves PRM robustness and label efficiency, achieving competitive performance with fewer weakly labeled steps and demonstrating positive transfer across tasks/settings.

\begin{figure*}[h]
    \centering
    \includegraphics[width=.85\textwidth]{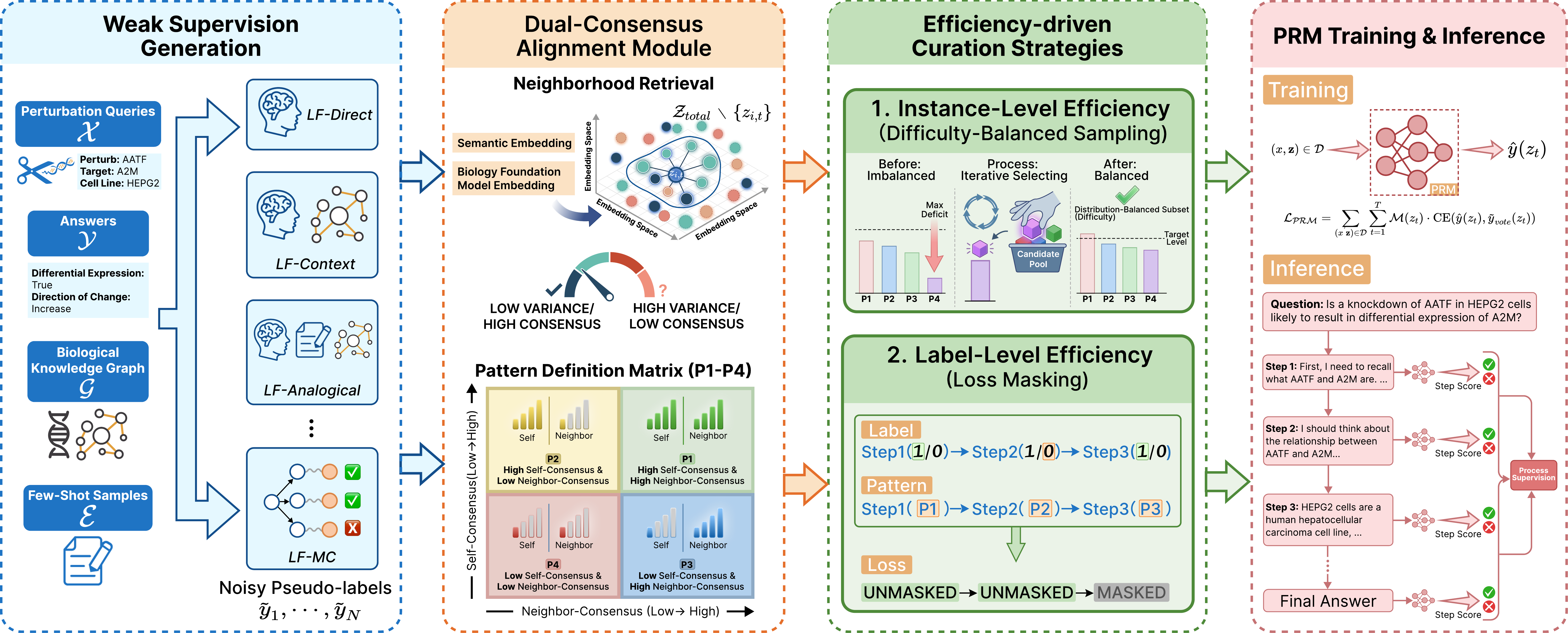}
    \vspace{-.1cm}
    \caption{Dual-Consensus weak-to-strong supervision framework for efficient PRM training in biological reasoning, where expert step-level verification is costly or unavailable.}
    \label{fig:framework}
    \vspace{-.3cm}
\end{figure*}

\section{Related Works}

\paragraph{LLM for Biological Reasoning.} LLMs have evolved from static knowledge bases into reasoning engines for computational biology. Recent frameworks such as BioReason~\citep{fallahpour2025bioreason} and ChatNT~\citep{de2025multimodal} integrate genomic encoders with LLM backbones; rbio1~\citep{istrate2025rbio1} introduces a reasoning model trained with biological world models; and TxGemma~\citep{wang2025txgemma} adopts agentic workflows to verify biological hypotheses. 
% Within this expanding landscape, the task of predicting cellular responses to genetic perturbations stands as a critical frontier for identifying therapeutic targets and accelerating drug discovery.
For our critical task of predicting cellular responses to genetic perturbations, while embedding-centric models such as scGPT \citep{cui2024scgpt}, GenePT \citep{chen2024genept}, and GEARS \citep{roohani2024predicting} excel in endpoint accuracy, they often function as ``black boxes". In contrast, scientific inquiries require a mechanistic understanding of causal gene influences, which highlights the distinct advantage of LLMs~\citep{wu2025contextualizing}. Our work shifts focus from outcome-based regression to process-oriented reasoning, leveraging LLMs to generate interpretable, multi-step traces grounded in verifiable biological mechanisms.

\textbf{Test-Time Scaling and Process Supervision.} Performance in complex reasoning can be significantly enhanced by increasing inference-time compute through parallel decoding or sequential refinement \citep{snell2024scaling, muennighoff2025s1}. However, applying these scaling laws to biology remains challenging. Current RL paradigms rely on outcome-based feedback \citep{guo2025deepseek}, susceptible to ``reasoning hallucinations" in which models reach correct answers through flawed intermediate logic. In high-stakes scientific discovery, such hallucinations can be prohibitively costly, misdirecting hypotheses and wasting substantial wet-lab efforts. While PRMs \citep{lightman2023let} provide step-wise feedback to mitigate this, the scarcity of human expert annotations has led to a reliance on automated yet noisy labeling techniques like Monte Carlo estimation or LLM-as-a-judge~\citep{zhang2025lessons, wang2024math}. Thus, we introduce an anchored training algorithm designed to distill robust supervisory signals from these cost-effective but weak labels.

\textbf{Weak-to-Strong Generalization.}

The study of learning from imperfect supervision generally uses label aggregation or supervision transfer. 
Traditional probabilistic approaches have focused on label aggregation through de-noising conflictive signals~\citep{ratner2017snorkel,dawid1979maximum}. 
The emerging paradigm of Weak-to-Strong Generalization \citep{burns2023weak,xue2025representations} adopts naive majority voting to isolate the effects of training dynamics, demonstrating that strong student models can outperform their weak supervisors by leveraging superior internal representations. 
Drawing on insights from data pruning and coreset selection \citep{paul2021deep, lang2022training, hu2024most}, our work investigates a novel aspect of supervision transfer to maximally elicit generalization, i.e., the geometry between training examples and decision boundary.
We propose an anchored selection mechanism to identify high-value subsets within noisy datasets, optimizing the elicitation of strong performance from imperfect biological supervision.

\section{Methodology}
\label{sec:methodology}

In this section, we propose Dual-Consensus Weak-to-Strong (\textbf{DC-W2S}), a framework for training a student PRM from noisy weak supervision. Figure~\ref{fig:framework} summarizes the pipeline: we (1) synthesize reasoning trajectories with context-aware prompting and obtain step-wise weak labels from heterogeneous supervisors (e.g., LLM judges and MC rollouts) and aggregate them; (2) stratify these labels via the Dual-Consensus mechanism into four label patterns, and (3) propose an anchored training strategy to enable effective W2S generalization at both instance-level and label-level.

\subsection{Preliminaries: Process Reward Modeling}

We use biological reasoning as a concrete instantiation of our framework, since supervision is often limited to question--answer pairs $(x,y_{\mathrm{final}})$, and obtaining reliable labels for intermediate reasoning steps is particularly challenging in this domain.
Given a question $x$ (e.g., \emph{``Does ABCF1 knockdown in HepG2 affect ARCN1 expression?''}), a policy model generates $\mathbf{z}=(z_1,\ldots,z_T)$ and a final predicted answer $\hat{y}_{\mathrm{final}}$ (represented as the last step). 
We refer to each trajectory-level training example as an \emph{instance} \((x,\mathbf{z},y_{\mathrm{final}})\), and to \emph{labels} as step-wise supervision signals for individual steps \(z_t\). A PRM $r_\theta$ assigns a score to each partial trajectory, producing step-wise rewards
$s_t = r_\theta(x, z_{1:t}) \in [0,1]$, and we use the shorthand $r_\theta(z_t) := r_\theta(x, z_{1:t})$.
Our goal is to train $r_\theta$ using only noisy weak step supervision $\tilde{y}(z_t)$.

\subsection{Weak Supervision Generation} \label{sec:weak_super_gen}
 Given the impracticality of collecting expert-curated labels for millions of reasoning steps, we rely on a scalable automated annotation strategy that can efficiently produce large volumes of supervision at minimal cost.

\subsubsection{Synthesis of Reasoning Trajectories}\label{sec:synthesis}

In order to train a PRM, we first synthesize Chain-of-Thought (CoT) trajectories for the PerturbQA training set \citep{wu2025contextualizing} using a Context-Augmented Generation strategy to ensure trajectory diversity and quality. For each query $x$, we retrieve relevant knowledge graph context (e.g., GO terms and pathway interactions) and a small set of similar training examples. We then sample a reasoning trajectory $\mathbf{z}$ (with final prediction $\hat{y}_{\mathrm{final}}$) from a weak generator $\pi_{\mathrm{gen}}$ (\texttt{Qwen3-4B}). 
This produces $\mathcal{D}_{\mathrm{traj}}=\{(x^{(i)},\mathbf{z}^{(i)},y^{(i)})\}_{i=1}^N$ with 351k trajectories in total.

\subsubsection{Multi-Source Weak Annotation}

To assign a pseudo-label $\tilde{y}_t\in\{0,1\}$ to each step $z_t$ without expert annotation, we collect weak supervision from two distinct classes of labeling functions (LFs) and aggregate their outputs into a single binary label per step.

\textbf{LLM-as-a-judge labeling.}
% \paragraph{Steps Labeled by LLM-as-a-judge.} 
We deploy an LLM-as-a-judge to evaluate each step's correctness. To mitigate prompt- and context-specific bias, we obtain labels from three complementary perspectives:
\begin{align}
\mathrm{LF\mbox{-}Context}(z_t) \;&=\; \mathrm{LLM}\!\left(z_{1:t},\, \mathcal{G},\, y_\mathrm{final}\right), \nonumber\\
\mathrm{LF\mbox{-}Analogical}(z_t) \;&=\; \mathrm{LLM}\!\left(z_{1:t},\, (\mathcal{G},\mathcal{E}),\, y_\mathrm{final}\right), \nonumber\\
\mathrm{LF\mbox{-}Direct}(z_t) \;&=\; \mathrm{LLM}\!\left(z_{1:t},\, \varnothing,\,  y_\mathrm{final}\right), \nonumber
\end{align}
where each function returns a binary judgment in $\{\text{Correct},\text{Incorrect}\}$ (mapped to $\{1,0\}$). Here, $\mathcal{G}$ denotes the full KG context, $\mathcal{E}$ is a set of few-shot examples, and $y_\mathrm{final}$ is the ground truth final answer.

% \paragraph{Steps Labeled by Monte Carlo Estimation.}
\textbf{Monte Carlo rollout labeling.}
Complementing LLM-as-a-judge supervision, we use Monte Carlo (MC) rollouts to estimate whether a partial trajectory can be completed to the ground truth answer.

We instantiate a set of MC labeling functions, one per rollout model $j\in\{1,\dots,J\}$. Following Math-Shepherd~\citep{wang2024math}, for each step $z_t$ we sample $K$ continuations from a completion policy and compute the success rate:

\begin{equation} 
\mathrm{LF\mbox{-}MC}(z_t) = \frac{1}{K} \sum_{k=1}^{K} \mathbf{1}\{\mathrm{rollout}_k(z_t) \models y_{\mathrm{final}}\}.
\end{equation}
We binarize this score with threshold $\tau=0.5$ to obtain a step label in $\{0,1\}$.
This label reflects the likelihood that continuing the reasoning from the current step will eventually reach the correct answer. 

\textbf{Aggregation.}
After collecting weak labels from multiple sources (LLM-based judge labeling functions and MC-based labeling functions), we derive the final step-level label via majority voting, i.e. $\tilde{y}_\mathrm{agg}(z_t)$.

\subsection{Dual-Consensus Weak-to-Strong Training Framework}\label{sec:dual_consensus}

% \textcolor{red}{We need to add transition here.}
Notably, the weak step labels collected in Section~\ref{sec:weak_super_gen} are noisy and lack expert verification.
%Since weak step labels are noisy and expert step-level ground truth is unavailable, we quantify label reliability using two complementary signals: 
We therefore estimate the \emph{reliability} of each step label using two complementary signals: \textit{Self-Consensus} (SC), measuring agreement across weak supervisors, and \textit{Neighborhood-Consensus} (NC), measuring whether the step lies in a neighborhood that is consistently judged by the supervisors.

We use these scores to stratify supervision for anchored training.

\subsubsection{Teacher-Self-Consensus (TSC)}

TSC measures agreement among the $M$ heterogeneous LFs for a step $z_t$. Let $\ell_m(z_t)\in\{0,1\}$ denote the binary label produced by LF $m$, and let
$\mathrm{Var}(\cdot)$ denote the empirical variance across labels.
We define TSC as the normalized label concentration:

\begin{equation}
% \mathrm{TSC}(z_t)
% = 1 - \frac{\mathrm{Var}(\{\ell_m(z_t)\}_{m=1}^M)}{0.25}.
\mathrm{TSC}(z_t)
= 1 - \mathrm{Var}(\{\ell_m(z_t)\}_{m=1}^M).
\end{equation}
A high $\mathrm{TSC}(z_t)$ implies that diverse LFs agree on the step's quality, indicating a signal that is robust to individual annotator biases. % biases of individual annotators. 
We call steps with $\mathrm{TSC}(z_t)>\tau_{\mathrm{sc}}$ \textit{self-consensus} (self-reliable).

\subsubsection{Teacher-Neighborhood-Consensus (TNC)}

TSC is a pointwise reliability estimate. We therefore define TNC to capture whether a step lies in a locally \emph{unambiguous} region of the reasoning (embedding) space. Under our \textit{reliability smoothness} assumption, reliable steps lie on a high-confidence manifold where heterogeneous weak supervisors exhibit low disagreement. %This notion differs from label smoothing: neighboring steps need not share the same label.

We first establish the local geometric structure in the semantic space. Using a pre-trained encoder $E_{sem}$ (e.g., SentenceTransformers), we map each reasoning step $z_t$ to a dense vector. For a target step $z_t$, we retrieve its top-$K$ ($K=20$) nearest neighbors $\mathcal{N}(z_t)$ from the trajectory bank based on cosine similarity.
The TNC score is then defined as the expected reliability of the local neighborhood:
\begin{equation}
\mathrm{TNC}(z_t) = \frac{1}{|\mathcal{N}(z_t)|} \sum_{z' \in \mathcal{N}(z_t)} \mathrm{TSC}(z').
\end{equation}
A high $\mathrm{TNC}(z_t)$ indicates that $z_t$ lies in a region where weak supervisors are consistently confident, and we mark steps with $\mathrm{TNC}(z_t)>\tau_{\mathrm{nc}}$ as \textit{neighbor-consensus} (neighbor-reliable).

\subsection{Biological Manifold Refinement for Neighborhood Construction}

Semantic similarity alone can retrieve steps that are linguistically similar but biologically unrelated. To enforce biological coherence, we associate each step $z_t$ (via its originating query) with a perturbation gene $g_p$ and a target gene $g_t$, and define a biological context embedding
$b(z_t) \;=\; [\phi(g_p);\phi(g_t)]$, where $\phi$ is a pretrained biological foundation model encoder over genes (e.g., ESM~\citep{lin2023evolutionary} or CellProfiler~\citep{funk2022phenotypic}). 

We then restrict candidate neighbors to steps whose biological contexts are similar, $\mathcal{C}(z_t)=\{z' : \mathrm{sim}(b(z_t), b(z')) \ge \delta\},$
and finally compute $\mathcal{N}(z_t)$ as the semantic $k$-NN of $z_t$ within $\mathcal{C}(z_t)$.

This refinement ensures that reliability smoothness is computed over a manifold that is both semantically aligned and biologically coherent.

\subsection{Anchored Training Strategy}

Based on the intersection of TSC and TNC, we stratify all training steps into four distinct reliability regimes (Figure~\ref{fig:framework}):
P1 (high SC \& high NC), P2 (high SC \& low NC), P3 (low SC \& high NC) and P4 (low SC \& low NC). We utilize this stratification through a two-level anchored training strategy.

\subsubsection{Instance-Level Strategy: Distribution-Balanced Sampling}
\label{subsubsec:instance_level_strategy}

We observe that raw data distributions are often skewed: easy samples are dominated by P1 steps with trivial agreement, while others are overwhelmed by P4 noise. Training primarily on either extreme hinder W2S generalization. 
As a solution, we propose a distribution-balanced sampling curriculum. %As outlined in Algorithm~\ref{alg:greedy_balanced_selection_simplified}, 
It iteratively selects an instance to add to the final subset. In each iteration, it calculates the current reliability pattern distribution across all previously selected instances. Afterwards, the new instance chosen is the one that has highest-density of the pattern that has largest deficit (the pattern farthest below the target 25\% uniform distribution). This process continues until the target subset size is reached. The process is shown in Algorithm~\ref{alg:greedy_balanced_selection_simplified}.

\subsubsection{Label-Level Strategy: Reliability-Aware Loss Masking}

In addition to instance-level strategy, we also apply a masking mechanism to the training objective. We define the loss function as:
\begin{equation}
\mathcal{L_{PRM}} = \sum_{(x,\mathbf{z}) \in \mathcal{D}} \sum_{t=1}^T \mathcal{M}(z_t) \cdot \text{CE}(r_\theta(z_t), \tilde{y}_{agg}(z_t)),
\end{equation}
where $\mathcal{M}(z_t) \in \{0, 1\}$ is the masking indicator. Our core hypothesis is that not all step-wise supervision is equally valuable; thus, the reliability patterns (P1--P4)
%, and the defined pattern categories (P1-P4)
potentially offer an interpretable basis for targeted supervision. 
Accordingly, we use pattern-specific masking to selectively include or suppress labels from particular regimes, enabling controlled ablations that quantify each pattern's contribution to performance and generalization (Section~\ref{sec:masking_experiments}).

\subsection{Theoretical Analysis}
\label{sec:theory}

Here, we bound the ground truth step-label error of $r_\theta$ over the distribution of reasoning steps using terms involving the weak step-label error of $r_\theta$, i.e., $\mathrm{err}_\tau(r_\theta,\tilde y_{\mathrm{agg}})$, together with neighborhood expansion and robustness parameters. This bound provides a theoretical justification for why a reliability-aware loss with aggregated weak annotations can yield W2S generalization at the step level. Building on \citet{lang2024theoretical}, we introduce calibrated \emph{soft} reliability weights $p(z_t)\in[0,1]$ derived from our dual-consensus signals (SC/NC), and quantify ``good'' versus ``bad'' supervision via reliability-weighted masses. This soft formulation allows different regions of the reasoning-step distribution (e.g., with distinct reliability patterns) to contribute differently to the guarantee, providing a more faithful characterization of our training curriculum. Full proofs and extensions are deferred to the Appendix~\ref{app:theory}.

\textbf{Setup.}
For each step, the PRM outputs $r_\theta(z_t)\in[0,1]$ and we consider $f_\tau(z_t)=\mathbf{1}\{r_\theta(z_t)\ge\tau\}$. Let $y(z_t)$ and $\tilde y_{\mathrm{agg}}(z_t)$ be the latent and aggregated weak step labels, respectively. Define $\mathrm{err}_\tau(r_\theta,y)=\Pr(f_\tau(z_t)\neq y(z_t))$ and $\mathrm{err}_\tau(r_\theta,\tilde y_{\mathrm{agg}})=\Pr(f_\tau(z_t)\neq \tilde y_{\mathrm{agg}}(z_t))$.

\textbf{Soft reliability masses.}
Based on teacher consensus, let $p(z_t)\in[0,1]$ denote the reliability of step $z_t$, interpreted as the probability that the aggregated weak-teacher label is correct.

\begin{assumption}[Calibration]\label{ass:calibration}
For all steps $z_t$, $\Pr\!\big(\tilde y_{\mathrm{agg}}(z_t)=y(z_t)\mid z_t\big)=p(z_t)$.
\end{assumption}

For any measurable set of steps $U$, define the reliability-weighted masses

\begin{align}
    \mu_{\mathrm{good}}(U)
&=& \mathbb{E}\!\left[p(z_t)\mid z_t\in U\right]\Pr(z_t\in U) ,\nonumber \\
\mu_{\mathrm{bad}}(U)
&=& \mathbb{E}\!\left[1-p(z_t)\mid z_t\in U\right]\Pr(z_t\in U).
\end{align}
We also define the effective weak-label noise level $\alpha = \mathbb{E}[1-p(z_t)]$.

\begin{definition}[$\eta$-robust]\label{def:robust_set}
Fix a neighborhood operator $\mathcal{N}(\cdot)$ and a thresholded PRM decision rule $f_\tau$. For $\eta\in[0,1]$, define the $\eta$-robust set as
\begin{equation}
R_\eta(f_\tau)
:=\Big\{z_t:\Pr_{z'\sim \mathcal{D}|\mathcal{N}(z_t)}\big[f_\tau(z')\neq f_\tau(z_t)\big]\le \eta\Big\} \nonumber.
\end{equation}
\end{definition}

\begin{definition}[Soft robust expansion]\label{def:expansion}
We consider that $(\mathcal{D},\mathcal{N})$ satisfies $(c,q,\eta)$-soft robust expansion (w.r.t.\ $\mu_{\mathrm{good}},\mu_{\mathrm{bad}}$) if for every measurable $U\subseteq R_\eta(f_\tau)$ with $\mu_{\mathrm{good}}(U)\ge q$, $\mu_{\mathrm{bad}}(\mathcal{N}(U)) \;\ge\; c\,\mu_{\mathrm{good}}(U)$.
\end{definition}

\begin{theorem}[Soft weak-label correction for PRM]\label{thm:softB1}
Assume Assumption~\ref{ass:calibration} holds and that $(\mathcal{D},\mathcal{N})$ satisfies $(c,q,\eta)$-soft robust expansion. Let $\bar{\rho}_\eta := \Pr\!\big(z_t\notin R_\eta(f_\tau)\big)$ and $c' := \frac{c}{(1-\alpha)+c\alpha}$. If
\begin{equation}
\Pr\!\Big(f_\tau(z_t)\neq \tilde y_{\mathrm{agg}}(z_t)\ \ \lor\ \ z_t\notin R_\eta(f_\tau)\Big)\;\le\; 1-q-\alpha \nonumber,
\end{equation}
then for any $\tau$ such that $1-2c'\alpha>0$,
\begin{equation}
\mathrm{err}_\tau(r_\theta,y)
\;\le\;
\frac{
\mathrm{err}_\tau(r_\theta,\tilde y_{\mathrm{agg}})
\;-\;\alpha(2c'-1)
\;+\;2c'\alpha\, \bar{\rho}_\eta
}{
1-2c'\alpha
}. \nonumber
\end{equation}
\end{theorem}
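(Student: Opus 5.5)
The plan is to adapt the expansion-based error-correction argument of \citet{lang2024theoretical} (their Theorem on weak-label correction under robust expansion), replacing the hard partition into ``correctly'' and ``incorrectly'' weakly labelled steps by the soft masses $\mu_{\mathrm{good}},\mu_{\mathrm{bad}}$. All quantities are probabilities over $z_t$, with the randomness of $\tilde y_{\mathrm{agg}}$ given $z_t$ averaged out via Assumption~\ref{ass:calibration}.

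\textbf{Step 1: decompose the true error.} Write $\mathrm{err}_\tau(r_\theta,y)=\Pr(f_\tau\neq y)$. On steps where $\tilde y_{\mathrm{agg}}=y$ the two errors coincide, and on steps where $\tilde y_{\mathrm{agg}}\neq y$ (binary labels) $f_\tau$ disagrees with exactly one of them. This gives
\[
\mathrm{err}_\tau(r_\theta,y)=\mathrm{err}_\tau(r_\theta,\tilde y_{\mathrm{agg}})+\Pr(\tilde y_{\mathrm{agg}}\neq y)-2\Pr\big(f_\tau=\tilde y_{\mathrm{agg}}\neq y\big).
\]
By calibration, $\Pr(\tilde y_{\mathrm{agg}}\neq y)=\alpha$. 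It therefore suffices to lower bound the ``corrected mass'' $\Pr(f_\tau\neq\tilde y_{\mathrm{agg}},\,\tilde y_{\mathrm{agg}}\neq y)$, i.e.\ the bad labels the student fixes. Equivalently, we upper bound the bad labels the student fits, $B:=\Pr(f_\tau=\tilde y_{\mathrm{agg}}\neq y)$.

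\textbf{Step 2: expansion argument.} Let $U$ be the set of robust steps, $U\subseteq R_\eta(f_\tau)$, on which $f_\tau$ is wrong w.r.t.\ $y$, split by class as in Lang et al. Robustness means the neighbourhoods of $U$ are mostly (up to $\eta$ and the non-robust mass $\bar\rho_\eta$) labelled the same way by $f_\tau$, hence also wrong. Among these neighbours, $f_\tau$ disagrees with $\tilde y_{\mathrm{agg}}$ wherever the weak label is good, and agrees wherever it is bad.

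The hypothesis $\Pr(f_\tau\neq\tilde y_{\mathrm{agg}}\lor z_t\notin R_\eta)\le 1-q-\alpha$ plays the role of Lang et al.'s condition. It guarantees that the complementary robust, weakly-consistent set has $\mu_{\mathrm{good}}\ge q$, so Definition~\ref{def:expansion} applies, giving $\mu_{\mathrm{bad}}(\mathcal N(U))\ge c\,\mu_{\mathrm{good}}(U)$. Expansion then forces the fitted-bad mass to be accompanied by at least a $c$-multiple of good mass on which $f_\tau$ is in weak error. This yields an inequality of the form
\[
\mathrm{err}_\tau(r_\theta,\tilde y_{\mathrm{agg}})\ \ge\ \text{(good mass where $f_\tau$ errs)}\ \ge\ \tfrac{1}{c}\big(B-\alpha\bar\rho_\eta\big)-\text{slack}.
\]
Here the $\alpha\bar\rho_\eta$-type term accounts for bad mass lying outside $R_\eta$.

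\textbf{Step 3: normalisation and algebra.} Because $\mu_{\mathrm{good}}$ and $\mu_{\mathrm{bad}}$ are $p$-weighted rather than indicator masses, converting between a set's probability and its bad mass costs a factor. The total good mass is $1-\alpha$ and the total bad mass is $\alpha$, and mixing a $c$-expansion across these proportions yields the effective constant $c'=c/((1-\alpha)+c\alpha)$. Substituting the bound on $B$ into Step 1 gives
\[
\mathrm{err}_\tau(r_\theta,y)\le \mathrm{err}_\tau(r_\theta,\tilde y_{\mathrm{agg}})-\alpha(2c'-1)+2c'\alpha\bar\rho_\eta+2c'\alpha\,\mathrm{err}_\tau(r_\theta,y).
\]
The true error reappears on the right because the expanded set $U$ is itself defined via true errors. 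Moving that term to the left and dividing by $1-2c'\alpha>0$ gives the claim.

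The main obstacle is Step 2/3. The difficulty is making the soft expansion interact correctly with sets defined through the random labels $\tilde y_{\mathrm{agg}}$: one must condition on $z_t$ and use calibration so that the ``good'' and ``bad'' parts of any step-set are exactly $\mu_{\mathrm{good}}$ and $\mu_{\mathrm{bad}}$. One must also track precisely where the factor $c'$ (rather than $c$) and the $\bar\rho_\eta$ term enter. The first thing to try is to follow Lang et al.'s proof line by line, replacing every indicator of ``weak label correct'' with $p(z_t)$.
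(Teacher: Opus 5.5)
\textbf{Gap in Step 2.} You invoke the expansion on the wrong set, and in a direction that cannot give the inequality you need. Definition~\ref{def:expansion} says only that a robust set with \emph{large good mass} has a neighbourhood carrying \emph{substantial bad mass}. It lower-bounds neighbourhood mass and nothing else. You check the $q$-threshold for the robust, weakly-consistent set, but then apply the expansion to $U=$ robust mistakes. That set has good mass at most $\mathrm{err}_\tau(r_\theta,\tilde y_{\mathrm{agg}})$, so in the regime of interest the threshold need not hold. Even when it does, the conclusion is unhelpful. By Remark~\ref{rem:neighbor}, the prediction-stable, bad-labelled neighbours of robust mistakes are themselves mistakes with $f_\tau=\tilde y_{\mathrm{agg}}\neq y$. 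So what expansion buys you there is $B\gtrsim c\,\mu_{\mathrm{good}}(U)$, a \emph{lower} bound on the fitted-bad mass, which cannot bound the true error. Your claimed $\Pr(f_\tau\neq\tilde y_{\mathrm{agg}},\ \tilde y_{\mathrm{agg}}=y)\ge\frac1c(B-\alpha\bar\rho_\eta)-\text{slack}$ would need a reverse, bad-to-good expansion for small sets, which is not assumed. Your Step~3 display is exactly the right intermediate inequality, but it is not derived.

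\textbf{The working argument.} The paper's route, following Lang et al., is the mirror image: apply the expansion to the anchor set $V=R_\eta(f_\tau)\cap\{f_\tau=y\}\cap\{\tilde y_{\mathrm{agg}}=y\}$. Its good mass is at least $q$ by the premise, because $\{f_\tau\neq y,\ \tilde y_{\mathrm{agg}}=y\}\subseteq\{f_\tau\neq\tilde y_{\mathrm{agg}}\}$. Take $u\in V$ and a neighbour $u'$ with $f_\tau(u')=f_\tau(u)$. Remark~\ref{rem:neighbor} gives $f_\tau(u')=y(u)=y(u')$. Hence the bad mass of the prediction-stable neighbourhood $\mathcal N_{\mathrm{stab}}(V)$ lies in the corrected set, which gives a \emph{lower} bound on $C:=\Pr(f_\tau=y,\ \tilde y_{\mathrm{agg}}\neq y)$. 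The paper applies the expansion directly to $\mathcal N_{\mathrm{stab}}(V)$. That is why no $\eta$-dependent slack appears; your ``mostly, up to $\eta$'' would leave a term the target bound has no room for.

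\textbf{Where $c'$, $\bar\rho_\eta$ and the true error come from.} These have concrete sources, not ``mixing''. Since $V$ excludes bad-labelled steps, $\Pr(V)\ge\Pr(R_\eta(f_\tau)\cap\{f_\tau=y\})-C$. Read the expansion in normalised form, $\mu_{\mathrm{bad}}(\cdot)/\alpha\ge c\,\mu_{\mathrm{good}}(\cdot)/(1-\alpha)$, as the paper's derivation effectively does. Then this self-referential inequality solves to $C\ge c'\alpha\,\Pr(R_\eta(f_\tau)\cap\{f_\tau=y\})\ge c'\alpha\,(1-\mathrm{err}_\tau(r_\theta,y)-\bar\rho_\eta)$. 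With the joint masses as literally defined, you would instead get $c/(1+c)$ in place of $c'\alpha$. The last inequality is where both $\bar\rho_\eta$ and $\mathrm{err}_\tau(r_\theta,y)$ enter. Plugging this into Step~1 gives your Step~3 display and the theorem.

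\textbf{Sign slip in Step~1.} The correct identity is $\mathrm{err}_\tau(r_\theta,y)=\mathrm{err}_\tau(r_\theta,\tilde y_{\mathrm{agg}})+\alpha-2C$, equivalently $\mathrm{err}_\tau(r_\theta,\tilde y_{\mathrm{agg}})-\alpha+2B$. As you wrote it, it already fails for $f_\tau\equiv y$ whenever $\alpha>0$.
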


\begin{remark}[Informal: effect of miscalibration]
If calibration is imperfect, we define the residual
$\Delta(z_t):=\Pr(\tilde y_{\mathrm{agg}}(z_t)=y(z_t)\mid z_t)-p(z_t)$
and assume $\mathbb{E}[|\Delta(z_t)|]\le \bar\delta$.
Then Theorem~\ref{thm:softB1} continues to hold up to an additional additive degradation of order
$O\!\left(\frac{\bar\delta}{1-2c'\alpha}\right)$.
\end{remark}

\begin{remark}[Informal: connection to BoN]\label{rem:bon}
Our downstream objective is BoN trajectory selection using an aggregate PRM score $R_\theta(\pi)=\mathrm{Agg}_t(\{r_\theta(z_t)\})$. While weak-label correction need not be perfect at every step, reducing the step-level error rate improves the fidelity of these aggregated trajectory scores. As a result, tighter bounds on $\mathrm{err}_\tau(r_\theta,y)$ translate into tighter control of ranking noise, which increases the probability that BoN ranks higher-quality trajectories above lower-quality ones under a mild margin/separation condition.% (formalized in Appendix~\ref{app:theory}).
\end{remark}

\begin{remark}[Neighborhood label consistency]\label{rem:neighbor}
The proof in Appendix~\ref{app:theory} uses the standard condition that the neighborhood operator
$\mathcal{N}(\cdot)$ induced by the embedding model is locally label-consistent on robust anchors, i.e., 
for any $z_t\in R_\eta(f_\tau)$ and any $z'\in \mathcal{N}(z_t)$, we have $y(z')=y(z_t)$.
\end{remark}

\section{Experiments}
\label{sec:experiments}

\subsection{Experimental Setup}
\label{sec:experimental_setup}

\paragraph{Benchmarks.} We evaluate our framework on two biological reasoning benchmarks to assess both in-distribution performance and out-of-distribution (OOD) generalization.
Our primary testbed is \textsc{PerturbQA} \citep{wu2025contextualizing}, a gene perturbation dataset covering four cell lines. \textbf{For PRM training and evaluation}, we construct $\mathcal{D}_{\mathrm{train}}$ using K562, HepG2, and Jurkat cell lines, and hold out RPE1 entirely as an OOD test set, measuring generalization in an unseen cellular context. \textbf{For the SFT baseline}, we fine-tune using all four cell lines (including RPE1), and therefore report SFT results as in-distribution under this split. We report the F1 score as the primary metric; other metrics are listed in Appendix~\ref{app:more_results}. To assess cross-task transfer, we further evaluate checkpoints trained solely on \textsc{PerturbQA} on \textsc{BioReason} dataset~\citep{fallahpour2025bioreason}, which comprises tasks that link genetic variants to pathogenicity.
Dataset details are provided in Appendix~\ref{app: dataset}.

\textbf{Weak supervision and neighborhood construction.}
We use the synthesized PerturbQA trajectories  (Section~\ref{sec:synthesis}) and obtain step-wise weak labels from heterogeneous supervisors: an LLM judge (\texttt{Qwen3-32B}) queried under three prompt context (\textit{Context}, \textit{Analogical}, \textit{Direct}), and multiple MC rollout teachers instantiated with \texttt{Qwen3-\{1.7B,4B,7B\}}. We aggregate all weak labels by majority vote. For neighborhood consensus, we embed each step using \texttt{all-MiniLM-L6-v2} (384-d) and perform cosine $k$-NN search with \textsc{Faiss} ($k=20$). We also evaluate a biology-refined variant that filters candidate neighbors by biological similarity using precomputed gene embeddings from~\citet{littman2025gene} (concatenated perturbation/target gene embeddings) before semantic $k$-NN.

\textbf{PRM Training and Evaluation Protocol.}
We initialize our PRM from \texttt{Qwen3-4B}, modifying the architecture by replacing the language modeling head with a scalar regression head, and train it with a binary cross-entropy objective against aggregated weak supervision. Additional implementation details are provided in Appendix~\ref{app:imp_details}.

To evaluate its performance, we adopt a \textbf{Best-of-N} ($N=8$) sampling protocol. Specifically, a fixed policy model (e.g. \texttt{Qwen3-4B} and \texttt{RBIO1}~\citep{istrate2025rbio1}) generates candidate reasoning trajectories using diverse decoding parameters (e.g. temperature $T=0.7$), which are then scored by the PRM. The trajectory with the highest aggregated reward is selected as the final answer, and correctness is verified against the ground truth. We compare our PRMs against several baselines, including VersaPRM~\citep{zeng2025versaprm},  PRM800K~\citep{lightman2023let}, and an LLM-as-a-judge baseline (\texttt{Qwen3-32B}), which is used to generate the training labels for our PRMs.

\begin{figure}[t]
    \centering
    \includegraphics[width=0.46\textwidth, trim=5 10 5 5, clip]{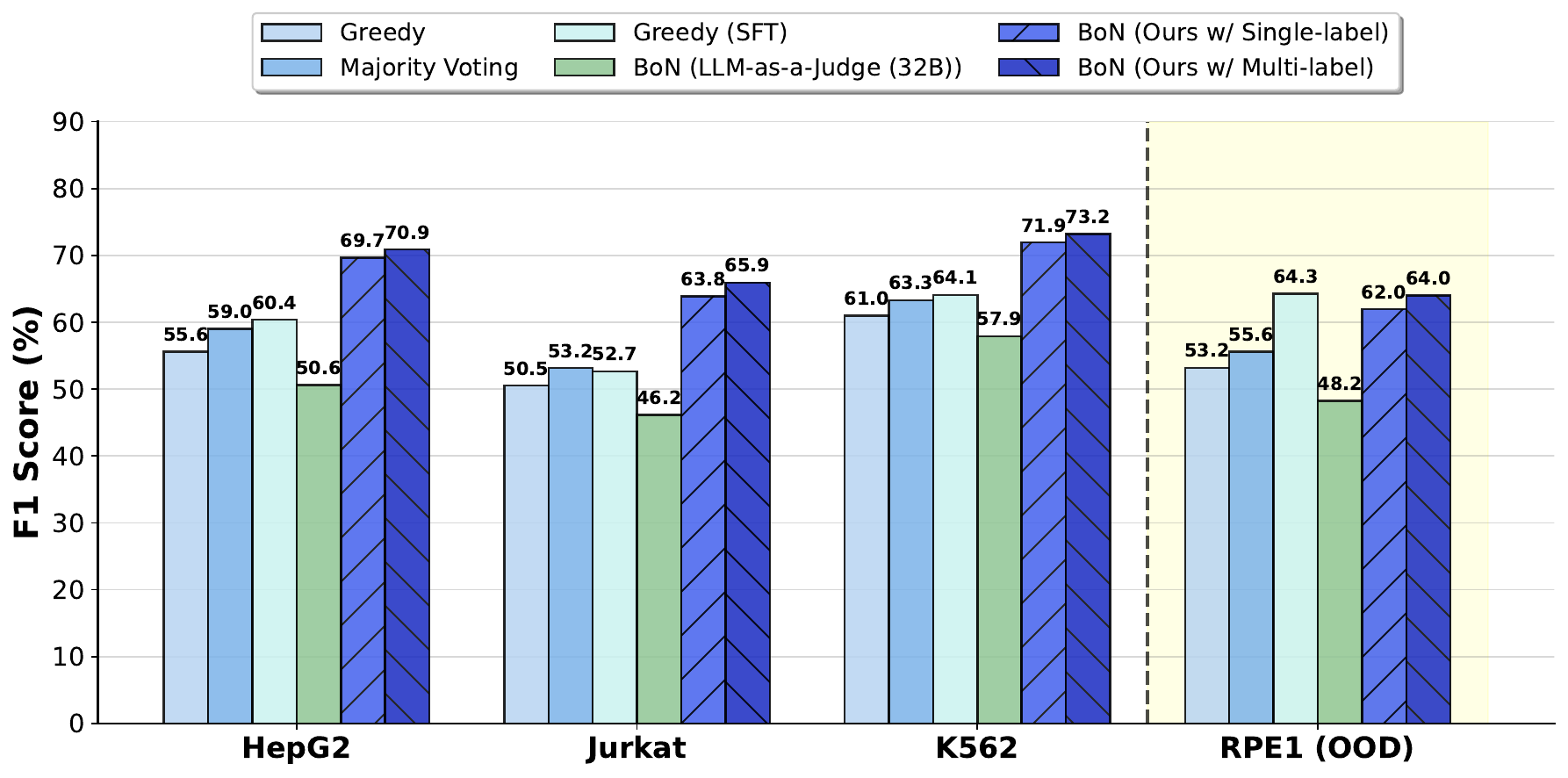}
    \vspace{-.1cm}
    \caption{\textbf{Performance comparison across four cell lines.} \textbf{BoN (Ours w/ Full Set (Multi-Label))} achieves the highest average F1 scores, outperforming single-label and SFT baselines by effectively aggregating multiple weak supervisory signals. \textbf{Note:} RPE1 is OOD for PRM; SFT is trained with RPE1.}
    \vspace{-.5cm}\label{fig:figure1_averaged}
\end{figure}

\subsection{Main Results}

{Due to space constraints, we present representative results in the main text, primarily highlighting OOD performance. Full results—including IID evaluations, per-cell-line breakdowns, per-task performance, and ablations over embedding choices—are reported in Appendix~\ref{app:more_results}. Across these settings, the same conclusions hold.}

\subsubsection{Effectiveness of Multi-Source Weak Labels Aggregation}
We first evaluate the effectiveness of the proposed multi-source weak supervision aggregation strategy across four representative cell lines (HepG2, Jurkat, K562, and RPE1). As shown in Figure~\ref{fig:figure1_averaged}, the Best-of-$N$ (BoN; $N=8$) performance using multi-source supervision consistently outperforms training with a single weak source (\textit{LF-Analogical} only), yielding an average F1 of \textbf{68.5\%} versus \textbf{66.9\%}. Moreover, our approach substantially exceeds the SFT\footnote{RPE1 is OOD only for PRM training (trained on K562/HepG2/Jurkat). The SFT baseline is trained on all four cell lines (including RPE1), so the RPE1 comparison is conservative for our method.} baseline (greedy decoding) of \textbf{60.4\%} average F1, which directly optimizes the underlying policy model on all cell lines. These results suggest that aggregating feedback from diverse weak supervisors yields a more reliable step-wise training signal than any single weak source, translating into improved policy performance.%, including OOD generalization.

\begin{figure}[t]
    \centering
    \begin{subfigure}[b]{0.38\textwidth}
        \includegraphics[width=\textwidth]{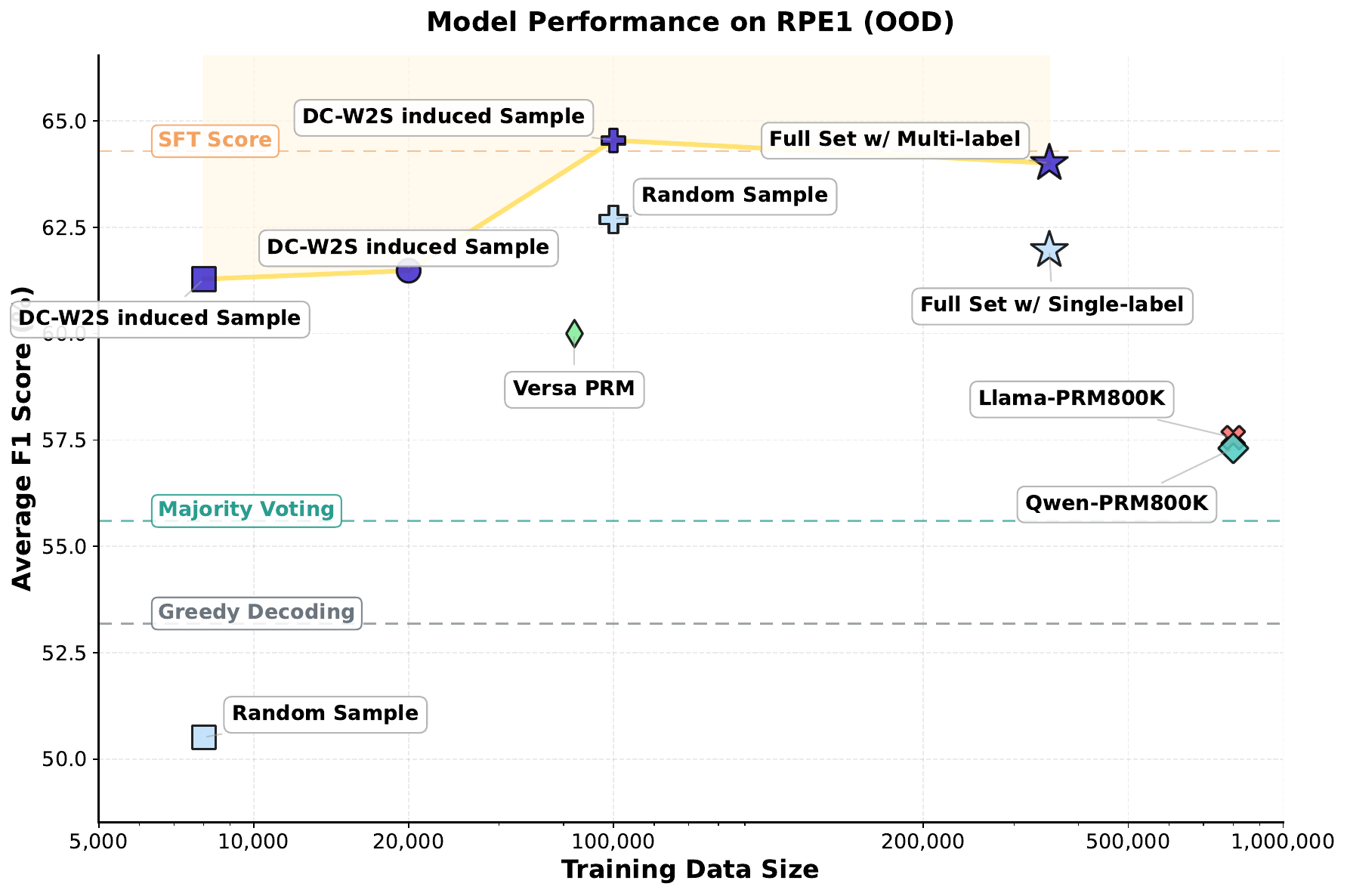}
        % \caption{Instance-level efficiency analysis.}
        % \label{fig:data_efficiency}
    \end{subfigure}
    \hfill
    \begin{subfigure}[b]{0.38\textwidth}
        \includegraphics[width=\textwidth]{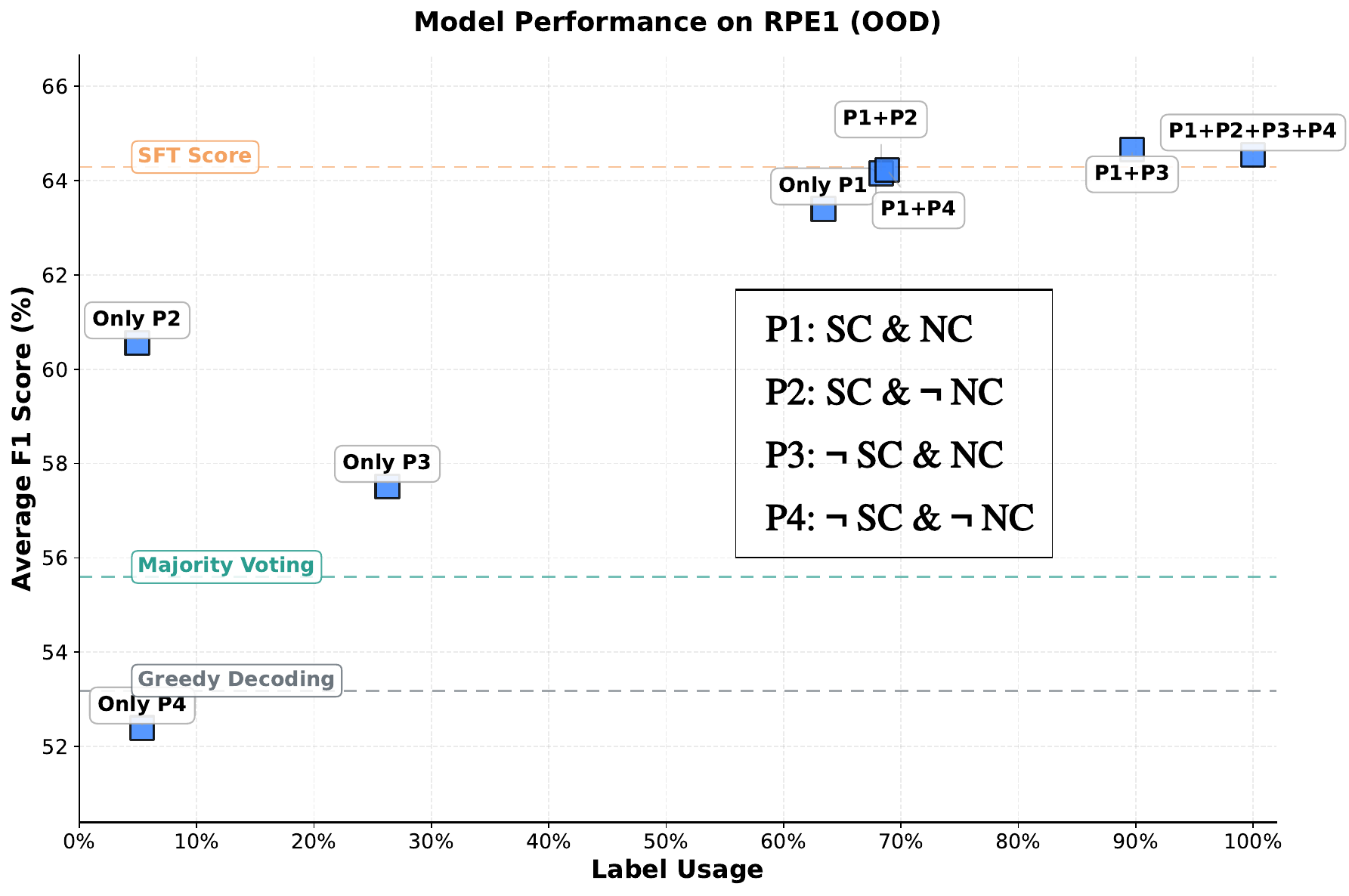}
        % \caption{Label-level efficiency analysis.}
        % \label{fig:label_efficiency}
    \end{subfigure}
    \vspace{-.2cm}
    \caption{\textbf{Benchmarking Performance and Efficiency of BoN (Ours w/ CellProfiler).} \textbf{Top} (instance-level) and \textbf{bottom} (label-level) efficiency analyses show that we can achieve comparable performance while using fewer training instances and fewer weak step labels, thereby suggesting the effectiveness of the DC-W2S.}
    % \caption{\textbf{Benchmarking Performance and Efficiency.} \textbf{Upper} Instance-level and \textbf{lower} label-level efficiency analysis demonstrates that we can achieve comparable performance by utilizing fewer training instances/labels, thereby suggesting the effectiveness of the \ours\ framework.}
    \label{fig:benchmarking_performance_and_efficiency}
    \vspace{-.5cm}
\end{figure}

\subsubsection{W2S-Induced Sampling Enhances Instance Efficiency}

Figure~\ref{fig:benchmarking_performance_and_efficiency}\figab{(top)} validates the instance-level sampling strategy introduced in Section~\ref{subsubsec:instance_level_strategy} on the held-out RPE1 cell line (OOD). Experimental results reveal that W2S-induced sampling exhibits exceptional data efficiency: (1) \textbf{Surpassing Full-Set Baselines:} Upon reaching a scale of 100k instances, the W2S-induced model achieves an F1 score of approximately 64.5\%, effectively surpassing both the Full Set w/ Multi-label baseline (64.0\%) and the Full Set w/ Single-label baseline (62.0\%), despite the latter utilizing 351k training instances. (2) \textbf{Defining the Pareto-Optimal Frontier:} Across the entire range of data scales, the W2S subsets (indicated by purple markers) consistently maintain F1 scores between 61.3\% and 64.6\%, firmly establishing the Pareto-optimal frontier. In stark contrast, baseline models such as Llama-PRM800K and Qwen-PRM800K yield F1 scores of only around 57.5\%. Such gains suggest that the W2S successfully identifies information-dense training examples, allowing PRM to achieve superior biological reasoning while mitigating the dependence on massive datasets.

\subsubsection{Label-Pattern-Based Masking Improves Label Efficiency} \label{sec:masking_experiments}
We next investigate how pattern-specific masking affects PRM performance and label efficiency (Figure~\ref{fig:benchmarking_performance_and_efficiency}\figab{(bottom)}; full results in Appendix~\ref{app:more_results}). 
Our analysis indicates that performance gains are not linearly proportional to the number of supervised steps; instead, selectively retaining high-reliability patterns yields disproportionate gains. For instance, maintaining only the P1 patterns (high SC \& high NC) unmasked achieves an F1 score near \(\approx 64\%\) using only \(\approx 62\%\) of step labels, with a negligible drop relative to the full-label baseline trained with all weak step labels. %Conversely, patterns with low reliability (such as P3 and P4) provide substantially weaker supervision when used in isolation. 
By contrast, isolating low-reliability regimes (e.g., P3 or P4 alone) produces substantially weaker performance. We also find that \emph{anchoring} neighborhood-reliable steps to high-confidence anchors can further improve generalization: unmasking P3 in addition to P1 (P1+P3) yields higher OOD performance than P1 alone, suggesting that steps which are ambiguous in isolation but reliable within their neighborhood provide complementary supervision when grounded by high-confidence anchors. Overall, these results validate the P1–P4 stratification and show that effective PRM training does not require uniform supervision of every reasoning step.

\begin{table}[t]
\centering
\caption{Performance comparison of different policy input mode.}
\label{tab:policy_input_mode_rpe1}
\resizebox{0.37\textwidth}{!}{\begin{tabular}{lccc}
\toprule
\multirow{2}{*}{\textbf{Method}} & \multicolumn{3}{c}{\textbf{RPE1 (OOD)}} \\
\cmidrule(lr){2-4}
& \textbf{DE} & \textbf{DoC} & \textbf{Avg}  \\
\midrule
Only Query (Greedy) & 19.65 & 34.84 & 27.25  \\
Only Query (Majority Voting) & 20.69 & 35.86 & 28.28  \\
Only Query (BoN w/ Full Set) & 53.17 & 51.94 & 52.56  \\
\midrule
SUMMER (Greedy) & 37.52 & 68.85 & 53.19  \\
SUMMER (Majority Voting) & 38.88 & 72.31 & 55.60  \\
SUMMER (BoN w/ Full Set) & 53.06 & 74.95 & 64.01  \\
\bottomrule
\end{tabular}}
\vspace{-.3cm}
\end{table}

\subsection{Generalization Across Prompting Modes, Policy Models and Tasks}
To test generalization and robustness beyond our default setting, we evaluate our PRM across input modalities, policy models, and downstream tasks.

As shown in Tables~\ref{tab:policy_input_mode_rpe1} and~\ref{tab:policy_input_mode_detailed}, our PRM consistently outperforms greedy decoding and majority voting under both \textit{Only Query} (question only) and \textit{SUMMER} prompting mode (query augmented with retrieved summaries)~\cite{wu2025contextualizing}, % which indeed is a Retrieval Summary Query
in both IID and OOD settings. 
Additionally, the consistent gains observed on the rbio1 policy model ~\citep{istrate2025rbio1} across cell lines (Tables~\ref{tab:rbio1_reward_models_rpe1} and~\ref{tab:rbio1_reward_models}) further demonstrate that DC-W2S transfers across policy distributions.

Notably, on the BioReason KEGG task, a domain that lies entirely outside the training distribution, the W2S-enhanced models trained with only 100k samples surpass those trained on the full-set counterpart. As reported in Table~\ref{tab:bioreason_kegg_comprehensive_t06}, BoN with CellProfiler (100k) and ESM (100k) achieve Weighted F1 scores of 88.89\% and 89.08\%, respectively, outperforming the Full Set variant (87.40\%). This “less-is-more” effect suggests that training on the full pool of weak labels may introduce considerable noise or conflicting supervision that undermines cross-task transfer. In contrast, our W2S mechanism selectively filters low-quality signals and distills more transferable biological structure, leading to superior generalization on unseen tasks.

\begin{table}[t]
\centering
\caption{Results of~\ours\ using RBIO1 policy model.}
\label{tab:rbio1_reward_models_rpe1}
\resizebox{0.37\textwidth}{!}{\begin{tabular}{lccc}
\toprule
\multirow{2}{*}{\textbf{Method}} & \multicolumn{3}{c}{\textbf{RPE1 (OOD)}} \\
\cmidrule(lr){2-4}
& \textbf{DE} & \textbf{DoC} & \textbf{Avg} \\
\midrule
Greedy Decoding & 78.33 & 34.89 & 56.61 \\
Majority Voting & 78.51 & 30.69 & 54.60 \\
\hdashline
Coverage (Upper Bound) & 79.18 & 91.99 & 85.59 \\
\midrule
\multicolumn{4}{c}{\textbf{Our PRM}} \\
\midrule
BoN w/ Full Set (Multi-Label) & 78.88 & 62.30 & 70.59 \\
BoN w/ CellProfiler (100k) & 78.82 & 58.04 & 68.43 \\
% $\Delta$ vs Full Set & \textcolor{blue}{-0.06} & \textcolor{blue}{-4.26} & \textcolor{blue}{-2.16} \\
BoN w/ ESM (100k) & 78.81 & 60.65 & 69.73 \\
% $\Delta$ vs Full Set & \textcolor{blue}{-0.07} & \textcolor{blue}{-1.65} & \textcolor{blue}{-0.86} \\
\bottomrule
\end{tabular}}
\vspace{-.3cm}
\end{table}

\begin{table}[t]
\centering
\caption{Results of~\ours\ on BioReason KEGG Task.} 
% (T=0.6)
\label{tab:bioreason_kegg_comprehensive_t06}
\resizebox{0.37\textwidth}{!}{%
\begin{tabular}{lcc}
\toprule
\textbf{Method} & \textbf{Acc}  & \textbf{Weighted F1} \\
\midrule
Greedy Decoding & 80.69 & 87.41 \\
Majority Voting & 82.76  & 87.23 \\
\hdashline
Coverage (Upper Bound) & 95.86  & 97.43 \\
\midrule
\multicolumn{3}{c}{\textbf{Our PRM}} \\
\midrule
BoN w/ Full Set (Multi-Label) & 83.45  & 87.40 \\
BoN w/ CellProfiler (100k)  & 84.48  & 88.89 \\
%$\Delta$ vs Full Set & \textcolor{red}{+1.03} & \textcolor{red}{+1.49} \\
BoN w/ ESM(100k)  & 84.14  & 89.08 \\
%$\Delta$ vs Full Set & \textcolor{red}{+0.69} & \textcolor{red}{+1.68} \\
\bottomrule
\end{tabular}
}
\vspace{-.3cm}
\end{table}

% \section{Analysis: Reliability Patterns and Embedding Effects}
\section{Discussion}

We analyze how weak step supervision and neighborhood geometry jointly shape PRM performance. % by studying (i) label-pattern configurations (P1--P4) and (ii) the embeddings used to build step neighborhoods. 
Overall, the P1--P4 stratification is highly predictive of supervision value. P1 (high SC and high NC) is the most reliable anchor: across models and tasks, \texttt{Only P1} is the strongest single-pattern regime, confirming P1 steps as high-fidelity anchors. P3 (low SC but high NC) captures steps that are ambiguous in isolation yet coherent within a biologically meaningful neighborhood; while \texttt{Only P3} performs poorly on its own, P3 becomes valuable when anchored to P1, providing complementary supervision that improves generalization.

With P1 as an anchor, the marginal benefit of incorporating other patterns (P2--P4) is setting-dependent. For DoC tasks, which mainly require predicting the \emph{sign} of a gene response, \texttt{P1+P2} is typically comparable to or slightly better than \texttt{P1+P3}: directional cues align with causal-directional priors encoded in biological text and LLM pretraining, so neighborhood geometry adds limited incremental value. By contrast, for OOD--DE we observe \texttt{P1+P3} $>$ \texttt{P1+P2} consistently across embeddings: DE requires separating meaningful expression changes from statistical and biological noise, so manifold-consistent but teacher-ambiguous cues (P3) provide transferable structure that weak teachers alone do not capture. This is consistent with the expansion-property interpretation of W2S generalization~\citep{lang2024theoretical}. Figure~\ref{fig:delta_p3_p2} shows this ``$\Delta$P3 $>$ $\Delta$P2'' effect.

Although we expected \texttt{P1+P4} to degrade performance, in some settings it provides an average benefit. One plausible explanation is that P4 contains a mixture of truly noisy steps and a minority of informative steps that are \emph{hard} rather than incorrect; adding such steps may act as a mild regularizer or increase coverage of rare reasoning modes. Nonetheless, P4 should be treated cautiously and is generally a primary candidate for masking or controlled inclusion.

\begin{figure}[t]
    \centering
    \includegraphics[width=0.43\textwidth, trim=5 10 5 5, clip]{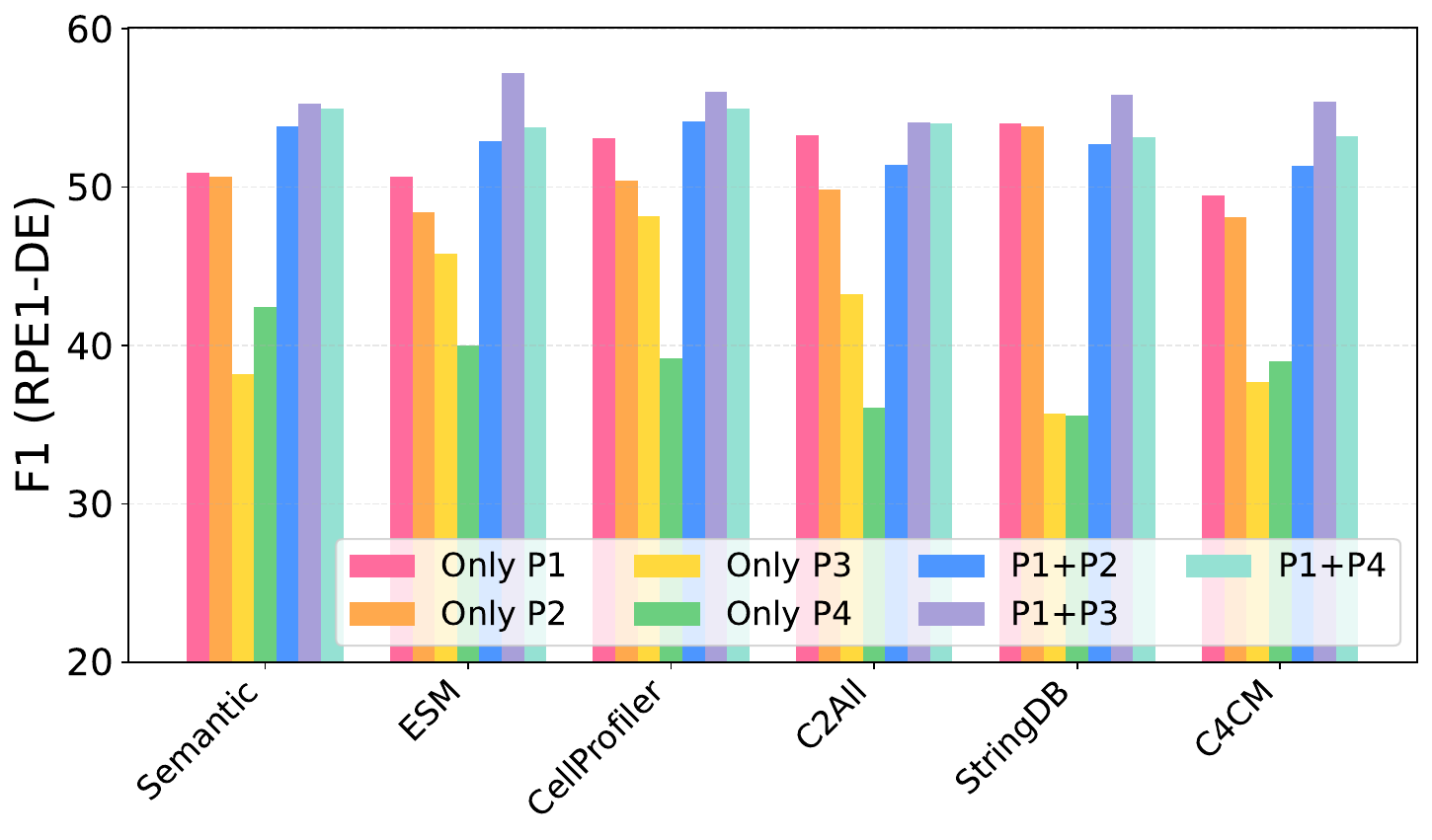}
    \vspace{-.1cm}
    \caption{\textbf{Effect of embedding choice.} Performance on RPE1--DE for different label-pattern configurations across semantic and biologically grounded embedding spaces. Embeddings with richer biological structure (e.g., ESM, CellProfiler) yield larger gains from neighborhood-reliable supervision (P3).}
    \label{fig:embedding_ablation}
    \vspace{-.3cm}
\end{figure}

Embedding choice modulates NC.  Embedding quality determines whether retrieved neighborhoods reflect biological relatedness (cf.\ Remark~\ref{rem:neighbor}). As shown in Figure~\ref{fig:embedding_ablation}, not all ``biological'' embeddings are equally helpful (consistent with prior work~\citep{littman2025gene}), so neighborhood construction must be validated per task. 
Embedding effects are most pronounced in biologically demanding or distribution-shifted settings (particularly OOD--DE), where the model cannot rely on simple directional cues or teacher agreement and must instead depend on neighborhood geometry to propagate manifold-consistent structure. Embeddings that encode richer functional or phenotypic signals (e.g., ESM- or CellProfiler-based variants) tend to produce larger gains from P3 than purely semantic or network-only embeddings.

Finally, our theoretical analysis assumes neighborhood label consistency that supports weak-to-strong correction under robust expansion. A natural direction for future work is to study weaker, more local assumptions that directly connect embedding continuity to label correctness (e.g., bounds of the form \(\Pr(y(z')\neq y(z_t)\mid z'\in\mathcal{N}(z_t))\le\xi\)), and to investigate calibrated soft-weighting and conditional reliability models that are pattern- and task-dependent. Another important direction is to empirically validate \ours\ beyond biological perturbation reasoning, by testing whether the same dual-consensus stratification yields similar gains in other complex reasoning domains.

In short, \ours\ shows that consensus signals turn weak supervision into high-value training data, improving PRM performance and label efficiency for biological reasoning.

\section*{Impact Statement}

This paper presents work whose goal is to advance the field of Machine Learning, specifically by improving process-level supervision of LLM reasoning in biological settings. If used responsibly, more reliable process reward modeling may improve transparency in scientific workflows and reduce misleading mechanistic rationales. However, the approach inherits limitations from weak supervision and underlying models and may still produce confident but incorrect reasoning traces; it is not a substitute for expert judgment. We encourage domain-specific validation and human-in-the-loop use before any high-stakes deployment.

\bibliography{example_paper}
\bibliographystyle{icml2025}

%%%%%%%%%%%%%%%%%%%%%%%%%%%%%%%%%%%%%%%%%%%%%%%%%%%%%%%%%%%%%%%%%%%%%%%%%%%%%%%
%%%%%%%%%%%%%%%%%%%%%%%%%%%%%%%%%%%%%%%%%%%%%%%%%%%%%%%%%%%%%%%%%%%%%%%%%%%%%%%
% APPENDIX
%%%%%%%%%%%%%%%%%%%%%%%%%%%%%%%%%%%%%%%%%%%%%%%%%%%%%%%%%%%%%%%%%%%%%%%%%%%%%%%
%%%%%%%%%%%%%%%%%%%%%%%%%%%%%%%%%%%%%%%%%%%%%%%%%%%%%%%%%%%%%%%%%%%%%%%%%%%%%%%
\newpage
\appendix
\onecolumn

\section{Implementation Details}\label{app:imp_details}

In this section, we provide exhaustive details regarding our experimental setup, including the data generation process, the technical implementation of our retrieval system, and the training configurations for our PRMs.

\subsection{Weak Supervision and Data Generation} The weak supervision signals are derived from two primary sources: LLM-as-a-judge and Monte Carlo rollouts.

\begin{itemize} \item \textbf{LLM-as-a-Judge:} We utilize \texttt{Qwen3-32B} as our primary weak supervisor for step-level annotations. The specific prompt templates used for LF-Direct, Context and Analogical are illustrated in Figure~\ref{fig:prompt}. \item \textbf{Monte Carlo Rollouts:} To obtain ground-truth–oriented weak signals, we perform MC rollouts with the Qwen3 family (1.7B, 4B, and 7B variants). For each intermediate reasoning step, we launch 8 independent rollouts to estimate the probability of ultimately reaching the correct answer. \end{itemize}

\subsection{Efficient Neighborhood Retrieval and Biological Refinement}

As noted in Section~\ref{sec:synthesis}, our training corpus contains 351k reasoning trajectories, which expand to over 4.2M individual reasoning steps. Operating at this scale introduces significant memory demands and retrieval latency during neighborhood-consensus computation. To address these challenges, we adopt a compressed vector indexing strategy outlined below.

\paragraph{IVF-PQ Indexing Architecture:} To mitigate the ``curse of dimensionality'' and avoid memory overflow, we leverage an Inverted File Index with Product Quantization (IVF-PQ):

\begin{itemize}

\item \textbf{IVF Acceleration:} The vector space is partitioned into $K$ Voronoi cells (defined by $\mathit{num\_clusters} = 10, 000$). During retrieval, the system probes only a subset of clusters ($\mathit{nprobe} = 32$), reducing the computational complexity of the global search by two orders of magnitude.

\item \textbf{PQ Compression:} Each 384-dimensional semantic embedding is partitioned into 64 sub-vectors ($\mathit{pq\_subquantizers} = 64$), each quantized into an 8-bit index. This reduces the memory footprint of a single step representation from 1,536 bytes to 64 bytes. This $24\times$ compression allows the entire 4.2M-step index to reside in-memory, facilitating low-latency neighborhood search without frequent disk I/O.

\end{itemize}

Afterwards, biology-refined neighborhood retrieval is implemented as follows: 
\begin{enumerate}
\item \textbf{Semantic Neighborhood Extraction:} For a query step, we first retrieve $K=20$ candidates using the IVF-PQ index based on the \textit{all-MiniLM-L6-v2} embedding space.
\item \textbf{Biological Manifold Refinement:} For each candidate, we extract the associated gene pair from its query (perturbation gene and target gene). We concatenate their embeddings derived from a biological foundation model. A neighbor is only considered valid for consensus calculation if its biological cosine similarity to the query exceeds a threshold of 0.5. This ensures that the reasoning patterns are not only semantically similar but also grounded in consistent biological contexts.
\end{enumerate}

\subsection{PRM Training Configuration} Our Process Reward Model is initialized from \texttt{Qwen3-4B}. We append a regression head to the final transformer layer to predict step-wise rewards. The regression head consists of a multi-layer perceptron (MLP) with a hidden dimension equal to the backbone's hidden size, followed by a ReLU activation and a final linear layer mapping to the reward logits.

\textbf{Optimization Hyperparameters:} All models are trained using the AdamW optimizer with a cosine learning rate scheduler and the optimization hyperparameters are as follows. \begin{itemize} \item \textbf{Learning Rate:} 5e-6 \item \textbf{Global Batch Size:} 256 \item \textbf{Training Epochs:} 3 \item \textbf{Weight Decay:} 0.01 \item \textbf{Warmup Ratio:} 0.1 \item \textbf{Cutoff Length:} 4096 tokens \end{itemize}

\textbf{Hardware:} All training experiments are conducted on a NVIDIA B200 cluster.

\begin{figure*}[ht]
\begin{promptbox}{Prompt For LF-Direct, Context, Analogical}


You are an expert molecular biologist who studies how genes are related using Perturb-seq and other functional genomics approaches. Your task is to evaluate the correctness of intermediate reasoning steps in biological analyses, particularly those involving gene regulatory networks, causal inference, and molecular mechanisms.

You will be presented with step-by-step reasoning chains where a model attempts to solve complex biological problems. For each evaluation, you will receive:

1. The initial question and the step-by-step reasoning chain to evaluate
2. The ground truth answer to the problem
3. Relevant information containing established biological relationships, pathways, and molecular interactions.


Your job is to assess each individual step for:

1. Correctness: Is the biological reasoning scientifically accurate? Are the cited mechanisms, pathways, or relationships supported by the established biological knowledge?
2. Logical validity: Does the step follow logically from the previous information? Are there any logical gaps or non sequiturs?
3. Alignment with ground truth: Does the step move the reasoning toward or away from the correct answer? Note that a step can be scientifically correct but still misaligned with the optimal reasoning path.

For each step, provide:

1. A judgment using one of three categories:
CORRECT: The step contains accurate biological reasoning and contributes meaningfully to solving the problem
NEUTRAL: The step serves a structural/transitional role (connecting ideas) without making substantive claims that could be right or wrong, but maintains coherence
INCORRECT: The step contains factual errors, logical flaws, or misleading reasoning
2. A confidence score (1-5, where 5 is most confident)
3. A brief explanation (1-2 sentences) justifying your assessment

Pay special attention to common pitfalls such as:

1. Confusing correlation with causation in gene expression data
2. Oversimplifying complex regulatory networks
3. Misinterpreting statistical significance in genomics contexts
4. Making claims about directionality without appropriate causal inference methods
5. Ignoring cell-type specificity or temporal dynamics
6. Contradicting established relationships in the knowledge graph

Remember that transitional sentences that maintain logical flow should typically be labeled as NEUTRAL rather than forced into CORRECT/INCORRECT categories.


Here is the relevant information:

(*@\textcolor{blue}{\{LF-Direct, Context, and Analogical are only different here given the provided context.\}}@*)


Now, please evaluate the following reasoning chain and make sure your output is a valid JSON object:

Question: 
{instruction}

Reasoning Chain: 
{reasoning_chain_with_prefix}

Ground truth: 
{label_string}



\end{promptbox}
\caption{The exact prompt template used for weak supervision generation via LLM-as-a-judge. The blue text indicates where the injected context differs across the three methods (LF-Direct, Context, Analogical).}
\label{fig:prompt}
\end{figure*}

\begin{algorithm}
\caption{Distribution-Balanced Greedy Subset Selection}
\label{alg:greedy_balanced_selection_simplified}
\begin{algorithmic}[1]
\REQUIRE Buckets $\mathcal{B}[p]$ mapping pattern $p$ to sorted list $(i, d_p(i))$ by density $d_p$ descending; Target proportion $\Pi_p = 0.25$; Target size $T$
\ENSURE Selected indices $\mathcal{L}$ and pattern counts $\mathcal{C}$

\STATE $\mathcal{L} \leftarrow []$
\STATE $\mathcal{C}[p] \leftarrow 0$ for $p \in \{P1,P2,P3,P4\}$;

\FOR{$t=1$ to $T$}
    \STATE Compute deficit $\Delta[p] = \Pi_p - \frac{\mathcal{C}[p]}{\sum_{q}\mathcal{C}[q] + \epsilon}$
    \STATE $p^* \leftarrow \arg\max_p \Delta[p]$
    \STATE $i^* \leftarrow$ first unselected index in $\mathcal{B}[p^*]$

    \IF{$i^*$ is None}
        \STATE Search all buckets in descending $\Delta[p]$ order for first unselected index; assign to $i^*$
    \ENDIF
    \IF{$i^*$ is None} \STATE \textbf{break} \ENDIF

    \STATE Add $i^*$ to $\mathcal{L}$
    \STATE Update $\mathcal{C}[p]$ for all patterns present in entry $i^*$
\ENDFOR

\STATE \textbf{return} $\mathcal{L}, \mathcal{C}$
\end{algorithmic}
\end{algorithm}

\section{Dataset Details}\label{app: dataset}
PerturbQA~\citep{wu2025contextualizing} is a structured biological  benchmark introduced to assess LLMs’ ability to predict discrete outcomes of high-content genetic perturbation experiments and interpret patterns in molecular biology through language. 
The benchmark is defined over real perturbation data derived from five high-quality single-cell CRISPR interference (CRISPRi) experiments, with ground-truth labels generated through rigorous statistical criteria on differential expression outcomes.
PerturbQA defines two binary prediction tasks over perturbation--target gene pairs and cell contexts (e.g., K562, RPE1, HepG2, Jurkat): \textbf{DE} (differential expression prediction), which predicts whether a given perturbation induces a significant change in expression for a downstream target gene; and \textbf{DoC} (direction of change), a subsequent task that predicts whether the change is an increase or decrease, and is defined only for pairs labeled positive under DE. %The dataset has ~640k examples for the differential expression, and ~99k examples for the direction of change. 
% Dataset is split 75:25 into train and test along the perturbation axis, with similar distributions of number of differentially-expressed genes. 10\% of training data are sampled at random for validation.

BioReason~\citep{fallahpour2025bioreason} is a multimodal biological reasoning benchmark suite. It is specifically designed to evaluate a model’s ability to perform multi-step reasoning over genomic sequence data integrated with natural language queries, and its evaluation spans three distinct datasets representing increasingly complex reasoning and classification tasks. The first component, the KEGG-Derived Biological Reasoning Dataset, which we adopt in this work, contains approximately 1,449 examples that connect paired reference/variant DNA sequences to disease phenotypes via mechanistic pathways drawn from curated pathway resources such as KEGG~\citep{kanehisa2000kegg}, with reasoning traces that lead from genetic variant context to predicted outcomes. The KEGG dataset is split into train, validation, and test sets with roughly 1,159 training, 144 validation, and 146 test examples, and the input length varies as a function of the genomic and textual context embedded with reasoning annotations. %This task can be evaluated using standard classification metrics such as accuracy and F1-score.

\subsection{Licenses}\label{appx: license}
We strictly follow all licenses when using the public assets in this work.
The PerturbQA dataset and codebase are publicly available under the CC BY 4.0 license and the Genentech Non-Commercial Software License Version 1.0, respectively. The BioReason KEGG dataset is under Apache License, Version 2.0.

\section{Biological Embedding Details}
\label{appendix:bio_embeddings}

To construct biologically grounded neighborhoods for TNC, we follow~\citep{littman2025gene} and use embeddings derived from curated biological knowledge graphs (KGs), pretrained foundation models, and experimental data.

\paragraph{CellProfiler.}
These are gene embeddings derived from optical pooled screening (OPS) experiments, which couple CRISPR perturbations with high-content cellular imaging~\citep{funk2022phenotypic}. Per-gene morphological profiles are computed from Cell Painting/CellProfiler features, aggregated across perturbations, and PCA-compressed to form gene-level embeddings; we use the precomputed representations released in~\citep{littman2025gene}.

\paragraph{ESM.} These are protein-sequence embeddings derived from a pretrained ESM model~\citep{lin2023evolutionary}; we use the precomputed gene representations released in~\citep{littman2025gene}.

\paragraph{C2all and C4CM.}
These are network-based embeddings derived from MSigDB database~\citep{liberzon2011molecular}. \textbf{C2all} uses the MSigDB C2 curated collection (Canonical Pathways and Chemical/Genetic Perturbations), while \textbf{C4CM} uses the MSigDB C4 cancer module collection~\citep{segal2004module}. For each collection, gene embeddings are computed by applying node2vec~\citep{grover2016node2vec} to the corresponding KG.

\paragraph{StringDB.}
These are network-based gene embeddings derived from the STRING database~\citep{szklarczyk2023string}, which integrates physical interactions and functional associations. Gene embeddings are computed by applying node2vec~\citep{grover2016node2vec} to the STRING KG, capturing network proximity and shared interaction neighborhoods.

\section{More Experimental Results}\label{app:more_results}

\subsection{Full Results on Generalization and Baselines Comparison}
\label{appendix:full_results}

\begin{table*}[tbp!]
\centering
\caption{Baseline PRM comparison across all cell lines and tasks.}
\label{tab:perturbqa_baseline_prm_all}
\resizebox{\textwidth}{!}{%
\begin{tabular}{lcccccccccc}
\toprule
& \multicolumn{2}{c}{\textbf{HepG2}} & \multicolumn{2}{c}{\textbf{Jurkat}} & \multicolumn{2}{c}{\textbf{K562}} & \multicolumn{2}{c}{\textbf{RPE1}} & \\
\cmidrule(lr){2-3} \cmidrule(lr){4-5} \cmidrule(lr){6-7} \cmidrule(lr){8-9}
\textbf{Method} & \textbf{DE} & \textbf{DoC} & \textbf{DE} & \textbf{DoC} & \textbf{DE} & \textbf{DoC} & \textbf{DE} & \textbf{DoC} & \textbf{Avg} \\
\midrule

Greedy Decoding & 40.05 & 71.15 & 41.12 & 59.91 & 38.47 & 83.47 & 37.52 & 68.85 & 55.07 \\
Majority Voting & 43.56 & 74.51 & 44.63 & 61.69 & 41.49 & 85.10 & 38.88 & 72.31 & 57.77 \\

\midrule
\multicolumn{11}{c}{\textbf{Baselines}} \\
\midrule
BoN w/ Llama-PRM800K & 47.27 & 71.27 & 48.71 & 58.81 & 44.48 & 82.84 & 45.20 & 69.91 & 58.56 \\
BoN w/ Qwen-PRM800K & 44.78 & 73.53 & 45.65 & 61.77 & 41.79 & 84.87 & 42.03 & 72.58 & 58.38 \\
BoN w/ VersaPRM & 46.86 & 69.70 & 47.84 & 57.61 & 43.64 & 80.36 & 43.73 & 67.57 & 57.16 \\

\midrule
\multicolumn{11}{c}{\textbf{Our PRM}} \\
\midrule

BoN w/ Full Set & 61.84 & 79.89 & 62.33 & 69.53 & 58.10 & 88.32 & 53.06 & 74.95 & 68.50 \\
\bottomrule
\end{tabular}
}
\end{table*}

\begin{table*}[tbp!]
\centering
\caption{Performance comparison between PRMs trained on different annotation models and using the annotation model (Qwen3-32B) directly as a judge.}
\label{tab:perturbqa_4bannotations_comparison}
\resizebox{\textwidth}{!}{%
\begin{tabular}{lcccccccccc}
\toprule
& \multicolumn{2}{c}{\textbf{HepG2}} & \multicolumn{2}{c}{\textbf{Jurkat}} & \multicolumn{2}{c}{\textbf{K562}} & \multicolumn{2}{c}{\textbf{RPE1}} & \\
\cmidrule(lr){2-3} \cmidrule(lr){4-5} \cmidrule(lr){6-7} \cmidrule(lr){8-9}
\textbf{Best-of-N (N=8)} & \textbf{DE} & \textbf{DoC} & \textbf{DE} & \textbf{DoC} & \textbf{DE} & \textbf{DoC} & \textbf{DE} & \textbf{DoC} & \textbf{Avg} \\
\midrule
Ours w/ Full Set (Annotated by Qwen3-32B) & 61.84 & 79.89 & 62.33 & 69.53 & 58.10 & 88.32 & 53.06 & 74.95 & 68.50 \\
Ours w/ Full Set (Annotated by Qwen3-4B) & 62.04 & 80.11 & 62.86 & 70.07 & 57.41 & 89.32 & 51.42 & 75.86 & 68.63 \\
LLM-as-a-Judge (LF-Analogical, Qwen3-32B) & 36.43 & 64.76 & 38.55 & 53.77 & 33.30 & 82.54 & 32.37 & 64.07 & 50.72 \\
\bottomrule
\end{tabular}
}
\end{table*}

\begin{table*}[tbp!]
\centering
\caption{Performance Comparison of Policy Input Modes (IID vs OOD).}
\label{tab:policy_input_mode_detailed}
\resizebox{\textwidth}{!}{\begin{tabular}{lccccccccccccc}
\toprule
\multirow{2}{*}{\textbf{Method}} & \multicolumn{3}{c}{\textbf{HepG2}} & \multicolumn{3}{c}{\textbf{Jurkat}} & \multicolumn{3}{c}{\textbf{K562}} & \multicolumn{3}{c}{\textbf{RPE1}} & \multirow{2}{*}{\textbf{Avg}} \\
\cmidrule(lr){2-4} \cmidrule(lr){5-7} \cmidrule(lr){8-10} \cmidrule(lr){11-13}
& \textbf{DE} & \textbf{DoC} & \textbf{Avg} & \textbf{DE} & \textbf{DoC} & \textbf{Avg} & \textbf{DE} & \textbf{DoC} & \textbf{Avg} & \textbf{DE} & \textbf{DoC} & \textbf{Avg} & \\
\midrule
Only Query (Greedy) & 20.26 & 41.37 & 30.82 & 22.03 & 36.37 & 29.20 & 19.23 & 47.37 & 33.30 & 19.65 & 34.84 & 27.25 & 30.14 \\
Only Query (Majority Voting) & 22.40 & 43.45 & 32.93 & 25.98 & 37.01 & 31.50 & 23.08 & 45.68 & 34.38 & 20.69 & 35.86 & 28.28 & 31.77 \\
Only Query (BoN w/ Full Set) & 58.09 & 66.18 & 62.14 & 60.80 & 55.91 & 58.36 & 58.39 & 72.51 & 65.45 & 53.17 & 51.94 & 52.56 & 59.62 \\
\midrule
SUMMER (Greedy) & 40.05 & 71.15 & 55.60 & 41.12 & 59.91 & 50.52 & 38.47 & 83.47 & 60.97 & 37.52 & 68.85 & 53.19 & 55.07 \\
SUMMER (Majority Voting) & 43.56 & 74.51 & 59.04 & 44.63 & 61.69 & 53.16 & 41.49 & 85.10 & 63.30 & 38.88 & 72.31 & 55.60 & 57.77 \\
SUMMER (BoN w/ Full Set) & 61.84 & 79.89 & 70.87 & 62.33 & 69.53 & 65.93 & 58.10 & 88.32 & 73.21 & 53.06 & 74.95 & 64.01 & 68.50 \\
\bottomrule
\end{tabular}}
\end{table*}

In this section, we provide a comprehensive evaluation of the proposed \ours\ framework, expanding upon the main text results. We benchmark our method against established PRM baselines, analyze the impact of annotator model size, and rigorously test generalization across different policy models (\texttt{RBIO1}), input modes (\textit{Only Query/SUMMER}), and downstream tasks (BioReason KEGG).

\textbf{Comparison against PRM and Strong Annotator Baselines.} 
As detailed in Table~\ref{tab:perturbqa_baseline_prm_all}, our \textit{BoN w/ Full Set} consistently outperforms all baselines. Notably, on the OOD RPE1 split, our method achieves an average score of 68.50\%, significantly surpassing the strongest baseline (\textit{Llama-PRM800K}) which achieves 58.56\%. This demonstrates that \ours\ learns a reward function that is not merely memorizing cell-line specific patterns, but capturing the underlying biological reasoning structure, thereby enabling robust transfer to unseen biological contexts.
Additionally, a core premise of W2S Generalization is that the student model has the potential to compare or even outperform its supervisor. Table~\ref{tab:perturbqa_4bannotations_comparison} validates this hypothesis. The PRM trained via our framework (Student) achieves an average F1 of 68.50\% (using \texttt{Qwen3-32B} annotations) and 68.63\% (using \texttt{Qwen3-4B} annotations), both of which substantially outperform the teacher model making prediction itself (LLM-as-a-Judge (LF-Analogical), \texttt{Qwen3-32B}) which scores only 50.72\%. Furthermore, the marginal performance difference between using a 32B annotator versus a 4B annotator suggests that our aggregation mechanisms are robust to the quality of the individual weak supervisors, making the framework scalable even with smaller, more efficient annotators.

\textbf{Robustness Across Input Modes.} 
To ensure our gains are not an artifact of specific prompting strategies, we evaluate performance under two distinct input modes: \textit{Only Query} and \textit{SUMMER}. Table~\ref{tab:policy_input_mode_detailed} shows that while the \textit{SUMMER} setting generally yields higher absolute performance due to retrieved context, our PRM (BoN) consistently provides significant gains over Greedy decoding and Majority Voting in both settings. For instance, in the \textit{Only Query} mode on RPE1, our PRM improves the average score from 27.25\% (Greedy) to 52.56\%, confirming that the reward model captures intrinsic reasoning validity independent of input context augmentation.

\textbf{Transferability to Different Policy Models.} 
We further assess whether the PRM trained on \texttt{Qwen3-4B}-generated trajectories can generalize to score trajectories from a different policy model, specifically \texttt{RBIO1} \citep{istrate2025rbio1}. As shown in Table~\ref{tab:rbio1_reward_models}, our PRM effectively guides the \texttt{RBIO1} policy, improving the average RPE1 performance from 55.40\% (Greedy) to 73.66\% (BoN w/ Full Set). Crucially, the data-efficient variants (\textit{BoN w/ CellProfiler 100k} and \textit{BoN w/ ESM 100k}) achieve performance comparable to the full-set model (e.g., 72.40\% vs 73.66\% Avg), reinforcing our claim that a curated subset of high-consensus data is sufficient to learn a transferable reward function.

\textbf{Cross-Task Generalization on BioReason KEGG.} 
Additionally, we evaluate \ours\ on a distinct downstream task: the BioReason KEGG pathway prediction, which lies entirely outside the perturbation-based training distribution. Table~\ref{tab:detailed_bioreason} reveals a compelling finding: while the \textit{BoN w/ Full Set} model yields only marginal gains over greedy decoding, the data-efficient variants achieve even higher performance. This stands in distinct contrast to the policy transfer setting (Table~\ref{tab:rbio1_reward_models}), where the data-efficient variants merely achieved parity with the full-set model. This divergence indicates that while the full dataset's noise is tolerable when the downstream task distribution remains close to the source (as with \texttt{RBIO1}), it becomes actively detrimental in cross-task settings. By training indiscriminately on the full dataset, the model overfits to perturbation-specific artifacts; conversely, the \ours\ curated subsets distill a more fundamental biological reasoning signal that is free from task-specific noise, thereby enabling superior generalization to the target KEGG task.

\begin{table*}[tbp!]
\centering
\caption{Detailed Results of~\ours\ using RBIO1 policy model.}
\label{tab:rbio1_reward_models}
\resizebox{\textwidth}{!}{%
\begin{tabular}{lcccccccccccccc}
\toprule
\multirow{2}{*}{\textbf{Method}} & \multicolumn{3}{c}{\textbf{HepG2}} & \multicolumn{3}{c}{\textbf{Jurkat}} & \multicolumn{3}{c}{\textbf{K562}} & \multicolumn{3}{c}{\textbf{RPE1}} & \multirow{2}{*}{\textbf{Avg}} \\
\cmidrule(lr){2-4} \cmidrule(lr){5-7} \cmidrule(lr){8-10} \cmidrule(lr){11-13}
& \textbf{DE} & \textbf{DoC} & \textbf{Avg} & \textbf{DE} & \textbf{DoC} & \textbf{Avg} & \textbf{DE} & \textbf{DoC} & \textbf{Avg} & \textbf{DE} & \textbf{DoC} & \textbf{Avg} & \\
\midrule
Greedy Decoding & 74.72 & 36.04 & 55.38 & 76.01 & 30.56 & 53.28 & 69.79 & 42.88 & 56.34 & 78.33 & 34.89 & 56.61 & 55.40 \\
Majority Voting & 74.97 & 39.20 & 57.08 & 76.10 & 28.74 & 52.42 & 69.96 & 34.47 & 52.21 & 78.51 & 30.69 & 54.60 & 54.08 \\
\hdashline
Coverage (Upper Bound) & 76.18 & 89.92 & 83.05 & 77.02 & 90.94 & 83.98 & 70.87 & 91.12 & 80.99 & 79.18 & 91.99 & 85.59 & 83.40 \\
\midrule
\multicolumn{15}{c}{\textbf{Our PRM}} \\
\midrule

BoN w/ Full Set & 75.65 & 75.09 & 75.37 & 76.43 & 68.70 & 72.56 & 70.46 & 81.79 & 76.13 & 78.88 & 62.30 & 70.59 & 73.66 \\
BoN w/ CellProfiler (100k) & 75.44 & 73.48 & 74.46 & 76.42 & 64.85 & 70.63 & 70.46 & 81.26 & 75.86 & 78.82 & 58.04 & 68.43 & 72.34 \\
$\Delta$ vs Full Set &
\textcolor{blue}{-0.21} & \textcolor{blue}{-1.61} & \textcolor{blue}{-0.91} & \textcolor{blue}{-0.01} &
\textcolor{blue}{-3.85} & \textcolor{blue}{-1.93} & +0.00 & \textcolor{blue}{-0.53} &
\textcolor{blue}{-0.27} & \textcolor{blue}{-0.06} & \textcolor{blue}{-4.26} & \textcolor{blue}{-2.16} & \textcolor{blue}{-1.32} \\
BoN w/ ESM (100k) & 75.56 & 73.58 & 74.57 & 76.40 & 63.57 & 69.98 & 70.43 & 80.25 & 75.34 & 78.81 & 60.65 & 69.73 & 72.40 \\
$\Delta$ vs Full Set &
\textcolor{blue}{-0.09} & \textcolor{blue}{-1.51} & \textcolor{blue}{-0.80} & \textcolor{blue}{-0.03} &
\textcolor{blue}{-5.13} & \textcolor{blue}{-2.58} & \textcolor{blue}{-0.03} & \textcolor{blue}{-1.54} &
\textcolor{blue}{-0.79} & \textcolor{blue}{-0.07} & \textcolor{blue}{-1.65} & \textcolor{blue}{-0.86} & \textcolor{blue}{-1.26} \\

\bottomrule
\end{tabular}
}
\end{table*}

\begin{table*}[tbp!]
    \centering
    % 左边的表格 (Minipage 1)
    \begin{minipage}{0.6\textwidth}
        \centering
        \caption{Detailed Results of~\ours\ on BioReason KEGG Task.}
        \label{tab:detailed_bioreason}
        % 注意这里将 resizebox 的宽度改为 \linewidth，让它自适应 minipage 的宽度
        \resizebox{\linewidth}{!}{%
            \begin{tabular}{lcccc}
            \toprule
            \textbf{Method} & \textbf{Acc} & \textbf{Weighted P} & \textbf{Weighted R} & \textbf{Weighted F1} \\
            \midrule
            Greedy Decoding & 80.69 & 98.28 & 80.69 & 87.41 \\
            Majority Voting & 82.76 & 94.83 & 82.76 & 87.23 \\
            \hdashline
            Coverage (Upper Bound) & 95.86 & 99.66 & 95.86 & 97.43 \\
            \midrule
            \multicolumn{5}{c}{\textbf{Our PRM}} \\
            \midrule
            BoN w/ Full Set & 83.45 & 94.14 & 83.45 & 87.40 \\
            BoN w/ CellProfiler (100k) & 84.48 & 96.55 & 84.48 & 88.89 \\
            $\Delta$ vs Full Set & \textcolor{red}{+1.03} & \textcolor{red}{+2.41} & \textcolor{red}{+1.03} & \textcolor{red}{+1.49} \\
            BoN w/ ESM(100k) & 84.14 & 96.21 & 84.14 & 89.08 \\
            $\Delta$ vs Full Set & \textcolor{red}{+0.69} & \textcolor{red}{+2.07} & \textcolor{red}{+0.69} & \textcolor{red}{+1.68} \\
            \bottomrule
            \end{tabular}
        }
    \end{minipage}
    \hfill % 在两个表格之间填充空白，把它们推向两端
    % 右边的表格 (Minipage 2) - 复制的内容
    \begin{minipage}{0.38\textwidth}
        \centering
        \caption{Label-Masking Results of~\ours\ on BioReason KEGG Task.}
        \label{tab:label_masking_bioreason}
        \resizebox{\linewidth}{!}{%
            \begin{tabular}{lcccc}
\toprule
\textbf{Method} & \textbf{Acc} & \textbf{Weighted P} & \textbf{Weighted R} & \textbf{Weighted F1} \\
\midrule
\multicolumn{5}{c}{\textbf{BoN w/ CellProfiler (100k)}} \\
\midrule
Only P1 & 82.41 & 95.52 & 82.41 & 87.83 \\
Only P2 & 84.48 & 97.59 & 84.48 & 89.48 \\
Only P3 & 81.72 & 96.21 & 81.72 & 86.30 \\
Only P4 & 80.34 & 94.83 & 80.34 & 85.83 \\
\hdashline
P1 + P2 & 85.17 & 97.59 & 85.17 & 90.18 \\
P1 + P3 & 82.76 & 95.52 & 82.76 & 87.74 \\
P1 + P4 & 85.52 & 97.24 & 85.52 & 90.12 \\
\midrule
\multicolumn{5}{c}{\textbf{BoN w/ ESM (100k)}} \\
\midrule
Only P1 & 78.97 & 96.90 & 78.97 & 86.08 \\
Only P2 & 83.10 & 97.59 & 83.10 & 88.95 \\
Only P3 & 84.14 & 97.59 & 84.14 & 89.46 \\
Only P4 & 85.17 & 95.86 & 85.17 & 89.46 \\
\hdashline
P1 + P2 & 77.24 & 96.90 & 77.24 & 84.67 \\
P1 + P3 & 84.48 & 96.55 & 84.48 & 89.13 \\
P1 + P4 & 86.21 & 96.55 & 86.21 & 90.24 \\
\bottomrule
\end{tabular}
        }
    \end{minipage}
\end{table*}

\subsection{Interpretation of Cross-Task Generalization on BioReason KEGG}

Together with Table~\ref{tab:detailed_bioreason} and Table~\ref{tab:label_masking_bioreason}, these results show that our framework produces PRMs that meaningfully improve decoding quality under substantial domain shift. We informally interpret the cross-task generalization as follow:

\paragraph{Kernel Interpretation: Masking Reduces Curvature and Improves Transfer.}
A PRM effectively learns a scoring function $f(z_t)$ over reasoning steps in an induced feature space. Training on all P1--P4 steps forces $f$ to interpolate inconsistent labels, resulting in a high-curvature (large RKHS-norm) solution. Such functions generalize poorly under domain shift. DC-W2S behaves like a label-denoised kernel estimator: it trades sample size for smoothness, which is beneficial under distribution shift. Masking and subsampling remove conflicting constraints, yielding a smoother, lower-norm function that is provably more robust out-of-domain. This kernel-based view explains why smaller, filtered PRMs outperform the full-set model on the KEGG task.

\subsection{Full Results on Scaling Trend, Score Aggregation and Metrics}
\label{appendix:scaling_and_metrics}

This section details the impact of inference-time compute scaling, the sensitivity of performance to process-reward aggregation strategies, and a comprehensive evaluation across a diverse suite of classification metrics.

\textbf{Inference-Time Scaling Trends.}
In the main text, we reported Best-of-$N$ (BoN) results at a fixed budget of $N=8$. Here, we analyze the scaling laws of our PRM by varying $N$ from 2 to 8 across all four cell lines (HepG2, Jurkat, K562, and the OOD RPE1). As illustrated in Figure~\ref{fig:figure2_scaling_trend_hepg2_dual_task_cellprofiler} to Figure~\ref{fig:figure2_scaling_trend_rpe1_dual_task_cellprofiler}, we observe a consistent, monotonic improvement in performance as the sampling budget increases. 
\begin{itemize}
    \item \textbf{Monotonicity:} The positive correlation between sample coverage $N$ and downstream F1 scores indicates that the PRM provides a well-calibrated ranking signal; it successfully identifies higher-quality trajectories within the stochastic generations of the policy model.
    \item \textbf{OOD Scaling:} Crucially, this scaling trend holds even for the out-of-distribution RPE1 cell line (Figure~\ref{fig:figure2_scaling_trend_rpe1_dual_task_cellprofiler}). The \textit{BoN w/ CellProfiler} models continue to extract gains from increased compute, suggesting that the learned reward function generalizes its understanding of "reasoning quality" to unseen biological contexts, rather than merely memorizing training-set answers.
\end{itemize}

\textbf{Impact of Reward Aggregation Strategies.}
A PRM produces a sequence of step-wise scores $r(z_t)$, which must be aggregated into a single trajectory score $R(z)$ for ranking. We evaluate three common aggregation functions:
\begin{enumerate}
    \item \textbf{Last:} Using only the score of the final step, $R(z) = r(z_T)$.
    \item \textbf{Step Lowest:} Using the minimum score across the trajectory, $R(z) = \min_t r(z_t)$. This enforces a "logical bottleneck" assumption, where a chain is only as strong as its weakest step.
    \item \textbf{Average:} Using the mean of all step scores, $R(z) = \frac{1}{T}\sum_t r(z_t)$, which provides a smoothed estimate of trajectory quality.
    \item \textbf{Ensemble Majority Voting:} A meta-aggregation strategy that combines the decisions of the three aforementioned functions.
\end{enumerate}

 Figure~\ref{fig:figure4_aggregation_rpe1_dual_task} presents the ablation of these strategies on the OOD RPE1 task. We observe that while \textit{Step Lowest} provides a rigorous filter for logical validity, the \textit{Average} and \textit{Last} aggregation generally yields the most robust ranking performance for biological reasoning, balancing the penalty for incorrect steps with the overall coherence of the chain. Additionally, we observe that the \textit{Ensemble Majority Voting} strategy has the potential to provide the superior performance, as it effectively mitigates the inductive biases of individual methods by extracting their consensus.

\textbf{Comprehensive Metric Evaluation.}
While F1 score is our primary metric given the class imbalance in perturbation tasks (where valid effects are sparse), relying on a single metric can obscure trade-offs between precision and recall. To provide a holistic view of model performance, we present radar charts in Figure~\ref{fig:figure3_radar_plot_hepg2_dual_task} - Figure~\ref{fig:figure3_radar_plot_rpe1_dual_task} covering seven distinct metrics: Accuracy, Balanced Accuracy, Matthews Correlation Coefficient (MCC), Recall, True Negative Rate (TNR), Precision, and AUC-ROC.
\begin{itemize}
    \item \textbf{Holistic Improvement:} The radar charts demonstrate that DC-W2S (represented by the \textit{Full Set w/ Multi-label} contours) expands the performance envelope across all axes compared to Greedy Decoding and Majority Voting.
    \item \textbf{Robustness to Imbalance:} Notably, we observe significant gains in \textit{MCC} and \textit{Balanced Accuracy}. Since these metrics are resilient to class imbalance, their improvement confirms that our PRM is not merely exploiting prior class probabilities (e.g., predicting "no effect" frequently) but is genuinely improving the separability between successful and failed reasoning trajectories.
\end{itemize}

\begin{figure*}[h]
    \centering
    \includegraphics[width=\textwidth]{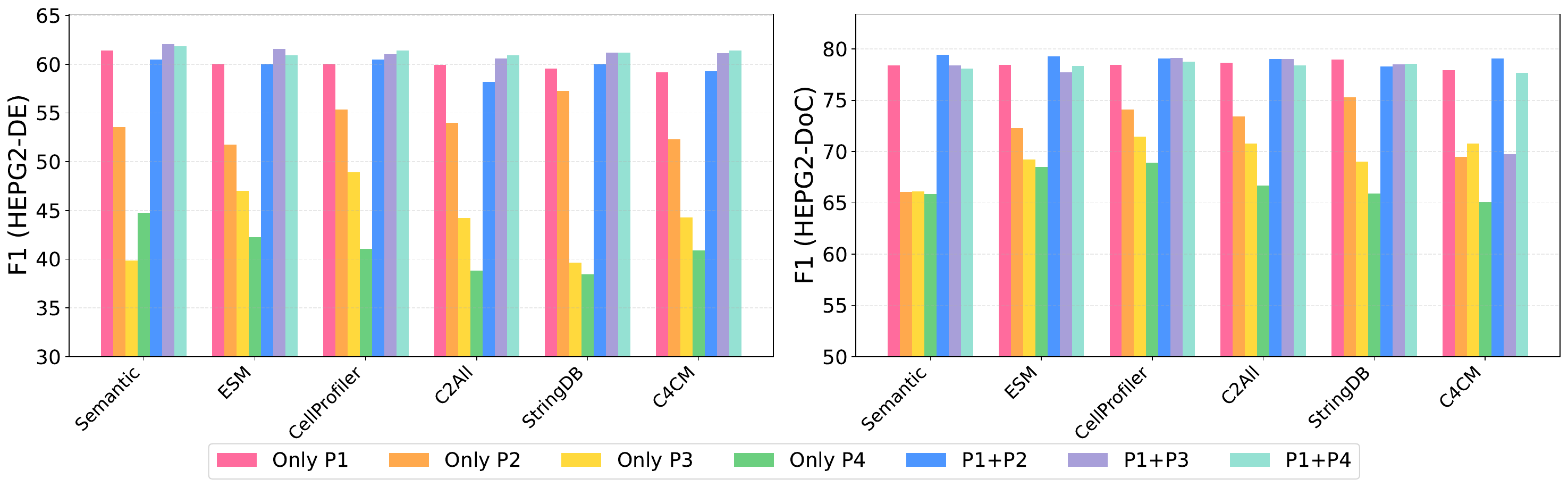}
    \caption{Performance Comparison of Different Gene Embedding Approaches (HEPG2).}
    \label{fig:figure5b_hepg2_combined}
\end{figure*}

\begin{figure*}[h]
    \centering
    \includegraphics[width=\textwidth]{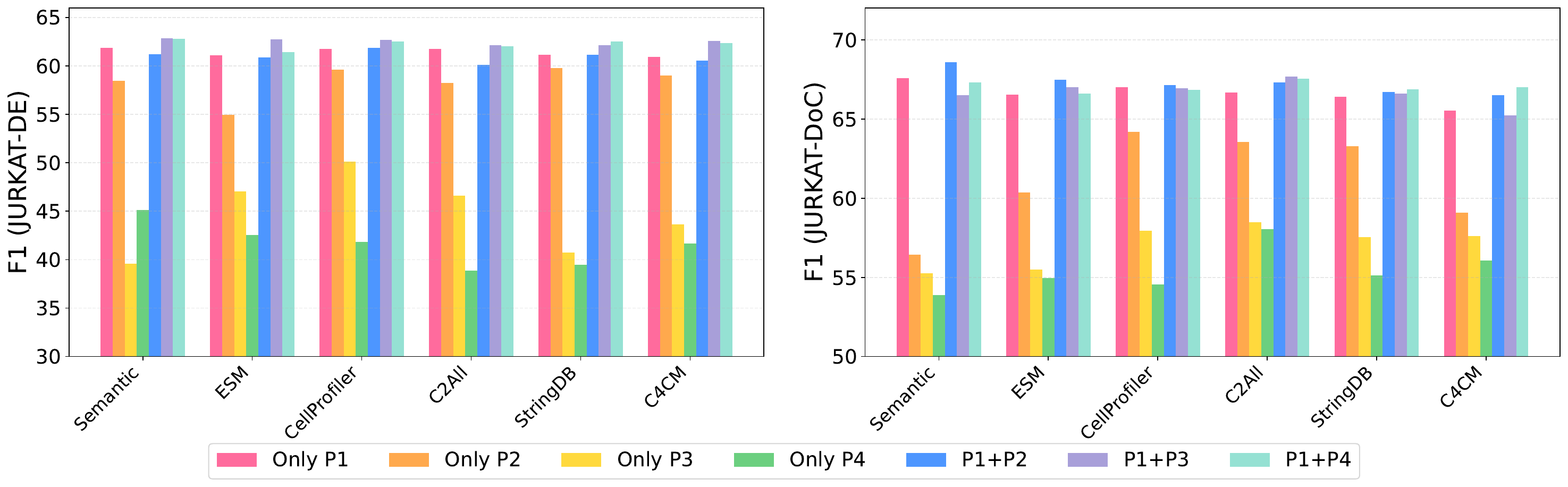}
    \caption{Performance Comparison of Different Gene Embedding Approaches (JURKAT).}
    \label{fig:figure5b_jurkat_combined}
\end{figure*}

\begin{figure*}[h]
    \centering
    \includegraphics[width=\textwidth]{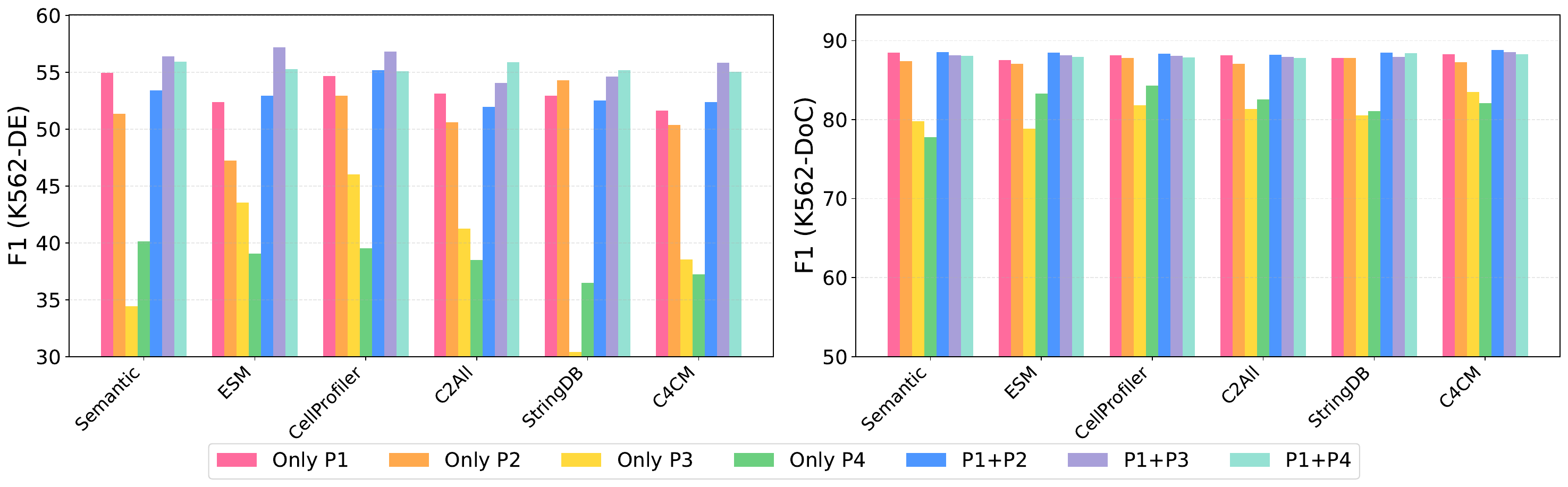}
    \caption{Performance Comparison of Different Gene Embedding Approaches (K562).}
    \label{fig:figure5b_k562_combined}
\end{figure*}

\begin{figure*}[h]
    \centering
    \includegraphics[width=\textwidth]{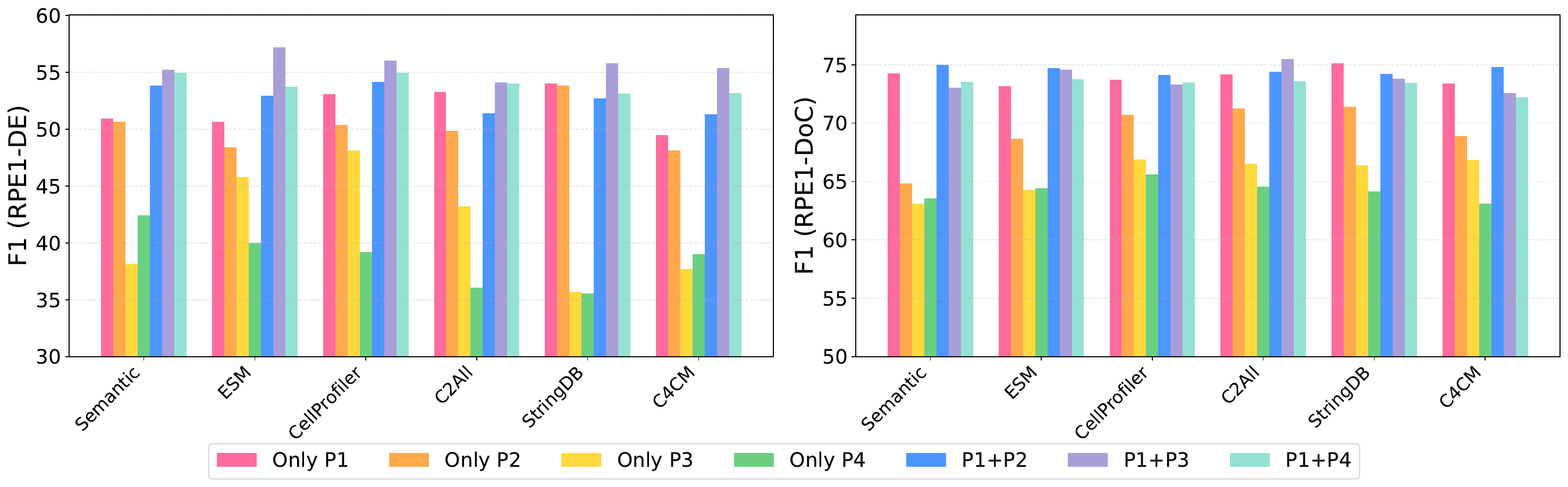}
    \caption{Performance Comparison of Different Gene Embedding Approaches (RPE1).}
    \label{fig:figure5b_rpe1_combined}
\end{figure*}

\begin{figure*}[t]
    \centering
    \begin{subfigure}{0.32\textwidth}
        \centering
        \includegraphics[width=\linewidth]{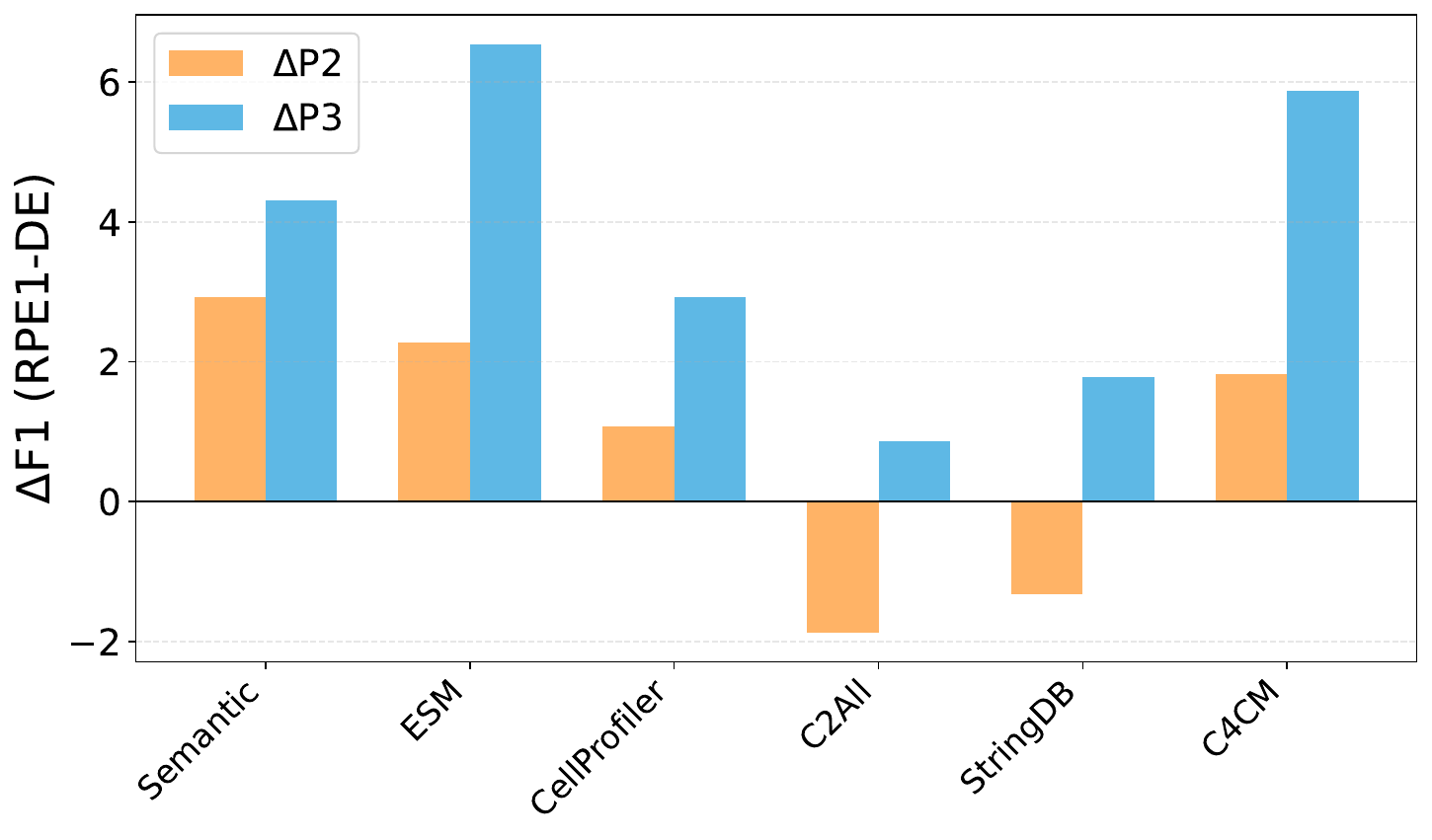}
        \caption{Gains over P1 on RPE1-DE.}
        \label{fig:delta_p3_p2}
    \end{subfigure}
    \hfill
    \begin{subfigure}{0.32\textwidth}
        \centering
        \includegraphics[width=\linewidth]{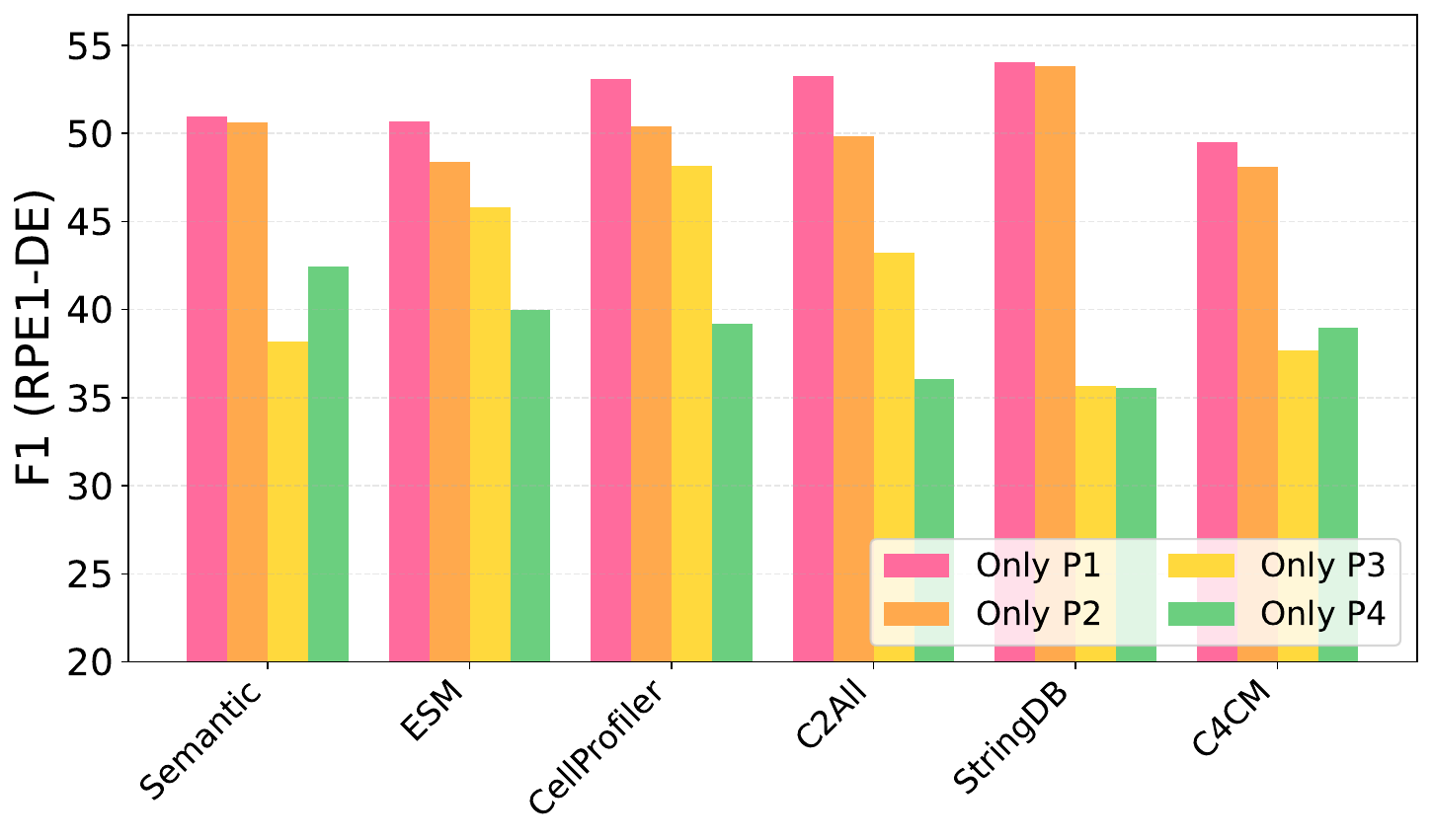}
        \caption{Only Pi performance comparison.}
        \label{fig:only_p3_embedding}
    \end{subfigure}
    \hfill
    \begin{subfigure}{0.32\textwidth}
        \centering
        \includegraphics[width=\linewidth]{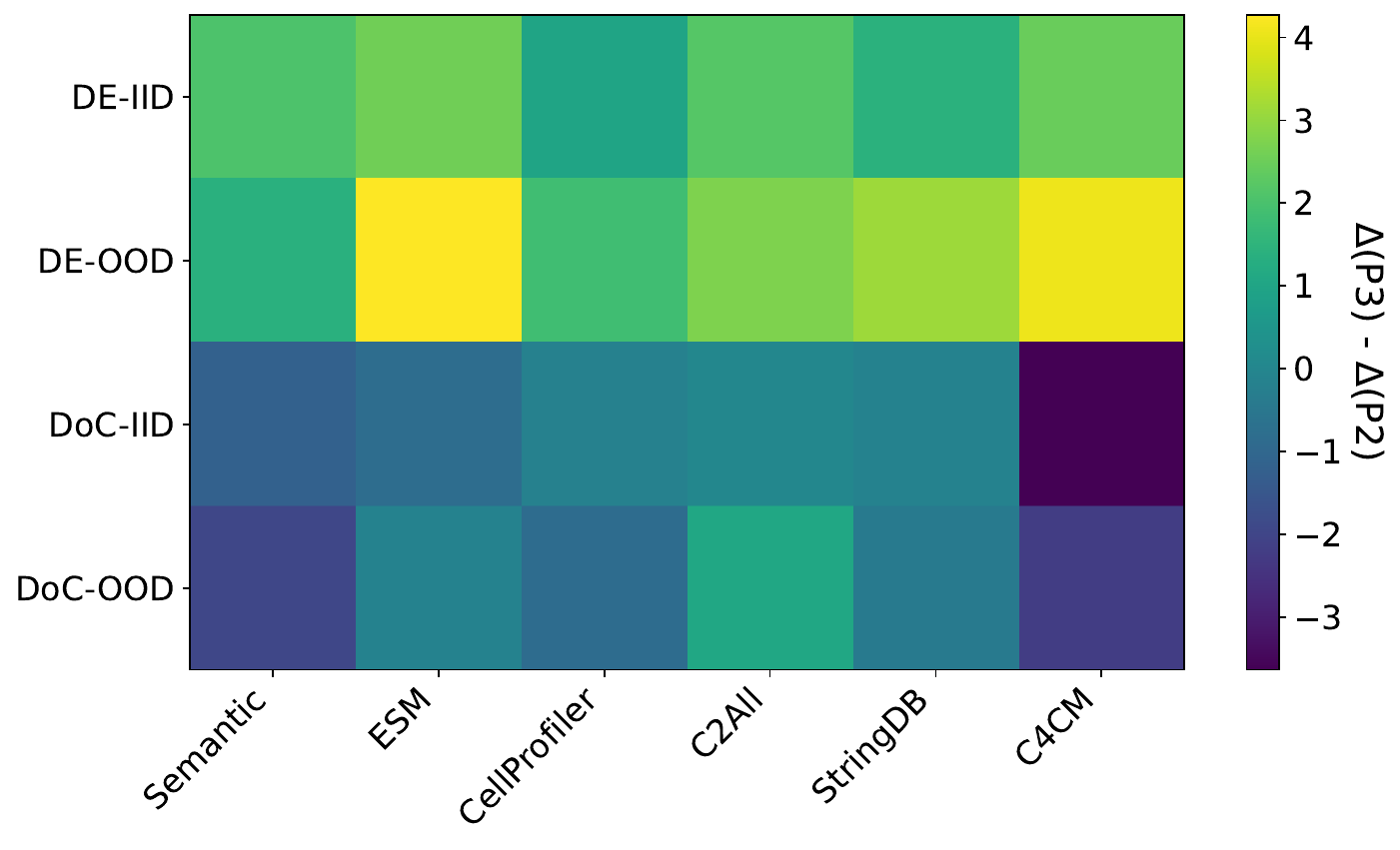}
        \caption{Regime dependence of P3 versus P2.}
        \label{fig:doc_vs_de_heatmap}
    \end{subfigure}

    \caption{\textbf{Comparison of label pattern across embeddings and regimes.}}
    \label{fig:three_panel}
\end{figure*}

\subsection{More Discussion on Label Pattern Effect}

We further analyze how weak step supervision and neighborhood geometry shape PRM performance by training PRMs under different label-pattern configurations (P1--P4) and comparing results across seven embedding spaces (Figure~\ref{fig:figure5b_hepg2_combined} - Figure~\ref{fig:figure5b_rpe1_combined}).

\paragraph{P1 provides stable anchor supervision.}
Across models and tasks, \texttt{Only P1} is consistently the strongest single-pattern regime (Figure~\ref{fig:figure5b_hepg2_combined} - Figure~\ref{fig:figure5b_rpe1_combined} and Figure~\ref{fig:only_p3_embedding}), supporting the view that steps with both high SC and high NC act as high-fidelity anchors; they form the foundation on which DC-W2S builds.

\paragraph{P2 provides strong but locally brittle supervision.}
P2 (high SC, low NC) is often useful in-distribution but less stable under shift: training on \texttt{Only P2} achieves 65.68\% (IID) and 60.56\% (OOD), a 5.12-point drop. When anchored with P1, P2 yields additional gains (OOD: 64.155\% vs.\ 63.40\% for \texttt{Only P1}) and reduces the IID$\rightarrow$OOD drop (4.54 points), suggesting that P1 anchors stabilize supervision from locally brittle regions while expanding coverage beyond the most reliable regime.

\paragraph{P3 provides neighborhood-consistent supervision that is more shift-robust.}
P3 (low SC, high NC) is weak in absolute terms when used alone, but exhibits notably smaller degradation under distribution shift: training on \texttt{Only P3} achieves 59.37\% (IID) and 57.52\% (OOD), only a 1.86-point drop. This pattern is consistent with the interpretation that P3 captures manifold-consistent cues that transfer across contexts, even when weak teachers disagree pointwise. Importantly, when combined with reliable anchors, P3 yields the strongest OOD gains among non-P1 regimes: \texttt{P1+P3} reaches 64.66\% on OOD (vs.\ 64.16\% for \texttt{P1+P2} and 63.40\% for \texttt{Only P1}), while also slightly reducing the IID$\rightarrow$OOD drop (4.45 points for \texttt{P1+P3} vs.\ 4.54 for \texttt{P1+P2}). 
Across biologically challenging settings (e.g., RPE1--DE), we consistently observe \texttt{P1+P3} $>$ \texttt{P1+P2}, suggesting that neighborhood reliability supplies complementary structure beyond teacher agreement and is especially valuable under shift (Figure~\ref{fig:delta_p3_p2}).

\paragraph{P4 provides low-value supervision and is largely non-transferable.}
P4 (low SC, low NC) performs poorly when used in isolation (\texttt{Only P4}: 55.04\% IID, 52.40\% OOD), consistent with the interpretation that these steps lack both pointwise agreement and neighborhood support. When combined with P1 anchors, P4 is largely neutral: \texttt{P1+P4} achieves 64.23\% OOD, comparable to \texttt{P1+P2} (64.16\%) and slightly above \texttt{Only P1} (63.40\%). Accordingly, for label efficiency, P4 is a natural first choice to mask, as it provides limited standalone signal while having little effect once anchored by P1.

\paragraph{In directional reasoning tasks (DoC), P1+P2 and P1+P3 perform similarly.}
For both IID and OOD DoC tasks, \texttt{P1+P2} is typically comparable to or slightly better than \texttt{P1+P3}. Unlike DE, DoC only requires predicting the \emph{sign} of a gene’s response, which aligns closely with causal-directional priors encoded in biological text and LLM pretraining (e.g., “activates”, “suppresses”, “inhibits”). Because directional reasoning relies less on subtle manifold geometry, neighborhood-consistency (P3) adds limited incremental value over high-consensus patterns such as P2. As a result, P3 is most valuable in fine-grained biological discrimination tasks (e.g., DE) or under distribution shift (e.g., RPE1), whereas P2 is often sufficient for simpler directional tasks like DoC.

\paragraph{Biologically grounded embeddings improve neighborhood reliability.}
The embedding choice directly affects the quality of NC by determining whether retrieved neighbors are biologically meaningful. Two consistent trends emerge. (i) In purely semantic space, \texttt{Only P4} can outperform \texttt{Only P3}, suggesting that text-based proximity may retrieve linguistically similar but biologically mismatched steps; in this case, neighborhood reliability estimates become noisy and P3 is less informative. (ii) By contrast, \texttt{Only P3} and \texttt{P1+P3} generally improve when neighborhoods are built from embeddings with stronger biological structure (e.g., ESM, CellProfiler). Overall, these results indicate that the DC-W2S mechanism is universal, but its gains are amplified when the embedding space provides a biologically coherent geometry for identifying reliable step neighbors.

\paragraph{When does embedding choice matter most?}
As shown in Figure~\ref{fig:three_panel}, embedding choice matters most in biologically demanding or distribution-shifted settings (especially OOD--DE), where models cannot rely on simple directional cues or teacher agreement and must instead use neighborhood geometry to propagate NC-driven structure. In these regimes, biologically grounded embeddings (e.g., ESM, C4CM, CellProfiler) yield larger gains from incorporating P3 than semantic or network-only embeddings. By contrast, in DoC or near-IID settings (e.g., DE-IID), P1 dominates and differences across embeddings are smaller.

\begin{figure*}[h]
    \centering
    \includegraphics[width=\textwidth]{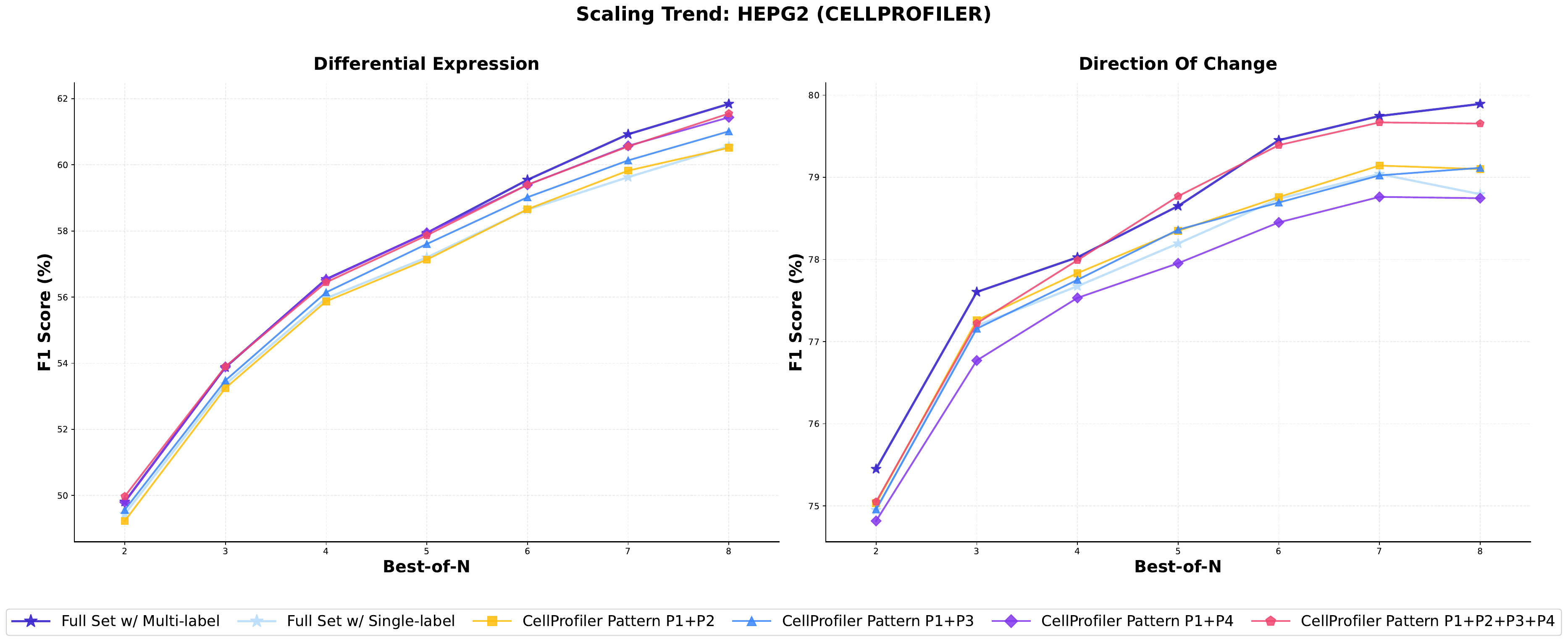}
    \caption{Scaling trend of BoN (w/ CellProfiler) on HEPG2.}
    \label{fig:figure2_scaling_trend_hepg2_dual_task_cellprofiler}
\end{figure*}

\begin{figure*}[h]
    \centering
    \includegraphics[width=\textwidth]{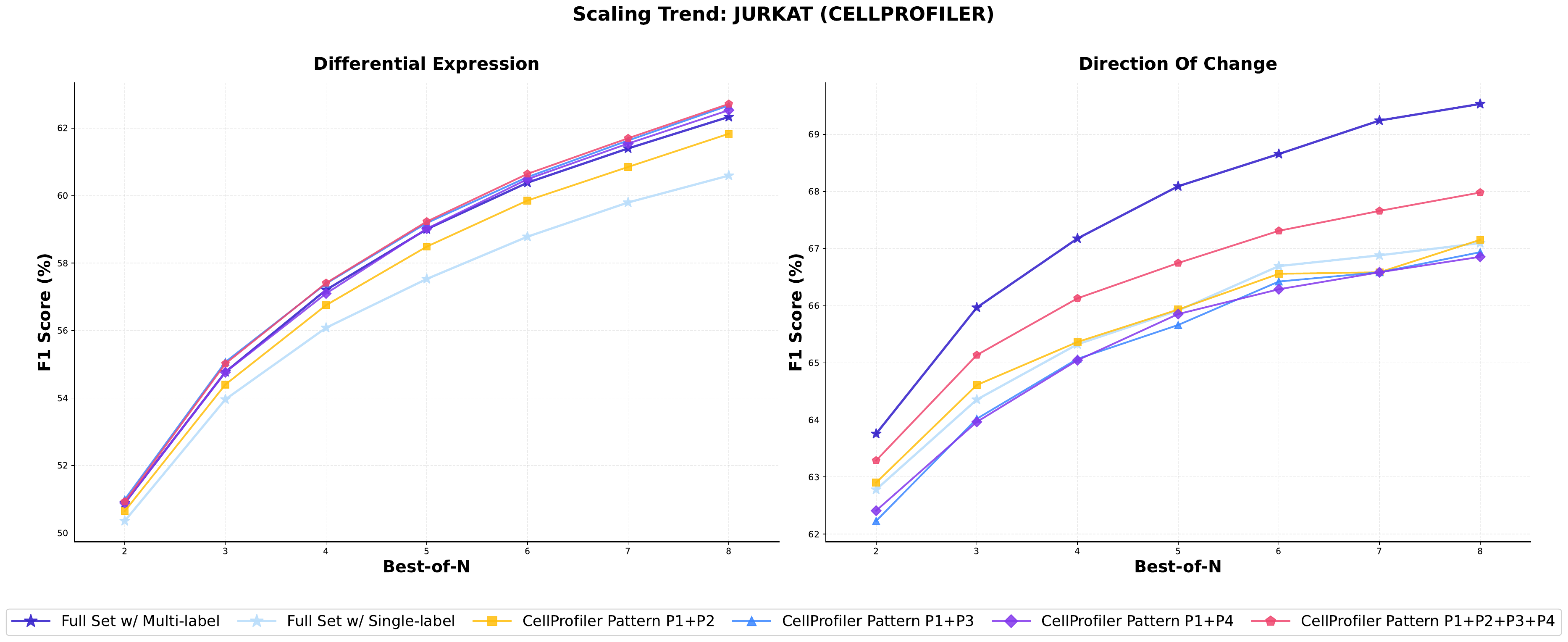}
    \caption{Scaling trend of BoN (w/ CellProfiler) on JURKAT.}
    \label{fig:figure2_scaling_trend_jurkat_dual_task_cellprofiler}
\end{figure*}

\begin{figure*}[h]
    \centering
    \includegraphics[width=\textwidth]{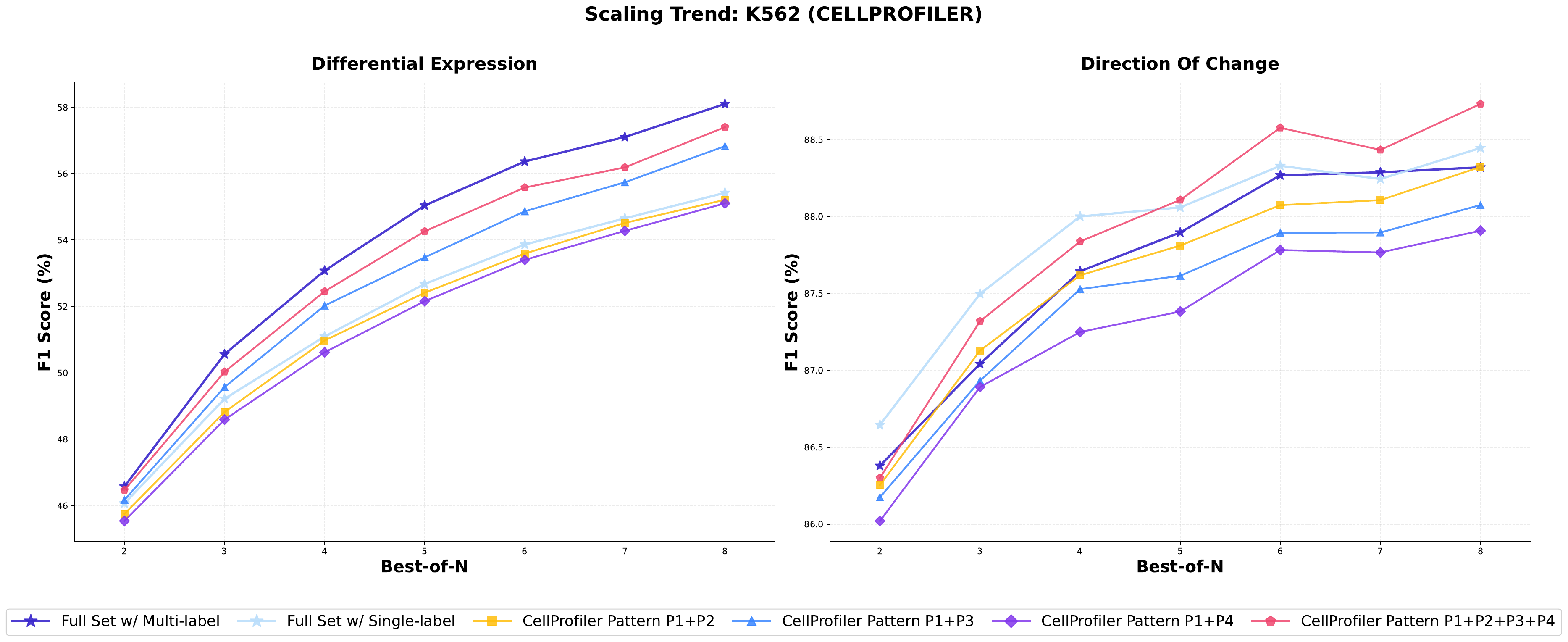}
    \caption{Scaling trend of BoN (w/ CellProfiler) on K562.}
    \label{fig:figure2_scaling_trend_k562_dual_task_cellprofiler}
\end{figure*}

\begin{figure*}[h]
    \centering
    \includegraphics[width=\textwidth]{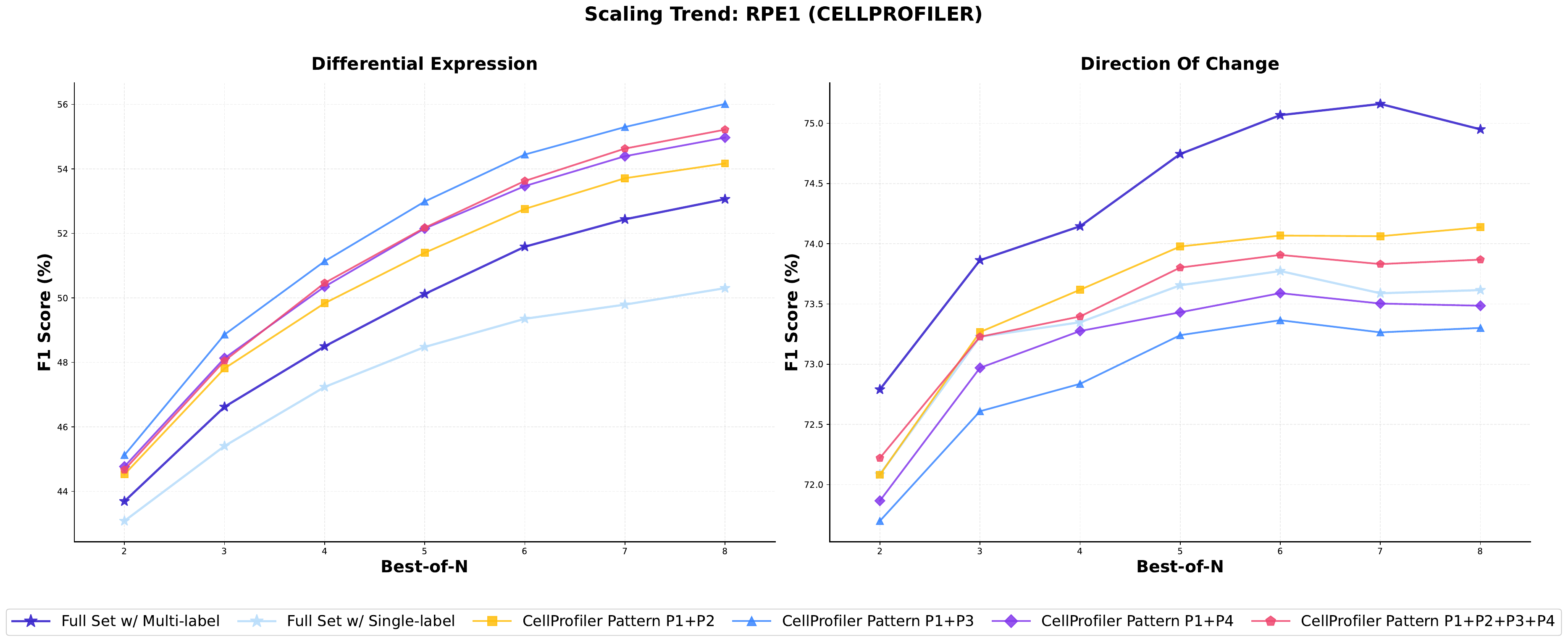}
    \caption{Scaling trend of BoN (w/ CellProfiler) on RPE1.}
    \label{fig:figure2_scaling_trend_rpe1_dual_task_cellprofiler}
\end{figure*}

\begin{figure*}[h]
    \centering
    \includegraphics[width=\textwidth]{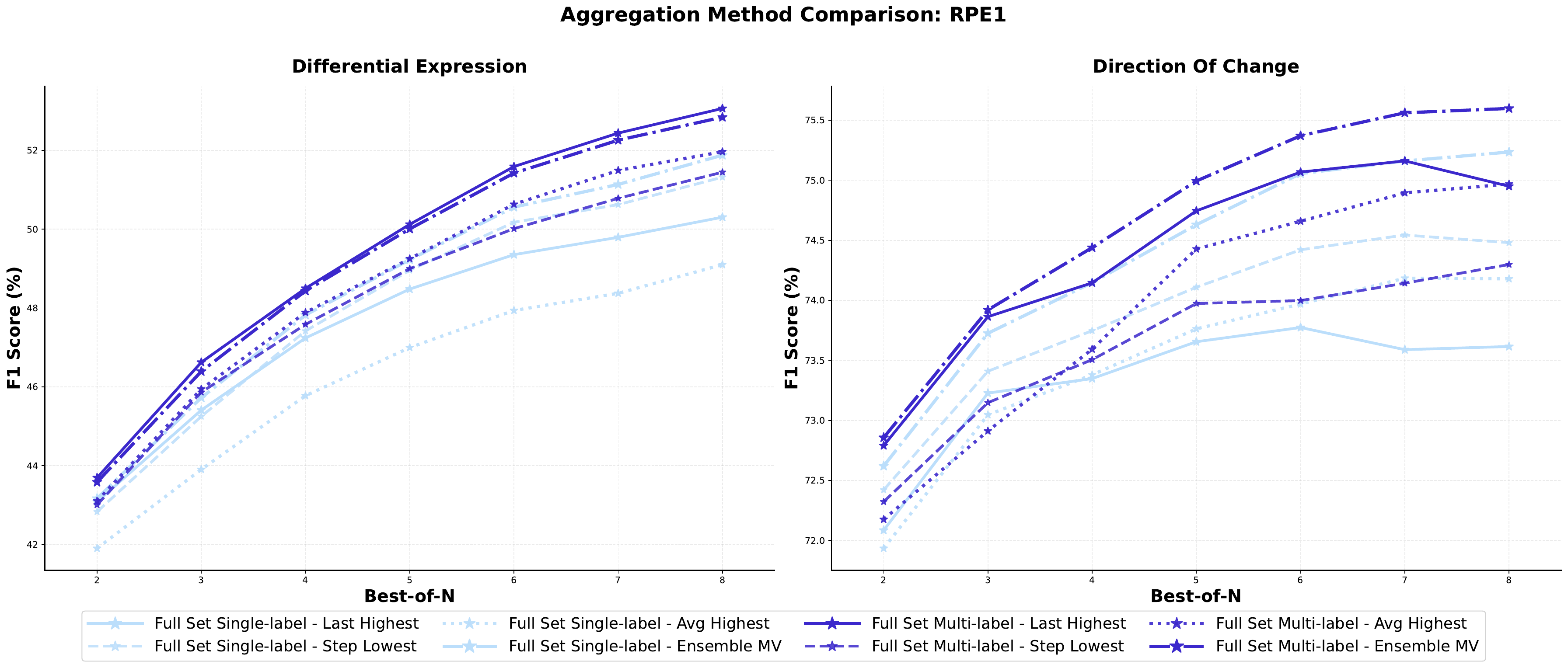}
    \caption{Aggregation method comparison on BoN (w/ Full Set) on RPE1(OOD).}
    \label{fig:figure4_aggregation_rpe1_dual_task}
\end{figure*}

\begin{figure*}[h]
    \centering
    \includegraphics[width=\textwidth]{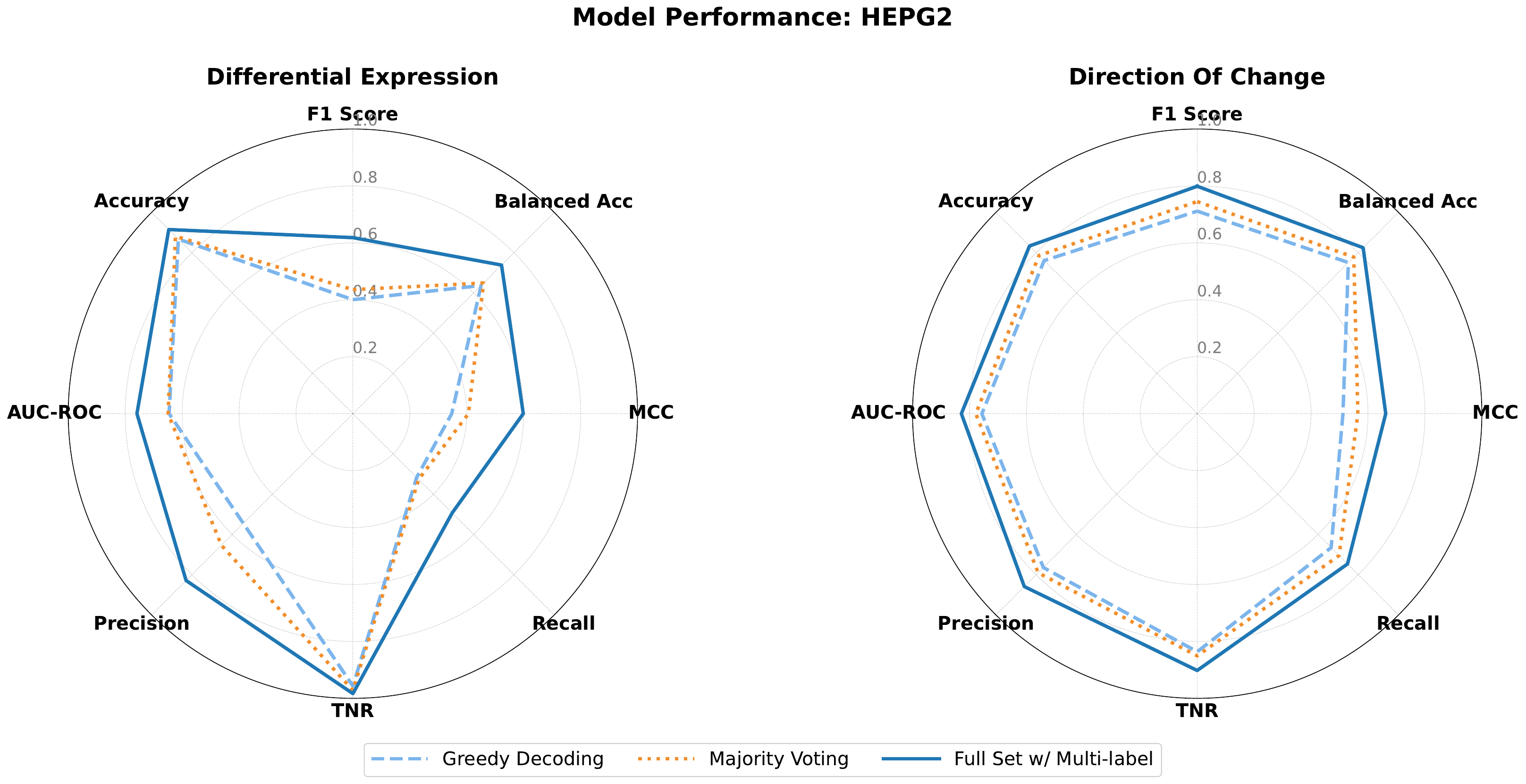}
    \caption{Results of additional metrics on HEPG2.}
    \label{fig:figure3_radar_plot_hepg2_dual_task}
\end{figure*}

\begin{figure*}[h]
    \centering
    \includegraphics[width=\textwidth]{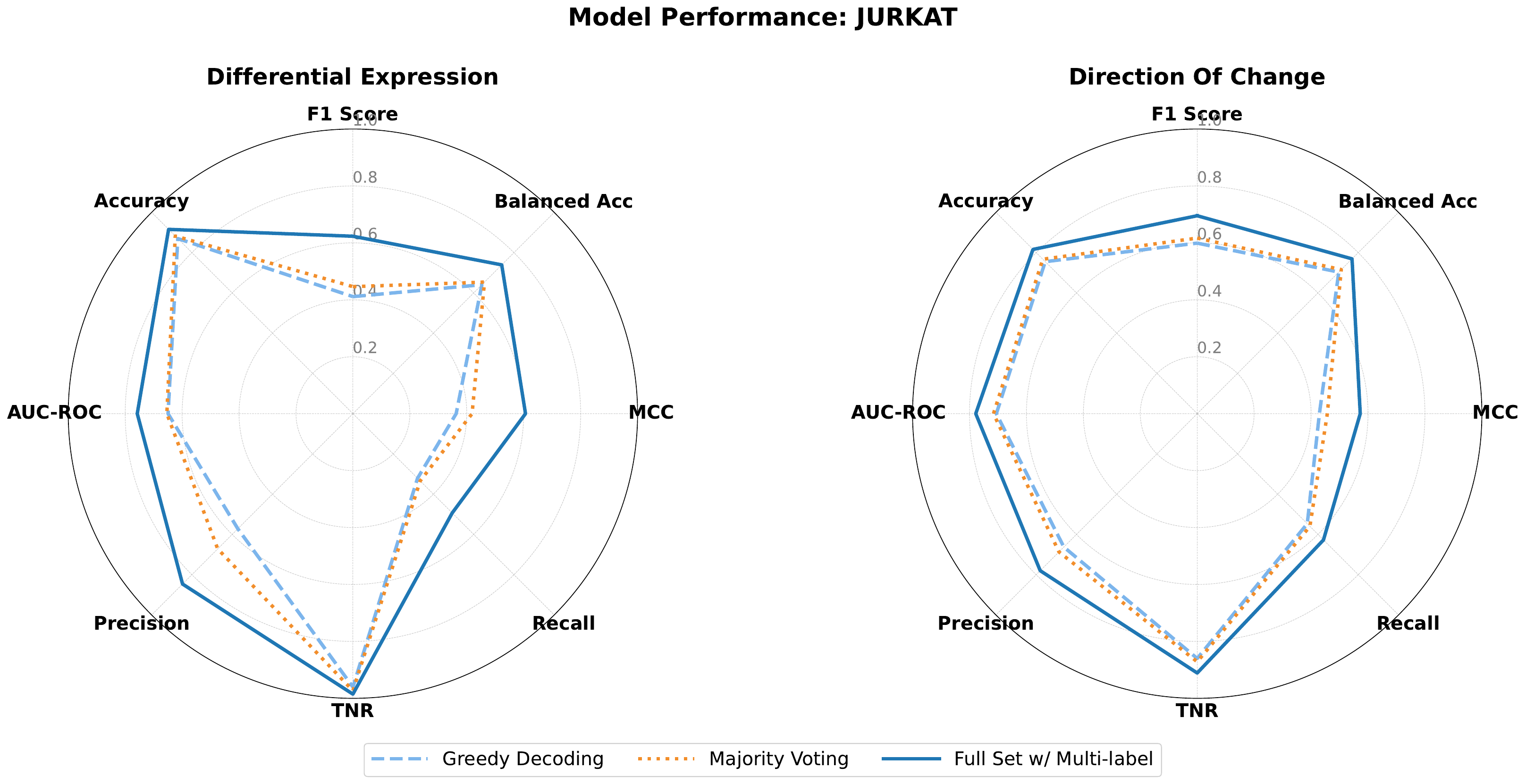}
    \caption{Results of additional metrics on JURKAT.}
    \label{fig:figure3_radar_plot_jurkat_dual_task}
\end{figure*}

\begin{figure*}[h]
    \centering
    \includegraphics[width=\textwidth]{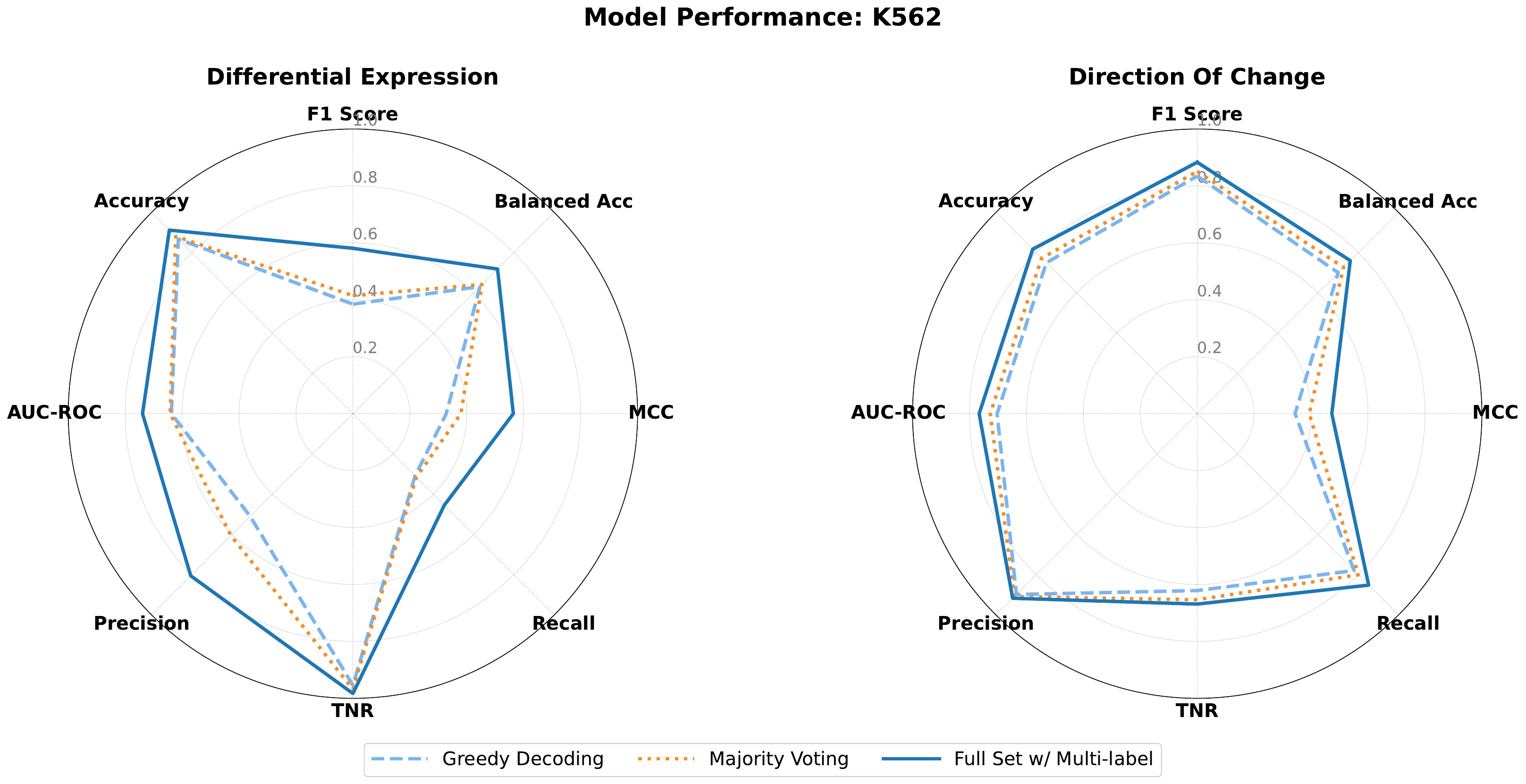}
    \caption{Results of additional metrics on K562.}
    \label{fig:figure3_radar_plot_k562_dual_task}
\end{figure*}

\begin{figure*}[h]
    \centering
    \includegraphics[width=\textwidth]{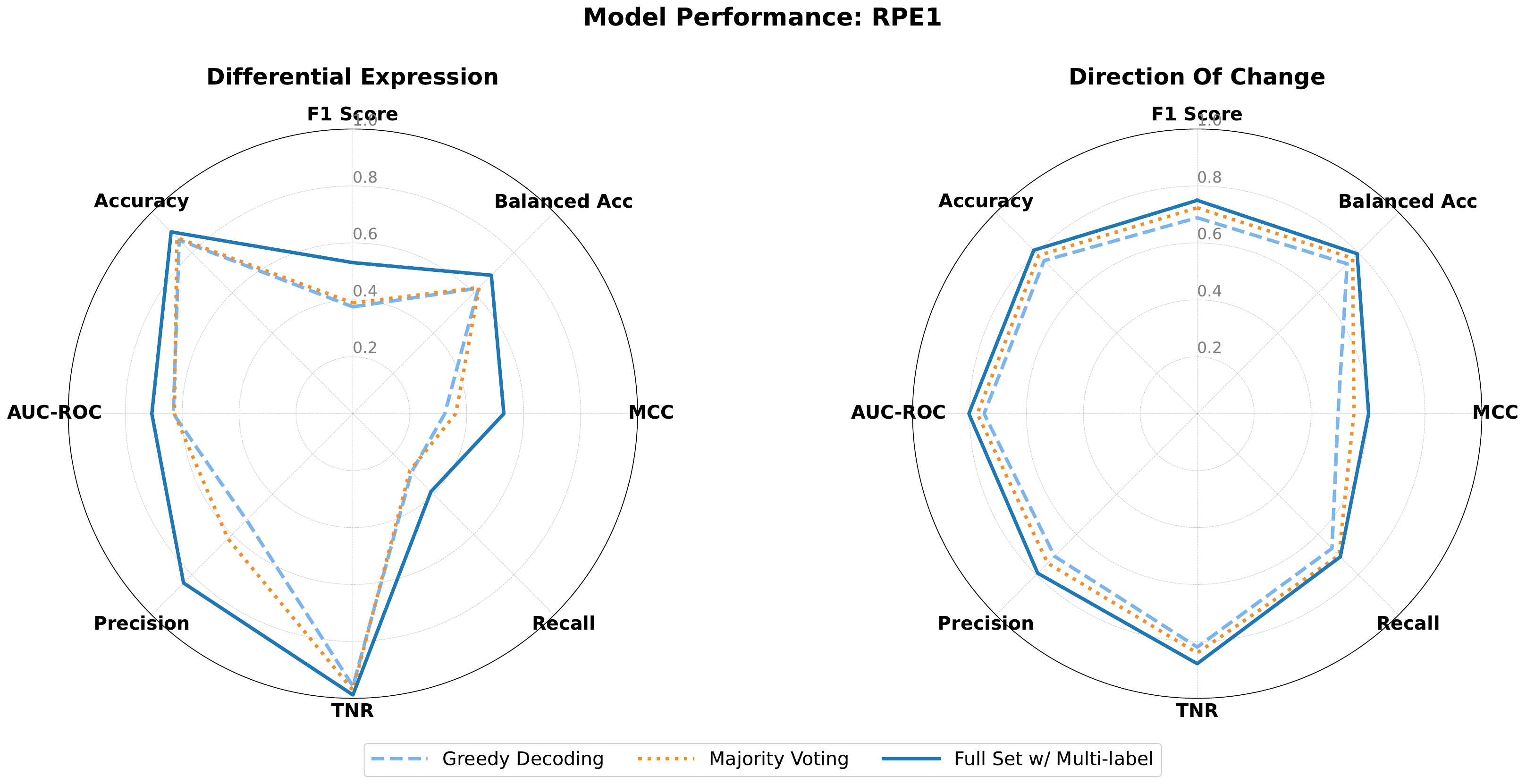}
    \caption{Results of additional metrics on RPE1.}
    \label{fig:figure3_radar_plot_rpe1_dual_task}
\end{figure*}

\clearpage
\section{Additional theory and proofs}\label{app:theory}

\subsection{Proof of Theorem~\ref{thm:softB1}}\label{app:proof_softB1}
In Section \ref{sec:theory}, we show that under calibrated reliability weights, soft robust expansion, and neighborhood robustness, the \emph{ground truth} step-label error of the thresholded PRM can be controlled by its weak-label error plus terms depending on the effective noise level and the mass of non-robust points.

\paragraph{Latent correctness indicator.}
Since the true step label $y(z_t)$ is unobserved, we assume there exists a latent binary variable such that
\[
h(z_t)=\mathbf{1}\{\tilde y_{\mathrm{agg}}(z_t)=y(z_t)\}\in\{0,1\}.
\]
By Assumption~\ref{ass:calibration}, $\Pr(h(z_t)=1\mid z_t)=p(z_t)$ and
$\Pr(h(z_t)=0\mid z_t)=1-p(z_t)$, hence $\alpha=\Pr(h(z_t)=0)$.
Moreover, for any measurable $U$,
\[
\mu_{\mathrm{good}}(U)=\mathbb{E}\!\left[p(z_t)\mathbf{1}\{z_t\in U\}\right]=\Pr(z_t\in U,\,h(z_t)=1),
\]
\[
\mu_{\mathrm{bad}}(U)=\mathbb{E}\!\left[(1-p(z_t))\mathbf{1}\{z_t\in U\}\right]=\Pr(z_t\in U,\,h(z_t)=0).
\]

\paragraph{Error and correctness sets.} Let the mistake sets of $r_\theta(z_t)$ on the true latent step labels and weak aggregated step labels be $M=\{z_t:\ f_\tau(z_t)\neq y(z_t)\}$ and $D=\{z_t:\ f_\tau(z_t)\neq \tilde y_{\mathrm{agg}}(z_t)\}$, respectively. Define the set of steps where the PRM decision matches the true step label,
\[
G=\mathcal{Z}\setminus M=\{z_t:\ f_\tau(z_t)=y(z_t)\}.
\]
The latent indicator $h(z_t)$ links these two notions: when $h(z_t)=1$, the weak step label equals the true step label,
so any true-label mistake must also be a weak-label disagreement,
\begin{equation}\label{eq:subset1}
M\cap\{h=1\}\subseteq D.
\end{equation}
Conversely, when $h(z_t)=0$, the weak label is incorrect, so any true-label correct prediction necessarily disagrees
with the weak label,
\begin{equation}\label{eq:subset2}
G\cap\{h=0\}\subseteq D.
\end{equation}

\paragraph{A large robust anchor set.}
We focus on steps that are (i) locally robust under $\mathcal{N}(\cdot)$, (ii) correctly predicted by
$r_\theta$ with respect to the true (latent) step label, and (iii) correctly weak-labeled by aggregation.
Define
\[
V \;=\; R_\eta(f_\tau)\cap G\cap\{h=1\}.
\]
Since $V\subseteq\{h=1\}$, we have $\mu_{\mathrm{good}}(V)=\Pr(V)$. We claim that
$\mu_{\mathrm{good}}(V)\ge q$. Indeed,
\begin{align*}
\Pr(V)
&= \Pr(h=1)
   - \Pr\!\Big(\big(\{f_\tau\neq y\}\cup\{z_t\notin R_\eta(f_\tau)\}\big)\cap\{h=1\}\Big) \\
&\ge (1-\alpha)
   - \Pr\!\Big(\big(\{f_\tau\neq y\}\cup\{z_t\notin R_\eta(f_\tau)\}\big)\Big)\\
&\ge (1-\alpha)
   - \Pr\!\Big(D\cup\{z_t\notin R_\eta(f_\tau)\}\Big),
\end{align*}
where the second inequality uses \eqref{eq:subset1}, i.e., $(\{f_\tau\neq y\}\cap\{h=1\})\subseteq D$.
By the theorem premise,
$\Pr\!\big(D \ \cup\ z_t\notin R_\eta(f_\tau)\big)\le 1-q-\alpha$,
so $\Pr(V)\ge q$, and therefore $\mu_{\mathrm{good}}(V)\ge q$. This allows us to invoke the robust expansion assumption.

\paragraph{Apply soft robust expansion.}
Since $V\subseteq R_\eta(f_\tau)$ and $\mu_{\mathrm{good}}(V)\ge q$, $(c,q,\eta)$-soft robust expansion yields
\begin{equation}\label{eq:expansion}
\mu_{\mathrm{bad}}(\mathcal{N}_{\mathrm{stab}}(V)) \ge c\,\mu_{\mathrm{good}}(V).
\end{equation}

\paragraph{A prediction-stable neighborhood.}
To convert \eqref{eq:expansion} into a lower bound on the mass of \emph{corrected} weak labels,
we restrict to neighbors that inherit the PRM decision. For any set $U$, define
\[
\mathcal{N}_{\mathrm{stab}}(U)
:=\big\{u' : \exists u\in U,\ u'\in \mathcal{N}(u)\ \text{and}\ f_\tau(u')=f_\tau(u)\big\}.
\]
Let
\[
B=G\cap\{h=0\}
=\{z_t:\ f_\tau(z_t)=y(z_t)\ \text{and}\ \tilde y_{\mathrm{agg}}(z_t)\neq y(z_t)\}
\]
be the set of steps where the weak label is incorrect but the PRM decision is correct.

\paragraph{Stable neighbors of anchors are ground truth correct.}
Take any $u'\in \mathcal{N}_{\mathrm{stab}}(V)$. By definition, there exists $u\in V$ such that
$f_\tau(u')=f_\tau(u)$.
Since $u\in V\subseteq G$, we have $f_\tau(u)=y(u)$.
Moreover, because $\mathcal{N}(\cdot)$ is label-consistent on robust anchors
(Remark~\ref{rem:neighbor} in the main text), we have $y(u')=y(u)$.
Therefore
\[
f_\tau(u')=f_\tau(u)=y(u)=y(u'),
\]
so $u'\in G$. Hence,
\begin{equation}\label{eq:stab_subset_G}
\mathcal{N}_{\mathrm{stab}}(V)\subseteq G.
\end{equation}
Therefore
\begin{eqnarray}
    \mu_{\mathrm{bad}}(G) &\ge& c\,\mu_{\mathrm{good}}(V) \nonumber \\
    &=& \frac{c}{1-\alpha} \Pr(V\cap\{h=1\}) \nonumber \\
    &=& \frac{c}{1-\alpha} \Pr(R_\eta(f_\tau)\cap G\cap\{h=1\}) \nonumber \\
    &=& \frac{c}{1-\alpha} \left( \Pr(R_\eta(f_\tau)\cap G) - \Pr(R_\eta(f_\tau)\cap G\cap\{h=0\}) \right) \nonumber
\end{eqnarray}

Thus
\begin{eqnarray}\label{eq:mubadg}
\frac{c}{1-\alpha} \Pr(R_\eta(f_\tau)\cap G) &\le& \mu_{\mathrm{bad}}(G) + \frac{c}{1-\alpha} \Pr(R_\eta(f_\tau)\cap G\cap\{h=0\}) \nonumber \\
&=& \mu_{\mathrm{bad}}(G) + \frac{c \alpha}{1-\alpha} \mu_{\mathrm{bad}}(R_\eta(f_\tau)\cap G) \nonumber \\
&=& \mu_{\mathrm{bad}}(G) + \frac{c \alpha}{1-\alpha} \left( \mu_{\mathrm{bad}}(G) - \mu_{\mathrm{bad}}(G \cap \bar{R}_\eta(f_\tau)) \right) \nonumber \\
&=& \mu_{\mathrm{bad}}(G) \left( 1 + \frac{c \alpha}{1-\alpha} \right) - \frac{c \alpha}{1-\alpha} \mu_{\mathrm{bad}}(G \cap \bar{R}_\eta(f_\tau)). \nonumber
\end{eqnarray}
Therefore
\begin{equation} \label{eq:mug}
    \mu_{\mathrm{bad}}(G) \ge \frac{c}{1-\alpha + c \alpha} \Pr(R_\eta(f_\tau)\cap G) + \frac{c \alpha}{1-\alpha + c \alpha} \mu_{\mathrm{bad}}(G \cap \bar{R}_\eta(f_\tau))
\end{equation}

\begin{lemma}
    \label{lem:d}
    $(M \cap \{h = 1\}) \cup (U \cap \{h = 0\}) \subset D.$
\end{lemma}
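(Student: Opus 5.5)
The lemma is the union of the two pointwise inclusions \eqref{eq:subset1} and \eqref{eq:subset2}, so I will argue it directly from the definitions of $M$, $D$, $G$ and the latent indicator $h$. The symbol $U$ is not defined in the statement. I read it as the correct-decision set $G=\mathcal{Z}\setminus M$, or more generally any measurable $U\subseteq G$, since that is the set entering \eqref{eq:mug} and the bound still to be derived. The inclusion for a subset $U\subseteq G$ follows from the one for $G$ itself, so it suffices to prove
\[
(M\cap\{h=1\})\cup(G\cap\{h=0\})\subseteq D .
\]

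The plan is an elementwise case split on $h(z_t)$. First, take $z_t\in M\cap\{h=1\}$. Then $f_\tau(z_t)\neq y(z_t)$ and $\tilde y_{\mathrm{agg}}(z_t)=y(z_t)$, hence $f_\tau(z_t)\neq \tilde y_{\mathrm{agg}}(z_t)$, i.e.\ $z_t\in D$. Second, take $z_t\in G\cap\{h=0\}$. Then $f_\tau(z_t)=y(z_t)$ and $\tilde y_{\mathrm{agg}}(z_t)\neq y(z_t)$, hence again $f_\tau(z_t)\neq\tilde y_{\mathrm{agg}}(z_t)$ and $z_t\in D$. Each piece lies in $D$, so their union does too. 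Only the second case uses that the labels are binary, and even there only trivially: the conclusion is just "equal to one, different from the other, so the two differ."

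I would also record the quantitative consequence needed in the remainder of the proof of Theorem~\ref{thm:softB1}. The two pieces are disjoint because $\{h=1\}\cap\{h=0\}=\varnothing$. Therefore
\[
\mathrm{err}_\tau(r_\theta,\tilde y_{\mathrm{agg}})=\Pr(D)\ \ge\ \Pr(M\cap\{h=1\})+\mu_{\mathrm{bad}}(G),
\]
using $\Pr(G\cap\{h=0\})=\mu_{\mathrm{bad}}(G)$ from the identities for $\mu_{\mathrm{good}}$ and $\mu_{\mathrm{bad}}$ established earlier. Writing $\Pr(M\cap\{h=1\})=\mathrm{err}_\tau(r_\theta,y)-\mu_{\mathrm{bad}}(M)$ and combining with the lower bound \eqref{eq:mug} on $\mu_{\mathrm{bad}}(G)$ should then give the stated bound after solving for $\mathrm{err}_\tau(r_\theta,y)$.

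There is no real obstacle in the set inclusion itself. The only point needing care is fixing the meaning of $U$ so the lemma matches its later use; if $U$ were meant as an arbitrary set, the claim would be false (e.g.\ take $U\subseteq M$ with $h=0$).
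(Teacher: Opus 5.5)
Your proof is correct and is essentially the paper's own: the paper's proof also takes $U$ to be $G$ (it argues directly about $(M\cap\{h=1\})\cup(G\cap\{h=0\})$) and uses the same two-case split on $h(z_t)$, so your reading of the undefined $U$ and your counterexample for arbitrary $U$ are both on point. One caution about your extra remark, which lies outside the lemma: the paper's \eqref{eq:mug} and the steps after the lemma implicitly use the conditional normalization $\Pr(G\cap\{h=0\})=\alpha\,\mu_{\mathrm{bad}}(G)$, so plugging \eqref{eq:mug} into your inequality with the joint identity $\Pr(G\cap\{h=0\})=\mu_{\mathrm{bad}}(G)$ gives $2c'$ where the theorem has $2c'\alpha$, whereas with the factor $\alpha$ restored your route via $\Pr(M\cap\{h=0\})+\Pr(G\cap\{h=0\})=\alpha$ does recover the stated bound.
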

\begin{proof}
Let \(z_t\) be an arbitrary element of
\((M \cap \{h = 1\}) \cup (G \cap \{h = 0\})\).
We consider two cases.

\textbf{Case 1:} \(z_t \in M \cap \{h = 1\}\).
Since \(z_t \in M\), by definition of \(M\) we have
\(f_\tau(z_t) \neq y(z_t)\).
Moreover, since \(z_t \in \{h = 1\}\),
\(y_{\mathrm{agg}}(z_t) = y(z_t)\).
Therefore,
$f_\tau(z_t) \neq y_{\mathrm{agg}}(z_t),$
which implies \(z_t \in D\).

\textbf{Case 2:} \(z_t \in G \cap \{h = 0\}\).
Since \(z_t \in G\), by definition of \(G\) we have
\(f_\tau(z_t) = y(z_t)\).
Since \(z_t \in \{h = 0\}\),
\(y_{\mathrm{agg}}(z_t) \neq y(z_t)\).
Hence,$
f_\tau(z_t) \neq y_{\mathrm{agg}}(z_t),$
and thus \(z_t \in D\).

In both cases, an arbitrary element of
\((M \cap \{h = 1\}) \cup (G \cap \{h = 0\})\)
belongs to \(D\).
\end{proof}

According to the \refeq{lem:d}, 
\begin{eqnarray}
    \Pr(D) &\ge& \Pr(M \cap \{h = 1\}) + \Pr(G \cap \{h = 0\}) \nonumber \\
    &=& (1-\alpha) \mu_{\mathrm{good}}(M)  + \alpha \mu_{\mathrm{bad}}(G) \nonumber
\end{eqnarray}

Using \eqref{eq:mubadg} into the above inequality yields
\begin{equation} \label{eq:boundd}
    \Pr(D) \ge (1-\alpha) \mu_{\mathrm{good}}(M) + c' \alpha \left( \Pr(R_\eta(f_\tau)\cap G) + \alpha \mu_{\mathrm{bad}}(G \cap \bar{R}_\eta(f_\tau)) \right)
\end{equation}

Using the definition of $G$, we have $\Pr(G) = \alpha \mu_{\mathrm{bad}} (G) + (1-\alpha) \mu_{\mathrm{good}}(G)$. Combining this with \eqref{eq:mubadg}, we obtain
\begin{equation}
    \Pr(G) \ge c' \alpha \left( \Pr(R_\eta(f_\tau)\cap G) + \alpha \mu_{\mathrm{bad}}(G \cap \bar{R}_\eta(f_\tau)) \right) + (1-\alpha) \mu_{\mathrm{good}}(G). \nonumber
\end{equation}
Therefore, 
\begin{equation} \label{eq:mugoodg}
    \mu_{\mathrm{good}}(G) \leq \frac{1}{1-\alpha} \left( \Pr(G) -c' \alpha \left( \Pr(R_\eta(f_\tau)\cap G) + \alpha \mu_{\mathrm{bad}}(G \cap \bar{R}_\eta(f_\tau)) \right) \right).
\end{equation}
Combining this with the definition of set $G$, we have
\begin{eqnarray}\label{eq:mugoodm}
    \mu_{\mathrm{good}}(M) &=&  1 - \mu_{\mathrm{good}}(G) \nonumber \\
    &\ge& 1 - \frac{1}{1-\alpha} \left( \Pr(G) -c' \alpha \left( \Pr(R_\eta(f_\tau)\cap G) + \alpha \mu_{\mathrm{bad}}(G \cap \bar{R}_\eta(f_\tau)) \right) \right).
\end{eqnarray}
Using \eqref{eq:mugoodm} in \eqref{eq:boundd}, we obtain
\begin{eqnarray}
    \Pr(D) &\ge& (1 - \alpha) - \Pr(G) + 2 c' \alpha \left( \Pr(R_\eta(f_\tau)\cap G) + \alpha \mu_{\mathrm{bad}}(G \cap \bar{R}_\eta(f_\tau)) \right) \nonumber\\
    &\ge& (1 - \alpha) - (1 - \Pr(M)) + 2 c' \alpha \left( 1 - \Pr(M \cup \bar{R}_\eta(f_\tau)) + \Pr(G \cap \bar{R}_\eta(f_\tau) \cap \{h=0\}) \right) \nonumber\\
    &=& \Pr(M) + \alpha (2c' - 1) + 2 c' \alpha \left( \Pr(G \cap \bar{R}_\eta(f_\tau) \cap \{h=0\}) - \Pr(M \cup \bar{R}_\eta(f_\tau))  \right) \nonumber\\
    &\ge& \Pr(M) + \alpha (2c' - 1) - 2 c' \alpha  \Pr(M \cup \bar{R}_\eta(f_\tau)) \nonumber \\
    &=& \Pr(M) + \alpha (2c' - 1) - 2 c' \alpha \left( \Pr(M) + \bar\rho_\eta \right) \nonumber \\
    &=& (1 - 2c'\alpha) \Pr(M) + \alpha (2c' - 1) - 2 c' \alpha \bar\rho_\eta.
\end{eqnarray}

\paragraph{Relate ground truth error to weak error and finish.} Since $\Pr(M) = \mathrm{err}_\tau(r_\theta,y)$
and $\Pr(D) = \mathrm{err}_\tau(r_\theta,\tilde y_{\mathrm{agg}})$ are ground truth and weak errors, respectively, we have

\begin{equation}
    \mathrm{err}_\tau(r_\theta,y)
\;\le\;
\frac{
\mathrm{err}_\tau(r_\theta,\tilde y_{\mathrm{agg}})
\;-\;\alpha(2c'-1)
\;+\;2c'\alpha\, \bar{\rho}_\eta
}{
1-2c'\alpha
}. \nonumber
\end{equation}

\subsection{Robust Generalization and Effective Complexity under DC-W2S}\label{sec:robust_gen}

So far, our analysis has focused on the bounding step-label error of $r_\theta$.
We now complement this with a standard generalization bound that quantifies
how neighborhood smoothness controls robust risk via a Rademacher-type complexity.

\paragraph{Robust risk and empirical robust risk.}
Let $\ell:[0,1]\to[0,1]$ be a $1$-Lipschitz loss in its first argument
(e.g., squared loss or logistic loss applied to $f$ and a weak label $\tilde y_\mathrm{agg}(z_t)$).
Given a neighborhood operator $\mathcal{N}(z_t)$, we define the \emph{robust loss}
at step $z_t$ as
\begin{equation}
    \tilde \ell(f_\tau,z_t)
    \;\triangleq\;
    \sup_{z' \in \mathrm{supp}(\mathcal{N}(z_t))}
    \ell\bigl(f_\tau(z'), \tilde y_\mathrm{agg}(z')\bigr),
\end{equation}
where $\tilde y_\mathrm{agg}(z')$ is the (possibly noisy) teacher supervision at $z'$.
Assume local label consistency inside neighborhoods
$\forall z_t,\ \forall z'\in\mathrm{supp}(\mathcal{N}(z_t)):\quad \tilde y_\mathrm{agg}(z') = \tilde y_\mathrm{agg}(z_t),$
so that $\ell(f_\tau(z'),\tilde y_\mathrm{agg}(z')) = \ell(f_\tau(z'),\tilde y_\mathrm{agg}(z_t))$,
i.e. we only adversarially perturb the step within its neighborhood but keep the supervision for that base step fixed.
The corresponding \emph{robust risk} under the data distribution $\mathcal{D}$
is
\begin{equation}
    R^{\mathrm{rob}}(f_\tau)
    \;\triangleq\;
    \mathbb{E}_{z_t \sim \mathcal{D}}\bigl[ \tilde \ell(f_\tau,z_t) \bigr],
\end{equation}
and given an i.i.d.\ sample $\{z_t^{(1)},\dots,z_t^{(n)}\}$ from $\mathcal{D}$, the
empirical robust risk is
\begin{equation}
    \hat R^{\mathrm{rob}}_n(f_\tau)
    \;\triangleq\;
    \frac{1}{n}\sum_{i=1}^n \tilde \ell(f_\tau,z_t^{(i)}).
\end{equation}

\paragraph{Robust Rademacher complexity.}
For a function class $\mathcal{F}$ of PRMs $f_\tau:\mathcal{Z}\to[0,1]$,
we define the \emph{robust empirical process} via
\begin{equation}
    \mathfrak{R}^{\mathrm{rob}}_n(\mathcal{F})
    \;\triangleq\;
    \mathbb{E}_{Z,\sigma}
    \Biggl[
        \sup_{f_\tau \in \mathcal{F}}
        \frac{1}{n}
        \sum_{i=1}^n \sigma_i \tilde \ell(f_\tau,z_t^{(i)})
    \Biggr],
\end{equation}
where $Z=\{z_t^{(1)},\dots,z_t^{(n)}\}$ with $z_t^{(i)} \sim \mathcal{D}$ and
$\sigma_1,\dots,\sigma_n$ are i.i.d.\ Rademacher variables.
Analogously, the \emph{standard} (non-robust) Rademacher complexity is
\begin{equation}
    \mathfrak{R}_n(\mathcal{F})
    \;\triangleq\;
    \mathbb{E}_{Z,\sigma}
    \Biggl[
        \sup_{f_\tau \in \mathcal{F}}
        \frac{1}{n}
        \sum_{i=1}^n \sigma_i \ell(f_\tau,z_t^{(i)})
    \Biggr].
\end{equation}

We now specialize to a class of \emph{pointwise robust PRMs}.
\begin{definition}[Pointwise $\eta$-robust hypothesis class]
\label{def:robust_class}
For $\eta \ge 0$, let $\mathcal{F}_\eta$ be the set of PRMs
$f_\tau:\mathcal{Z}\to[0,1]$ such that for all $z_t \in \mathcal{Z}$ and all
$z' \in \mathrm{supp}(\mathcal{N}(z_t))$,
\begin{equation}
    |f_\tau(z') - f_\tau(z_t)| \le \eta.
\end{equation}
\end{definition}
This pointwise robustness allows us to relate robust and non-robust complexities.
\begin{lemma}[Robust complexity is controlled by standard complexity]
\label{lem:robust_complexity_comparison}
Assume $\ell$ is $1$-Lipschitz in its first argument and bounded in $[0,1]$.
Then for the robust class $\mathcal{F}_\eta$ in
Definition~\ref{def:robust_class},
\begin{equation}
    \mathfrak{R}^{\mathrm{rob}}_n(\mathcal{F}_\eta)
    \;\le\;
    \mathfrak{R}_n(\mathcal{F}_\eta).
\end{equation}
\end{lemma}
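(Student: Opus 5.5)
The plan is to show that, on $\mathcal{F}_\eta$, the robust loss $\tilde\ell(f_\tau,z_t)$ agrees with (or is dominated in the right way by) the standard loss $\ell(f_\tau(z_t),\tilde y_{\mathrm{agg}}(z_t))$ sample by sample. The comparison of the two Rademacher processes then follows by monotonicity of the supremum and of the expectation in $(Z,\sigma)$.

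\textbf{Step 1: rewrite the robust loss.} By the local label-consistency assumption stated just before the definition of $\tilde\ell$, we have $\tilde y_{\mathrm{agg}}(z')=\tilde y_{\mathrm{agg}}(z_t)$ for every $z'\in\mathrm{supp}(\mathcal{N}(z_t))$. Hence
\[
\tilde\ell(f_\tau,z_t)=\sup_{z'\in\mathrm{supp}(\mathcal{N}(z_t))}\ell\bigl(f_\tau(z'),\tilde y_{\mathrm{agg}}(z_t)\bigr).
\]
So the only dependence on the neighborhood is through the values $f_\tau(z')$.

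\textbf{Step 2: exploit that $f_\tau$ is a thresholded decision rule.} Since $f_\tau=\mathbf{1}\{r_\theta\ge\tau\}$ takes values in $\{0,1\}$, the pointwise condition $|f_\tau(z')-f_\tau(z_t)|\le\eta$ with $\eta<1$ forces $f_\tau(z')=f_\tau(z_t)$ on all of $\mathrm{supp}(\mathcal{N}(z_t))$. This assumes $z_t$ lies in its own neighborhood support, or at least that the support is nonempty, which I would state explicitly. Therefore $\tilde\ell(f_\tau,z_t)=\ell(f_\tau(z_t),\tilde y_{\mathrm{agg}}(z_t))$ identically for every $f_\tau\in\mathcal{F}_\eta$ and every sample point. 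Substituting this into the definition of $\mathfrak{R}^{\mathrm{rob}}_n$ makes the two empirical processes coincide pathwise, so the claimed inequality holds with equality. The case $\eta\ge 1$ imposes no constraint and has to be handled separately.

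\textbf{Step 3 (main obstacle): real-valued $f$ or $\eta\ge 1$.} If one reads $f_\tau$ as real-valued in $[0,1]$, as Definition~\ref{def:robust_class} literally allows, Step 2 fails. The sup over the neighborhood then genuinely perturbs the loss, and I do not expect the exact inequality without further structure. For this case I would first try a Ledoux--Talagrand contraction, writing $\tilde\ell(f_\tau,z_t^{(i)})=\psi_i(f_\tau(z_t^{(i)}))$ for a $1$-Lipschitz $\psi_i$. This fails in general, because the maximizing neighbor depends on $f_\tau$ and not only on the scalar $f_\tau(z_t^{(i)})$.

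The fallback is the decomposition
\[
\tilde\ell(f_\tau,z_t)=\ell\bigl(f_\tau(z_t),\tilde y_{\mathrm{agg}}(z_t)\bigr)+\delta_f(z_t), \qquad \delta_f(z_t)\in[0,\eta],
\]
which uses the $1$-Lipschitz property of $\ell$ together with $|f_\tau(z')-f_\tau(z_t)|\le\eta$. Subadditivity of the supremum then gives $\mathfrak{R}^{\mathrm{rob}}_n(\mathcal{F}_\eta)\le\mathfrak{R}_n(\mathcal{F}_\eta)+\eta$, since $\mathbb{E}\sup_f\frac1n\sum_i\sigma_i\delta_f\le\eta$. So I would present the binary-decision case, where the statement holds exactly, as the proof of the lemma as stated, and record the additive $\eta$ slack as the honest general-valued version.
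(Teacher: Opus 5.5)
Your route differs from the paper's, and yours is the sound one. The paper's proof is essentially your Step~3 fallback. It uses $1$-Lipschitzness and $\eta$-robustness to get $\tilde\ell(f_\tau,z_t^{(i)})\le\ell(f_\tau,z_t^{(i)})+\eta$. It then asserts that for fixed $Z,\sigma$
\[
\sup_{f_\tau\in\mathcal F_\eta}\frac1n\sum_{i=1}^n\sigma_i\tilde\ell(f_\tau,z_t^{(i)})\le\sup_{f_\tau\in\mathcal F_\eta}\frac1n\sum_{i=1}^n\sigma_i\bigl(\ell(f_\tau,z_t^{(i)})+\eta\bigr),
\]
and discards $\frac{\eta}{n}\sum_i\sigma_i$ because $\mathbb E\sigma_i=0$. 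That step multiplies a one-sided inequality by $\sigma_i$, which is $-1$ half the time, so it is invalid. What the argument actually yields is your bound $\mathfrak R^{\mathrm{rob}}_n(\mathcal F_\eta)\le\mathfrak R_n(\mathcal F_\eta)+\eta$. (Your claim $\delta_f\ge0$ needs $z_t\in\mathrm{supp}(\mathcal N(z_t))$; without it you only have $|\delta_f|\le\eta$, which gives the same conclusion.)

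Your binary argument is correct. For $f_\tau\in\{0,1\}$ and $\eta<1$, every rule in $\mathcal F_\eta$ is constant on each nonempty neighborhood support. With label consistency this gives $\tilde\ell=\ell$ pathwise, so the inequality holds with equality. This yields a valid proof, but it also shows the lemma has no content in this regime: robust and standard losses coincide, and the Lipschitz hypothesis is never used.

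Your suspicion that nothing better than the $+\eta$ slack holds for real-valued $f_\tau$ is right. The binary case $\eta\ge1$ cannot be ``handled separately'' either; it has to be excluded. Take $\mathcal Z=\{a,b\}$, $\mathcal N(a)=\mathcal N(b)=\{a,b\}$, $\mathcal D$ uniform on $\mathcal Z$, $\tilde y_{\mathrm{agg}}\equiv0$ and $\ell(u,y)=|u-y|$. Let the ambient PRM family be $\{f_1,f_2\}$ with $f_1\equiv0$ and $f_2=\eta\,\mathbf 1\{z=b\}$, $0<\eta\le1$; both lie in $\mathcal F_\eta$. For $n=1$ each complexity equals $\mathbb E_z\frac12\bigl(\max_f g_f(z)-\min_f g_f(z)\bigr)$, with $g_f$ the standard or robust loss of $f$.
\begin{itemize}
\item The standard losses differ only at $z=b$, so $\mathfrak R_1(\mathcal F_\eta)=\eta/4$.
\item Since $\tilde\ell(f_2,\cdot)\equiv\eta$ and $\tilde\ell(f_1,\cdot)\equiv0$, we get $\mathfrak R^{\mathrm{rob}}_1(\mathcal F_\eta)=\eta/2$.
\end{itemize}
At $\eta=1$, or with $f_2=\mathbf 1\{z=b\}$ for any $\eta\ge1$, these are binary decision rules, so the same example covers that case.

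The paper's argument uses nothing about $\mathcal F_\eta$ beyond the pointwise robustness bound, so it would apply verbatim to this example. The defect is therefore in the paper's proof, not in your proposal. The statements you actually establish are the correct ones: equality for binary rules with $\eta<1$, and $\mathfrak R^{\mathrm{rob}}_n\le\mathfrak R_n+\eta$ in general. The latter would add $2\eta$ to the bound in Theorem~\ref{thm:robust_gen}.
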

Combining Lemma~\ref{lem:robust_complexity_comparison} with standard
Rademacher generalization bounds yields the derivation below.
\begin{theorem}[Robust generalization under neighborhood smoothness]
\label{thm:robust_gen}
Let $\mathcal{F}_\eta$ be the robust hypothesis class from
Definition~\ref{def:robust_class}, and assume $\ell$ is $1$-Lipschitz and
bounded in $[0,1]$.
Then for any $\delta \in (0,1)$, with probability at least $1-\delta$ over
the draw of an i.i.d.\ sample $Z=\{z_t^{(1)},\dots,z_t^{(n)}\}$ from $\mathcal{D}$,
every $f_\tau \in \mathcal{F}_\eta$ satisfies
\begin{equation}
    R^{\mathrm{rob}}(f_\tau)
    \;\le\;
    \hat R^{\mathrm{rob}}_n(f_\tau)
    \;+\;
    2\,\mathfrak{R}_n(\mathcal{F}_\eta)
    \;+\;
    3\sqrt{\frac{\log(2/\delta)}{2n}}.
\end{equation}
\end{theorem}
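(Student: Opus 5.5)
The plan is to follow the standard Rademacher uniform-convergence argument, applied to the robust loss class $\tilde{\mathcal{L}}_\eta:=\{z_t\mapsto \tilde\ell(f_\tau,z_t): f_\tau\in\mathcal{F}_\eta\}$. Then Lemma~\ref{lem:robust_complexity_comparison} replaces the robust complexity by the standard one.

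First, boundedness. Since $\ell$ takes values in $[0,1]$, the supremum over $z'\in\mathrm{supp}(\mathcal{N}(z_t))$ also lies in $[0,1]$. So every member of $\tilde{\mathcal{L}}_\eta$ is $[0,1]$-valued, provided it is measurable; I will assume the supremum is measurable, e.g.\ via finite $k$-NN supports.

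Second, I apply the usual two-step symmetrization argument to $\Phi(Z)=\sup_{f_\tau\in\mathcal{F}_\eta}\big(R^{\mathrm{rob}}(f_\tau)-\hat R^{\mathrm{rob}}_n(f_\tau)\big)$.
\begin{itemize}
\item Replacing one sample changes $\Phi$ by at most $1/n$, so McDiarmid's inequality gives $\Phi(Z)\le\mathbb{E}\Phi+\sqrt{\log(2/\delta)/(2n)}$ with probability at least $1-\delta/2$.
\item Ghost-sample symmetrization gives $\mathbb{E}\Phi\le 2\,\mathfrak{R}^{\mathrm{rob}}_n(\mathcal{F}_\eta)$.
\end{itemize}
Because $\mathfrak{R}^{\mathrm{rob}}_n$ is defined in the paper as an expectation over $Z$ (not the empirical version), this already suffices. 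The stated constant $3\sqrt{\log(2/\delta)/(2n)}$ comes from the standard theorem written with the empirical Rademacher complexity: a second McDiarmid step, costing $2\sqrt{\log(2/\delta)/(2n)}$, passes from the expected to the empirical complexity, and a union bound over the two events gives the total $1-\delta$. Either way, quoting that standard theorem (Mohri et al.\ / Bartlett–Mendelson) with the looser constant $3$ is valid, since using $3$ in place of $1$ only weakens the bound.

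Third, I invoke Lemma~\ref{lem:robust_complexity_comparison} to get $\mathfrak{R}^{\mathrm{rob}}_n(\mathcal{F}_\eta)\le\mathfrak{R}_n(\mathcal{F}_\eta)$. Substituting this gives the displayed inequality for every $f_\tau\in\mathcal{F}_\eta$ simultaneously.

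The only real obstacle is that lemma, which is stated but not proved above. If I needed to justify it, I would compare $\tilde\ell(f_\tau,z_t)$ with $\ell(f_\tau(z_t),\tilde y_{\mathrm{agg}}(z_t))$. Local label consistency fixes the label, and the $1$-Lipschitz property together with pointwise $\eta$-robustness give $0\le\tilde\ell-\ell\le\eta$. A clean inequality $\le\mathfrak{R}_n$ without an additive $\eta$ term is not immediate from contraction, however. If it fails, I would fall back to $\mathfrak{R}^{\mathrm{rob}}_n\le\mathfrak{R}_n+\eta$ and note the resulting $2\eta$ slack. For the theorem as stated, though, I take the lemma as given, and the remainder is routine.
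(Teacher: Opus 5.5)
Your reduction is the same as the paper's. You treat $z_t\mapsto\tilde\ell(f_\tau,z_t)$ as a $[0,1]$-valued class, quote the standard Rademacher bound with constant $3$, and substitute Lemma~\ref{lem:robust_complexity_comparison}; your remarks on the constants are correct. The gap is the step you take on faith, and your suspicion about it is justified. The paper proves the lemma by showing $\tilde\ell(f_\tau,z_t^{(i)})\le\ell(f_\tau,z_t^{(i)})+\eta$ and then asserting $\sup_{f_\tau}\frac{1}{n}\sum_i\sigma_i\tilde\ell(f_\tau,z_t^{(i)})\le\sup_{f_\tau}\frac{1}{n}\sum_i\sigma_i\bigl(\ell(f_\tau,z_t^{(i)})+\eta\bigr)$, so that $\frac{\eta}{n}\sum_i\sigma_i$ averages out. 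That multiplies a pointwise inequality by $\sigma_i$, which reverses it whenever $\sigma_i=-1$. The valid termwise bound is $\sigma_i\tilde\ell_i\le\sigma_i\ell_i+\eta$, from $|\tilde\ell_i-\ell_i|\le\eta$, and it gives exactly your fallback $\mathfrak{R}^{\mathrm{rob}}_n(\mathcal{F}_\eta)\le\mathfrak{R}_n(\mathcal{F}_\eta)+\eta$. (Your one-sided claim $\tilde\ell_i\ge\ell_i$ needs $z_t\in\mathrm{supp}(\mathcal{N}(z_t))$, which is not assumed, but the two-sided bound is enough.)

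Neither the lemma nor the theorem as stated can be rescued. Let $\mathcal{D}$ be uniform on $a_1,\dots,a_m$, and give each $a_j$ a private point $c_j$ with $\mathcal{N}(a_j)=\{c_j\}$ and $\mathcal{N}(c_j)=\{a_j\}$. Set all labels to $0$ with $\ell(u,0)=u$, and take the $2^m$ maps $f_S$ with $f_S(a_j)=0$ and $f_S(c_j)=\eta\,\mathbf{1}[j\in S]$; all of them are $\eta$-robust. Every standard loss $\ell(f_S,a_j)$ is $0$, so $\mathfrak{R}_n=0$. Yet if $S$ is the set of unsampled indices, then $\hat R^{\mathrm{rob}}_n(f_S)=0$ and $R^{\mathrm{rob}}(f_S)\ge\eta(1-n/m)$ for every sample. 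For $m\ge 2n$ and large $n$ this exceeds $3\sqrt{\log(2/\delta)/(2n)}$.

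If $\mathcal{F}_\eta$ is read literally as \emph{all} $\eta$-robust maps, add a hub $h$ with $\mathcal{N}(h)=\{a_1,\dots,a_m\}$. It forces $|f(a_j)-f(h)|\le\eta$, so $\mathfrak{R}_n(\mathcal{F}_\eta)\le\eta+\frac{1}{2\sqrt{n}}$. Now take $\eta\le\frac{1}{4}$, $f(h)=\frac{1}{2}$, $f(a_j)=\frac{1}{2}\pm\eta$ and $f(c_j)=\frac{1}{2}\pm2\eta$, with the minus sign exactly on sampled indices. This gives a gap of $4\eta(1-n/m)$ for every sample, which exceeds the claimed $2\eta+o(1)$ when $m\gg n$.

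The rest of the reduction is valid, so these examples also refute the lemma. Taking the lemma as given is therefore a genuine gap, and the paper shares it. What your argument actually proves is the corrected bound $R^{\mathrm{rob}}(f_\tau)\le\hat R^{\mathrm{rob}}_n(f_\tau)+2\mathfrak{R}_n(\mathcal{F}_\eta)+2\eta+3\sqrt{\log(2/\delta)/(2n)}$, and that is what you should state and prove.
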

Theorem~\ref{thm:robust_gen} 
shows that neighborhood-robust training does not increase the complexity term beyond the standard Rademacher
complexity of the robust function class $\mathcal{F}_\eta$.
In our setting, $\mathcal{F}_\eta$ consists of student PRMs whose step-level
scores are smooth with respect to the semantic neighborhoods induced by
process reasoning.
As a result, enforcing pointwise robustness, for example through DC-W2S’s
dual-consensus masking and neighborhood-aware training, controls robust
generalization error without incurring an extra complexity penalty.%, while
% still benefiting from the stability guarantees in
% Corollary~\ref{cor:ranking_stability}
% and Theorem~\ref{thm:bon_correctness}.

The robust generalization bound in 
Theorem~\ref{thm:robust_gen} captures how
neighborhood smoothness controls robust risk.
We now connect this to how DC-W2S selects and masks training steps,
and show that focusing on P1/P3-type regions reduces an effective variance
term in the bound.

\section{Proofs}

Proof of Lemma~\ref{lem:robust_complexity_comparison}.
\begin{proof}
Fix a sample $Z=\{z_t^{(1)},\dots,z_t^{(n)}\}$ and Rademacher variables $\sigma = (\sigma_1,\dots,\sigma_n)$.

We have assumed local label consistency inside neighborhoods
$$\forall z_t,\ \forall z'\in\mathrm{supp}(\mathcal{N}(z_t)):\quad \tilde y_\mathrm{agg}(z') = \tilde y_\mathrm{agg}(z_t),$$
so that $\ell(f_\tau(z'),\tilde y_\mathrm{agg}(z')) = \ell(f_\tau(z'),\tilde y_\mathrm{agg}(z_t))$.

For brevity, write
$$\ell(f_\tau,z_t^{(i)}) \;\triangleq\; \ell\bigl(f_\tau(z_t^{(i)}), \tilde y_\mathrm{agg}(z_t^{(i)})\bigr),\qquad \tilde\ell(f_\tau,z_t^{(i)}) \;\triangleq\; \sup_{z' \in \mathrm{supp}(\mathcal{N}(z_t^{(i)}))} \ell\bigl(f_\tau(z'), \tilde y_\mathrm{agg}(z_t^{(i)})\bigr),$$
where we have used the above assumption of fixed supervision per base point.

Let $f_\tau \in \mathcal{F}_\eta$.
By pointwise $\eta$-robustness (Definition \ref{def:robust_class}), for any $z'\in \mathrm{supp}(\mathcal{N}(z_t^{(i)}))$,
$\bigl|f_\tau(z') - f_\tau(z_t^{(i)})\bigr| \;\le\; \eta$.

Since $\ell(\cdot, \tilde y_\mathrm{agg}(z_t^{(i)}))$ is $1$-Lipschitz in its first argument and bounded in $[0,1]$, we have
$$\ell\bigl(f_\tau(z'), \tilde y_\mathrm{agg}(z_t^{(i)})\bigr) \;\le\; \ell\bigl(f_\tau(z_t^{(i)}), \tilde y_\mathrm{agg}(z_t^{(i)})\bigr) + \bigl|f_\tau(z') - f_\tau(z_t^{(i)})\bigr| \;\le\; \ell(f_\tau,z_t^{(i)}) + \eta.$$
Taking the supremum over $z'\in \mathcal{N}(z_t^{(i)})$ yields
$$\tilde\ell(f_\tau,z_t^{(i)}) = \sup_{z' \in \mathrm{supp}(\mathcal{N}(z_t^{(i)}))} \ell\bigl(f_\tau(z'), \tilde y_\mathrm{agg}(z_t^{(i)})\bigr) \;\le\; \ell(f_\tau,z_t^{(i)}) + \eta.$$

Therefore, for any fixed $Z,\sigma$,
\begin{align*}
\sup_{f_\tau \in \mathcal{F}\eta}\frac{1}{n} \sum_{i=1}^n \sigma_i \tilde\ell(f_\tau,z_t^{(i)})
&\le \sup_{f_\tau \in \mathcal{F}\eta} \frac{1}{n} \sum_{i=1}^n \sigma_i \bigl(\ell(f_\tau,z_t^{(i)}) + \eta\bigr) \\
&= \sup_{f_\tau \in \mathcal{F}_\eta} \frac{1}{n} \sum_{i=1}^n \sigma_i \ell(f_\tau,z_t^{(i)}) \;+\; \frac{\eta}{n} \sum_{i=1}^n \sigma_i. 
\end{align*} 
Now take expectation over the random Rademacher variables \(\sigma\) and the sample \(Z\). Recall that \(\mathbb{E}[\sigma_i] = 0\) for each \(i\), and the \(\sigma_i\) are independent of \(Z\). 
Thus 
\[ \mathbb{E}_{Z,\sigma}\Bigl[\tfrac{\eta}{n} \sum_{i=1}^n \sigma_i\Bigr] = \tfrac{\eta}{n} \sum_{i=1}^n \mathbb{E}_\sigma[\sigma_i] = 0.\]
Hence
\begin{align*}
\mathfrak{R}^{\mathrm{rob}}_n(\mathcal{F}_\eta)&=\mathbb{E}_{Z,\sigma}\Biggl[\sup_{f_\tau \in \mathcal{F}_\eta}\frac{1}{n} \sum_{i=1}^n \sigma_i \tilde\ell(f_\tau,z_t^{(i)})\Biggr] \\
&\le \mathbb{E}_{Z,\sigma} \Biggl[ \sup_{f_\tau \in \mathcal{F}_\eta} \frac{1}{n} \sum_{i=1}^n \sigma_i \ell(f_\tau,z_t^{(i)}) \Biggr] = \mathfrak{R}_n(\mathcal{F}_\eta). 
\end{align*} 
This proves the lemma. 
\end{proof}

Proof of Theorem~\ref{thm:robust_gen}.

\begin{proof}
Define the robust loss class 
\[ \mathcal{G} \;\triangleq\; \bigl\{ g_f : \mathcal{Z} \to [0,1] \ \big|\ g_f(z_t) = \tilde\ell(f_\tau,z_t),\ f_\tau \in \mathcal{F}_\eta \bigr\}.\]
By definition,
$$R^{\mathrm{rob}}(f) = \mathbb{E}_{z_t\sim\mathcal{D}}[g_f(z_t)], \qquad \hat R^{\mathrm{rob}}_n(f_\tau) = \frac{1}{n}\sum_{i=1}^n g_f(z_t^{(i)}).$$

Let $\mathfrak{R}_n(\mathcal{G})$ denote the (standard) empirical Rademacher complexity of $\mathcal{G}$, i.e.
$$\mathfrak{R}_n(\mathcal{G}) = \mathbb{E}_{Z,\sigma} \Biggl[ \sup_{g \in \mathcal{G}} \frac{1}{n} \sum_{i=1}^n \sigma_i g(z_t^{(i)}) \Biggr] = \mathfrak{R}^{\mathrm{rob}}_n(\mathcal{F}_\eta),$$
by expanding $g$ as $g_f$ and using the definition of robust complexity.

We now invoke a standard Rademacher generalization bound for bounded real-valued functions following e.g., \citet{bartlett2002rademacher}. 

For any class $\mathcal{G}$ of functions mapping into $[0,1]$, and for any $\delta \in (0,1)$, with probability at least $1-\delta$ over the sample
$$\forall g \in \mathcal{G}:\quad \mathbb{E}[g(z_t)] \;\le\; \frac{1}{n}\sum_{i=1}^n g(z_t^{(i)}) \;+\; 2\,\mathfrak{R}_n(\mathcal{G}) \;+\; 3\sqrt{\frac{\log(2/\delta)}{2n}}.$$

Applying this to $\mathcal{G}$ and substituting $g = g_f$, we obtain that with probability at least
$$\forall f_\tau \in \mathcal{F}_\eta:\quad R^{\mathrm{rob}}(f_\tau) \;\le\; \hat R^{\mathrm{rob}}_n(f_\tau) \;+\; 2\,\mathfrak{R}_n(\mathcal{G}) \;+\; 3\sqrt{\frac{\log(2/\delta)}{2n}}.$$

Finally, noting that
$$\mathfrak{R}_n(\mathcal{G}) = \mathfrak{R}^{\mathrm{rob}}_n(\mathcal{F}_\eta) \;\le\; \mathfrak{R}_n(\mathcal{F}_\eta)$$
by Lemma~\ref{lem:robust_complexity_comparison}.

Plugging this inequality into the bound above gives
$$\forall f_\tau \in \mathcal{F}_\eta:\quad R^{\mathrm{rob}}(f_\tau) \;\le\; \hat R^{\mathrm{rob}}_n(f_\tau) \;+\; 2\,\mathfrak{R}_n(\mathcal{F}_\eta) \;+\; 3\sqrt{\frac{\log(2/\delta)}{2n}},$$
as claimed. 
This proves the Theorem.
\end{proof}

% \section{You \emph{can} have an appendix here.}

% You can have as much text here as you want. The main body must be at most $8$ pages long.
% For the final version, one more page can be added.
% If you want, you can use an appendix like this one.  

% The $\mathtt{\backslash onecolumn}$ command above can be kept in place if you prefer a one-column appendix, or can be removed if you prefer a two-column appendix.  Apart from this possible change, the style (font size, spacing, margins, page numbering, etc.) should be kept the same as the main body.
%%%%%%%%%%%%%%%%%%%%%%%%%%%%%%%%%%%%%%%%%%%%%%%%%%%%%%%%%%%%%%%%%%%%%%%%%%%%%%%
%%%%%%%%%%%%%%%%%%%%%%%%%%%%%%%%%%%%%%%%%%%%%%%%%%%%%%%%%%%%%%%%%%%%%%%%%%%%%%%

\end{document}